\documentclass{article}
\pdfoutput=1 
     \PassOptionsToPackage{numbers, compress}{natbib}



     \usepackage[final]{neurips_2022}


\usepackage{mystyle}
\usepackage[utf8]{inputenc} 
\usepackage[T1]{fontenc}    
\usepackage{url}            
\usepackage{booktabs}       
\usepackage{amsfonts}       
\usepackage{nicefrac}       
\usepackage[protrusion=true, expansion=true, tracking=true, spacing=true, final, babel]{microtype}
\usepackage{url}
\usepackage{amsmath}
\usepackage{graphicx}
\usepackage{dsfont}
\usepackage{subfigure}
\usepackage{stmaryrd}
\usepackage{float}
\usepackage{wrapfig}
\usepackage{gensymb}
\usepackage{graphicx}
\usepackage{comment}
\usepackage{amsmath,amssymb} 
\usepackage{epsfig}
\usepackage{array}
\usepackage{enumitem}

\usepackage{multirow}
\usepackage{amsfonts}
\usepackage{caption}
\usepackage{wrapfig}
\usepackage{verbatim}
\usepackage{float}
\usepackage{booktabs}
\usepackage{mathtools}
\usepackage{lipsum}
\usepackage{subfigure}
\usepackage{tabulary,overpic}
\usepackage{thmtools,thm-restate,amsthm}
\usepackage{wrapfig,lipsum}
\usepackage{wrapfig,lipsum}

\newenvironment{hproof}{%
  \proof}{\endproof}
\newcommand{\ie}{i.e.}
\newcommand{\eg}{e.g.}

\definecolor{blue(ryb)}{rgb}{0.01, 0.28, 1.0}

\title{Conservative Dual Policy Optimization for Efficient Model-Based Reinforcement Learning}

%

\author{%
 Shenao Zhang \\
  Georgia Institute of Technology\\
  Atlanta, GA 30332 \\
  \texttt{shenao@gatech.edu} \\
}

\begin{document}

\maketitle

\begin{abstract}
Provably efficient Model-Based Reinforcement Learning (MBRL) based on optimism or posterior sampling (PSRL) is ensured to attain the global optimality asymptotically by introducing the complexity measure of the model. However, the complexity might grow exponentially for the simplest nonlinear models, where global convergence is impossible within finite iterations.  When the model suffers a large generalization error, which is quantitatively measured by the model complexity, the uncertainty can be large. The sampled model that current policy is greedily optimized upon will thus be unsettled, resulting in aggressive policy updates and over-exploration. In this work, we propose \textit{Conservative Dual Policy Optimization} (CDPO) that involves a \textit{Referential Update} and a \textit{Conservative Update}. The policy is first optimized under a reference model, which imitates the mechanism of PSRL while offering more stability. A conservative range of randomness is guaranteed by maximizing the expectation of model value. Without harmful sampling procedures, CDPO can still achieve the same regret as PSRL. More importantly, CDPO enjoys monotonic policy improvement and global optimality simultaneously. Empirical results also validate the exploration efficiency of CDPO.
\end{abstract}

\section{Introduction}
Model-Based Reinforcement Learning (MBRL) involves acquiring a model by interacting with the environment and learning to make the optimal decision using the model \citep{silver2017mastering,levine2016end}. MBRL is appealing due to its significantly reduced sample complexity compared to its model-free counterparts. However, greedy model exploitation that assumes the model is sufficiently accurate lacks guarantees for global optimality. The policies can be suboptimal and get stuck at local maxima even in simple tasks \citep{curi2020efficient}.

As such, several provably-efficient MBRL algorithms have been proposed. Based on the principle of \textit{optimism in the face of uncertainty} (OFU) \citep{strehl2005theoretical,russo2013eluder,curi2020efficient}, OFU-RL achieves the global optimality by ensuring that the optimistically biased value is close to the real value in the long run. Based on Thompson Sampling \citep{thompson1933likelihood}, Posterior Sampling RL (PSRL) \citep{strens2000bayesian,osband2013more,osband2014model} explores by greedily optimizing the policy in an MDP which is sampled from the posterior distribution over MDPs. Beyond finite MDPs, to obtain a general bound that permits sample efficiency in various cases, we need to introduce additional complexity measure. For example, \citep{russo2013eluder,osband2014model} provide an $\Tilde{O}(\sqrt{d_E T})$ regret for both OFU and PSRL with eluder dimension $d_E$ capturing how effectively the model generalizes. However, it is recently shown \citep{dong2021provable,li2021eluder} that the eluder dimension for even the simplest nonlinear models cannot be polynomially bounded. The effectiveness of the algorithms will thus be crippled. 

The underlying reasons for such ineffectiveness are the aggressive policy updates and the over-exploration issue. Specifically, when a nonlinear model is used to fit complex transition functions, its generalizability will be poor compared to simple linear problems. If a random model is selected from the large hypothesis, e.g., optimistically chosen or sampled from the posterior, it is ``unsettled". In other words, the selected model can change dramatically between successive iterations. Policy updates under this model will also be aggressive and thus cause value degradation. What's worse, large epistemic uncertainty results in an unrealistic model, which drives agents for uninformative exploration. An exploration step can only eliminate an exponentially small portion of the hypothesis.

In this work, we present \textit{Conservative Dual Policy Optimization} (CDPO), a simple yet provable MBRL algorithm. As the sampling process in PSRL harms policy updates due to the unsettled model during training, we propose the \textit{Referential Update} that greedily optimizes an intermediate policy under a \textit{reference model}. It mimics the sampling-then-optimization procedure in PSRL but offers more stability since we are free to set a steady reference model. We show that even without a sampling procedure, CDPO can match the expected regret of PSRL up to constant factors for any proper reference model, e.g., the least squares estimate where the confidence set is centered at. The \textit{Conservative Update} step then follows to encourage exploration within a reasonable range. Specifically, the objective of a reactive policy is to maximize the \textit{expectation} of model value, instead of a single model’s value. These two steps are performed in an iterative manner in CDPO.

Theoretically, we show the statistical equivalence between CDPO and PSRL with the same order of expected regret. Additionally, we give the iterative policy improvement bound of CDPO, which guarantees monotonic improvement under mild conditions. We also establish the sublinear regret of CDPO, which permits its global optimality equipped with any model function class that has a bounded complexity measure. To our knowledge, the proposed framework is the first that \textit{simultaneously} enjoys global optimality and iterative policy improvement. Experimental results verify the existence of the over-exploration issue and demonstrate the practical benefit of CDPO.

\section{Background}
\subsection{Model-Based Reinforcement Learning}
\label{bg_ml}
We consider the problem of learning to optimize an infinite-horizon $\gamma$-discounted Markov Decision Process (MDP) over repeated episodes of interaction. Denote the state space and action space as $\mathcal{S}$ and $\mathcal{A}$, respectively. When taking action $a\in\mathcal{A}$ at state $s\in\mathcal{S}$, the agent receives reward $r(s,a)$ and the environment transits into a new state according to probability $s'\sim f^*(\cdot| s,a)$. Here, $f^*$ is a dirac measure for deterministic dynamics and is a probability distribution for probabilistic dynamics.

In model-based RL, the true dynamical model $f^*$ is unknown and needs to be learned using the collected data through episodic (or iterative) interaction. The history data up to iteration $t$ then forms $\mathcal{H}_t=\{\left\{s_{h,i},a_{h,i},s_{h+1,i}\right\}_{h=0}^{H-1}\}_{i=1}^{t-1}$, where $H$ is the actual timesteps agents run in an episode. The posterior distribution of the dynamics model is estimated as $\phi(\cdot|\mathcal{H}_t)$. Alternatively, the frequentist model of the mean and uncertainty can also be estimated. Specifically, consider the model function class $\mathcal{F}=\{f: \mathcal{S} \times \mathcal{A} \rightarrow \mathcal{S}\}$ with size $|\mathcal{F}|$, which contains the real model $f^*\in\mathcal{F}$. The confidence set (or model hypothesis set) $\mathcal{F}_t \subset \mathcal{F}$ is introduced to represent the range of dynamics that is statistically plausible \citep{russo2013eluder,osband2014model,curi2020efficient}. To ensure that $f^*\in\mathcal{F}_t$ with high probability, one way is to construct the confidence set as $\mathcal{F}_t := \{f\in\mathcal{F}\,|\, \lVert f - \hat{f}_t^{LS} \rVert_{2,E_t} \leq \sqrt{\beta_t}\}$. Here, $\beta_t$ is an appropriately chosen confidence parameter (via concentration inequality), the cumulative empirical 2-norm is defined by $\lVert g \rVert_{2,E_t}^2 := \sum_{i=1}^{t-1} \lVert g(x_i) \rVert_2^2$. The least squares estimate is
\#\label{ls_def}
\hat{f}_t^{LS} := \argmin_{f\in\mathcal{F}}\sum_{(s, a, s')\in\mathcal{H}_t}\lVert f(s, a) - s' \rVert_2^2.
\# 
Denote the state and state-action value function associated with $\pi$ on model $f$ by $V^{f}_{\pi}: \mathcal{S}\rightarrow \mathbb{R}$ and $Q^{f}_{\pi}: \mathcal{S}\times\mathcal{A}\rightarrow \mathbb{R}$, respectively, which are defined as
\$
V^{f}_{\pi}(s) = \EE\biggl[\sum_{h=0}^{\infty} \gamma^h r(s_h, a_h) \,\bigg|\, s_0=s, \pi, f\biggr], \,Q^{f}_{\pi}(s, a) = \EE\biggl[\sum_{h=0}^{\infty} \gamma^h r(s_h, a_h) \,\bigg|\, s_0=s, a_0=a, \pi, f\biggr].
\$
The objective of RL is to learn a policy $\pi^*=\argmax_{\pi}J(\pi)$ that maximizes the expected return $J(\pi)$. Denote the initial state distribution as $\zeta$. Under policy $\pi$, the state visitation measure $\nu_\pi(s)$ over $\mathcal{S}$ and the state-action visitation measure $\rho_\pi(s, a)$ over $\mathcal{S}\times\mathcal{A}$ in the true MDP are defined as
\#\label{visitation}
\nu_\pi(s) = (1 - \gamma) \cdot\sum_{h=0}^\infty\gamma^h \cdot\mathbb{P}(s_h = s), \quad \rho_\pi(s, a) = (1 - \gamma) \cdot\sum_{h=0}^\infty\gamma^h \cdot\mathbb{P}(s_h = s, a_h = a),
\#
where $s_0\sim\zeta$, $a_h\sim\pi(\cdot|s_h)$ and $s_{h+1}\sim f^*(\cdot|s_h, a_h)$. The objective $J(\pi)$ is then
\#\label{obj}
J(\pi) = \EE_{s_0\sim\zeta}[V^{f^*}_{\pi}(s_0)] = \EE_{(s, a)\sim\rho_\pi}[r(s, a)]
\#

\subsection{Cumulative Regret and Asymptotic Optimality}
A common criterion to evaluate RL algorithms is the cumulative regret, defined as the cumulative performance discrepancy between policy $\pi_t$ at each iteration $t$ and the optimal policy $\pi^*$ over the run of the algorithm. The (cumulative) regret up to iteration $T$ is defined as:
\#\label{regret_def}
{\rm Regret}(T, \pi, f^*) := \sum_{t=1}^T \int_{s\in\mathcal{S}} \zeta(s) (V^{f^*}_{\pi^*}(s) - V^{f^{*}}_{\pi_t}(s)),
\#
In the Bayesian view, the model $f^*$, the learning policy $\pi$, and the regret are random variables that must be learned from the gathered data. The Bayesian expected regret is defined as:
\#\label{bayesregret_defn}
{\rm BayesRegret}(T, \pi, \phi) := \mathbb{E}\left[{\rm Regret}(T, \pi, f^*) \mid f^*\sim\phi \right].
\#

One way to prove the asymptotic optimality is to show that the (expected) regret is sublinear in $T$, so that $\pi_t$ converges to $\pi^*$ within sufficient iterations. To obtain the regret bound, the \textit{width of confidence set} $\omega_t(s,a)$ is introduced to represent the maximum deviation between any two members in $\mathcal{F}_t$:
\#\label{def_omega}
\omega_t(s,a) = \sup_{\underline{f},\overline{f}\sim\mathcal{F}_t}\lVert \overline{f}(s,a) - \underline{f}(s,a)\rVert_2.
\#

\section{Provable Model-Based Reinforcement Learning}
\label{analysis}
In this section, we analyze the central ideas and limitations of greedy algorithms as well as two popular theoretically justified frameworks: optimistic algorithms and posterior sampling algorithms. 

\vskip4pt
\noindent{\bf Greedy Model Exploitation.} Before introducing provable algorithms, we first analyze greedy model-based algorithms. In this framework, the agent takes actions assuming that the fitted model sufficiently accurately resembles the real MDP. Algorithms that lie in this category can be roughly divided into two groups: model-based planning and model-augmented policy optimization. For instance, Dyna agents \citep{sutton1990integrated,janner2019trust,hafner2019dream} optimize policies using model-free learners with model-generated data. The model can also be exploited in first-order gradient estimators \citep{heess2015learning,deisenroth2011pilco,clavera2020model} or value expansion \citep{feinberg2018model,buckman2018sample}. On the other hand, model-based planning, or model-predictive control (MPC) \citep{morari1999model,nagabandi2018neural}, directly generates optimal action sequences under the model in a receding horizon fashion. 

However, greedily exploiting the model without \textit{deep exploration} \citep{osband2019deep} will lead to suboptimal performance. The resulting policy can suffer from premature convergence, leaving the potentially high-reward region unexplored. Since the transition data is generated by the agent taking actions in the real MDP, the dual effect \citep{bar1974dual,klenske2016dual} that current action influences \textit{both} the next state and the model uncertainty is not considered by greedy model-based algorithms.

\vskip4pt
\noindent{\bf Optimism in the Face of Uncertainty.} A common provable exploration mechanism is to adopt the principle of \textit{optimism in the face of uncertainty} (OFU) \citep{strehl2005theoretical,russo2013eluder,curi2020efficient}. With OFU, the agent assigns to its policy an optimistically biased estimate of virtual value by \textit{jointly} optimizing over the policies and models inside the confidence set $\mathcal{F}_t$. At iteration $t$, the OFU-RL policy $\pi_t$ is defined as:
\#\label{ofu}
\pi_t = \argmax_\pi \max_{f_t\in\mathcal{F}_t} V_\pi^{f_t}.
\#
Most asymptotic analyses of optimistic RL algorithms can be abstracted as showing two properties: the virtual value $V^f_{\pi}$ is sufficiently high, and it is close to the real value $V^{f^*}_{\pi}$ in the long run. However, in complex environments where the generalizability of nonlinear models is limited, large epistemic uncertainty will result in an unrealistically large optimistic return that drives agents for uninformative exploration. What's worse, such suboptimal exploration steps eliminate only a small portion of the model hypothesis \citep{dong2021provable}, leading to a slow converging process and suboptimal practical performance.

\vskip4pt
\noindent{\bf Posterior Sampling Reinforcement Learning.} An alternative exploration mechanism is based on \textit{Thompson Sampling} (TS) \citep{thompson1933likelihood,russo2017tutorial}, which involves selecting the maximizing action from a statistically plausibly set of action values. These values can be associated with the MDP sampled from its posterior distribution, thus giving its name \textit{posterior sampling for reinforcement learning} (PSRL) \citep{strens2000bayesian,osband2013more,osband2014model}. The algorithm begins with a prior distribution of $f^*$. At each iteration $t$, a model $f_t$ is sampled from the posterior $\phi(\cdot|\mathcal{H}_t)$, and $\pi_t$ is updated to be optimal under $f_t$:
\#\label{psrl}
f_t \sim \phi(\cdot|\mathcal{H}_t), \  \pi_t = \argmax_\pi V_\pi^{f_t}.
\#
The insight is to keep away from actions that are unlikely to be optimal in the real MDP. Exploration is guaranteed by the randomness in the sampling procedure. Unfortunately, executing actions that are optimally associated with a single sampled model can cause similar over-exploration issues \citep{russo2017tutorial,russo2018satisficing}. Specifically, an imperfect model sampled from the large hypothesis can cause aggressive policy updates and value degradation between successive iterations. The suboptimality degree of the resulting policies depends on the epistemic model uncertainty. Besides, executing $\pi_t$ is not intended to offer performance improvement for follow-up policy learning, but only to narrow down the model uncertainty. However, this elimination procedure will be slow when the model suffers a large generalization error, which is quantitatively formulated in the model complexity measure below.

\vskip4pt
\noindent{\bf Complexity Measure and Generalization Bounds.} In RL, we seek to have the sample complexity for finding a near-optimal policy or estimating an accurate value function. When given access to a generative model (i.e., an abstract sampling model) in finite MDPs, it is known that the (minimax) number of transitions the agent needs to observe can be sublinear in the model size, i.e. smaller than $O(|\mathcal{S}|^2|\mathcal{A}|)$. Beyond finite MDPs where the number of states is large (or countably or uncountably infinite), we are interested in the learnability or generalization of RL. Unfortunately, it is impossible for agnostic reinforcement learning that finds the best hypothesis in some given policy, value, or model hypothesis class: the number of needed samples depends exponentially on the problem horizon \citep{kearns1999approximate}. Despite of the structural assumptions, e.g. linear MDPs \citep{yang2020reinforcement,jin2020provably,yang2019sample} or low-rank MDPs \citep{jiang2017contextual,modi2021model}, we focus on the generalization bounds that can cover various cases. This can be done with additional complexity measure, e.g. eluder dimension \citep{russo2013eluder}, witness rank \citep{sun2019model}, or bilinear rank \citep{du2021bilinear}.

By introducing the eluder dimension $d_E$ \citep{russo2013eluder}, previous work \citep{osband2014model,osband2017posterior} established regret $\Tilde{O}(\sqrt{d_E T})$ for both OFU-RL and PSRL. Intuitively, the eluder dimension captures how effectively the model learned from observed data can extrapolate to future data, and permits sample efficiency in various (linear) cases. Nevertheless, it is shown in \citep{dong2021provable,li2021eluder} that even the simplest nonlinear models do not have a polynomially-bounded eluder dimension. The following result is from Thm. 5.2 in Dong et al. \citep{dong2021provable} and similar results are also established in \citep{li2021eluder}.
\begin{theorem}[Eluder Dimension of Nonlinear Models \citep{dong2021provable}]
\label{dong_theorem}
The eluder dimension $dim_E(\mathcal{F},\varepsilon)$ (c.f. Definition \ref{elu_def}) of one-layer ReLU neural networks is at least $\Omega(\varepsilon^{-(d-1)})$, where $d$ is the state-action dimension, i.e. $(s, a)\in \mathbb{R}^d$. With more layers, the requirement of ReLU activation can be relaxed.
\end{theorem}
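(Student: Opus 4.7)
The plan is to establish the lower bound by an explicit packing construction on the unit sphere $\mathbb{S}^{d-1}$. Recall that $dim_E(\mathcal{F},\varepsilon)$ is the length of the longest sequence $x_1,\ldots,x_n$ in which each $x_i$ is $\varepsilon$-independent of its predecessors with respect to $\mathcal{F}$: there exist $\underline{f},\overline{f}\in\mathcal{F}$ agreeing to within $\varepsilon$ in empirical $2$-norm on $\{x_1,\ldots,x_{i-1}\}$ yet differing by more than $\varepsilon$ at $x_i$. I would exhibit such a sequence of length $N=\Omega(\varepsilon^{-(d-1)})$ by producing that many well-separated directions together with one-layer ReLU units, each concentrating sharply on its own direction.

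The first step is combinatorial: a standard volume argument on $\mathbb{S}^{d-1}$ supplies a packing $\{w_1,\ldots,w_N\}\subset\mathbb{S}^{d-1}$ of cardinality $N=\Theta(\varepsilon^{-(d-1)})$ whose pairwise angular separation exceeds some $\theta\asymp\varepsilon$. The second step is analytic: for each $w_i$ I construct a single ReLU unit $f_i(x)=c\,\mathrm{ReLU}(w_i^{\top}x-b)$, choosing the bias $b$ just below $\cos\theta$ and the amplitude $c$ so that $f_i(w_i)>\varepsilon$. Because $\mathrm{ReLU}$ is exactly one-sided, $f_i$ vanishes outside the spherical cap around $w_i$ of angular radius strictly less than $\theta$, a cap which by design contains no other packing point; hence $f_i(w_j)=0$ for every $j\neq i$ while $f_i(w_i)>\varepsilon$.

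The third step is to read off independence. Taking the anchor queries to be $x_i=w_i$ and the competitor pair to be $\underline{f}=0$ and $\overline{f}=f_i$, both members of $\mathcal{F}$, the empirical $2$-norm $\sqrt{\sum_{j<i}(f_i(w_j))^2}$ vanishes by the localization guarantee while $|f_i(w_i)|>\varepsilon$. Thus each $w_i$ is $\varepsilon$-independent of $\{w_1,\ldots,w_{i-1}\}$, so this sequence itself witnesses $dim_E(\mathcal{F},\varepsilon)\geq N=\Omega(\varepsilon^{-(d-1)})$. The extension to deeper networks with relaxed activations is then immediate: additional layers supply the degrees of freedom needed to fabricate a localized bump without relying on the exactly one-sided cutoff of ReLU, so the single-layer argument no longer rests on the piecewise-linear activation.

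The main obstacle will be calibrating the three parameters (cap radius $\theta$, bias $b$, amplitude $c$) simultaneously: the cap must be wide enough for $f_i(w_i)$ to exceed $\varepsilon$ with a bounded amplitude (so that $f_i$ remains a member of the canonical one-layer ReLU class), yet narrow enough that it misses every other packing point. Both demands tighten as $d$ grows, and the resulting balance fixes the packing resolution at $\Theta(\varepsilon)$ and yields the exponent $d-1$; the geometry of $\mathrm{ReLU}$ on $\mathbb{S}^{d-1}$ is exactly sharp enough to make this balance feasible, which is the technical content of the construction of Dong et al.
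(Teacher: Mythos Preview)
The paper does not prove this statement at all: Theorem~\ref{dong_theorem} is quoted verbatim as ``Thm.~5.2 in Dong et al.~\citep{dong2021provable}'', and Appendix~\ref{eluder_defn} explicitly refers the reader to Section~5 of \citep{dong2021provable} or Section~4 of \citep{li2021eluder} ``for details and more explanations''. So there is no in-paper proof to compare your proposal against.

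That said, your sketch is indeed in the spirit of the construction in the cited references: a sphere packing of $\Theta(\varepsilon^{-(d-1)})$ directions together with ReLU units whose positive supports are disjoint caps, so that each unit witnesses $\varepsilon$-independence of its anchor point from all the others. One quantitative point to watch in your calibration paragraph: with angular separation $\theta\asymp\varepsilon$ and bias $b\approx\cos\theta$, the value at the center is $c(1-\cos\theta)\asymp c\,\theta^2\asymp c\,\varepsilon^2$, so forcing $f_i(w_i)>\varepsilon$ requires $c\asymp\varepsilon^{-1}$, which is unbounded as $\varepsilon\downarrow 0$. Whether this keeps $f_i$ inside ``the canonical one-layer ReLU class'' depends on exactly how that class is normalized in \citep{dong2021provable}; you should check their parameter constraints rather than assume bounded amplitude suffices. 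Apart from this calibration detail, your plan matches the cited argument.
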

As a result, additional complexity is hidden in the eluder dimension, e.g. when we choose $\varepsilon = T^{-1}$, regret $\Tilde{O}(\sqrt{d_E T})$ contains $d_E = \Omega(T^{d-1})$ and is no longer sublinear in $T$. In this case, previous provable exploration mechanisms will lose the desired property of global optimality and sample efficiency, which is the underlying reason for the over-exploration issue.
\section{Conservative Dual Policy Optimization}
When using nonlinear models, e.g. neural networks, the over-exploration issue causes unfavorable performance in practice, in terms of slow convergence and suboptimal asymptotic values. To tackle this challenge, the key is to abandon the sampling process and have guarantees \textit{during} training.

In this regard, we propose \textit{Conservative Dual Policy Optimization} (CDPO) that is simple yet provably efficient. By optimizing the policy following two successive update procedures iteratively, CDPO simultaneously enjoys monotonic policy value improvement and global optimality properties. 

\subsection{CDPO Framework}
To begin with, consider the problem of maximizing the expected value, $\pi_{t} = \argmax_{\pi}\EE[V_\pi^{f^*} \,|\, \mathcal{H}_t]$, where $\EE[V^{f^*} \,|\, \mathcal{H}_t]$ denotes the expected values over the posterior. Obviously, we have the expected value improvement guarantee $\EE[V_{\pi_t}^{f^*} \,|\, \mathcal{H}_t] \geq \EE[V_{\pi_{t-1}}^{f^*} \,|\, \mathcal{H}_t]$. We can also perform expected value maximization in a trust-region to guarantee iterative improvement under any $f^*$. However, such updates will lose the desired global convergence guarantee and may get stuck at local maxima even with linear models. For this reason, we propose a dual procedure of policy optimization.

\vskip4pt
\noindent{\bf Referential Update.} The first update step returns an intermediate policy, denoted as $q_t$. This step is a greedy one in the sense that $q_t$ is optimal with respect to the value of a \textit{single} model $\Tilde{f}_t$, which we call a \textit{reference model}. Selecting a reference model and optimizing a policy w.r.t. it imitates the sampling-optimization procedure of PSRL. We will show in Section \ref{sec::equ} that if we pose the constraint $\Tilde{f}_t\in\mathcal{F}_t$, then CDPO achieves the same expected regret as PSRL, which implies global optimality.

More importantly, policy optimization under $\Tilde{f}_t$ is more stable and can avoid the over-exploration issue in PSRL since we are free to set it as a steady reference between successive iterations. For example, we fix the reference model $\Tilde{f}_t$ as the least squares estimate $\hat{f}_t^{LS}$ defined in \eqref{ls_def}, instead of a random model \textit{sampled} from the large hypothesis that causes aggressive policy update. This gives us:
\#\label{greedy_update}
\hspace{-4.0cm}\text{Referential Update (with LS Reference):}\qquad q_t = \argmax_q V_q^{\hat{f}_t^{LS}}.
\#

\vskip4pt
\noindent{\bf Constrained Conservative Update.} The conservative update then follows as the second stage of CDPO, which takes input $q_t$ and returns the reactive policy $\pi_{t+1}$:
\#\label{conserv_update}
\text{Conservative Update:}\,\,\pi_{t} = \argmax_{\pi} \EE\bigl[V_\pi^{f_t} \,\big|\, \mathcal{H}_t\bigr], \,\text{s.t.}  \, \EE_{s\sim\nu_{q_t}}\Bigl[D_{\text{TV}}\bigl(\pi_{t}(\cdot|s), q_t(\cdot|s)\bigr)\Bigr] \leq \eta,
\#
where $D_{\text{TV}}(\cdot, \cdot)$ stands for the total variation distance and $\eta$ is the hyperparameter that characterizes the trust-region constraint and controls the degree of exploration.

Compared with OFU-RL and PSRL, the above exploration and policy updates are conservative since the policy maximizes the \textit{expectation} of the model value, instead of a \textit{single} model's value (i.e. the optimistic model in OFU-RL and the sampled model in PSRL). The conservative update \eqref{conserv_update} avoids the pitfalls when the optimistic model or the posterior sampled model suffers large bias, which leads to aggressive policy updates and over-exploration during training. Notably, the term \textit{conservative} in our work differs from previous use, e.g. Conservative Policy Iteration \citep{kakade2002approximately,schulman2015trust}. While the latter refers to policy updates with constraints, ours is to emphasize the conservative range of randomness and the reduction of unnecessary over-exploration by shelving the sampling process.

In our analysis, we follow previous work \citep{osband2014model,sun2018dual,curi2020efficient,luo2018algorithmic} and assume access to a policy optimization oracle. In practice, the problem of finding an optimal policy under a given model can be approximately solved by model-based solvers listed below. More fine-grained analysis can be obtained by applying off-the-shelf results established for policy gradient or MPC for specific policy or model function classes. This, however, is beyond the scope of this paper.

\subsection{Practical Algorithm}
\small\begin{wrapfigure}{R}{0.52\textwidth}
\begin{minipage}{0.52\textwidth}
\vspace{-1cm}
\begin{algorithm}[H]
\caption{Practical CDPO Algorithm}
\textbf{Input:} Prior $\phi$, model-based policy optimization solver $\texttt{MBPO}(\pi, f, \mathcal{J})$.
\begin{algorithmic}[1]
\label{alg:cdpo}
\FOR{iteration $t=1,...,T$}
\STATE $q_t\leftarrow\texttt{MBPO}(\cdot, \hat{f}_{t}^{LS}, \eqref{greedy_update})$
\STATE Sample $N$ models $\{f_{t,n}\}_{n=1}^N$
\STATE $\pi_t\leftarrow\texttt{MBPO}(q_t, \{f_{t,n}\}_{n=1}^N, \eqref{conserv_update})$
\STATE Execute $\pi_{t}$ in the real MDP
\STATE Update $ \mathcal{H}_{t+1} = \mathcal{H}_t \cup \left\{s_{h,t},a_{h,t},s_{h+1,t}\right\}_{h}$\hspace{-0.1cm}
\STATE Update $\hat{f}_{t+1}^{LS}$ and $\phi$
\ENDFOR
\RETURN policy $\pi_T$
\end{algorithmic}
\end{algorithm}
\vspace{-0.6cm}
\end{minipage}
\end{wrapfigure}\normalsize

The pseudocode of CDPO is in Alg. \ref{alg:cdpo}. The model-based solver $\texttt{MBPO}(\pi, f, \mathcal{J})$ outputs the policy ($q_t$ or $\pi_t$) that optimizes the objective $\mathcal{J}$ with access to model $f$. Several different types of solvers can be leveraged, e.g., model-augmented model-free policy optimization such as Dyna \citep{sutton1990integrated}, model-based reparameterization gradient \citep{heess2015learning,clavera2020model}, or model-predictive control \citep{wang2019exploring}. Details of different optimization choices can be found in Appendix \ref{instan}. In experiments, we use Dyna and MPC solvers.

With Pinsker's inequality, the total variation constraint in \eqref{conserv_update} is replaced by the KL divergence \citep{schulman2015trust,abdolmaleki2018maximum} in experiments. We follow previous work \citep{lu2017ensemble} to use neural network ensembles  \citep{curi2020efficient,kidambi2021optimism} for model estimation and use calibrations \citep{kuleshov2018accurate,curi2020efficient} for accurate uncertainty measure.

\section{Analysis}
\label{ana}
In this section, we first show the statistical equivalence between CDPO and PSRL in terms of the same {\rm BayesRegret} bound. Then we give the iterative policy value bound with monotonic improvement. Finally, we prove the global convergence of CDPO. The missing proofs can be found in the Appendix.

\subsection{Statistical Equivalence between CDPO and PSRL}
\label{sec::equ}
We begin our analysis by highlighting the connection between CDPO and PSRL with the following theorem, from which we also show the role of the dual update procedure and the reference model.
\begin{theorem}[CDPO Matches PSRL in {\rm BayesRegret}]
\label{thm_connection}
Let $\pi^{\text{PSRL}}$ be the policy of any posterior sampling algorithm for reinforcement learning optimized by \eqref{psrl}. If the {\rm BayesRegret} bound of $\pi^{\text{PSRL}}$ satisfies that for any $T>0$, ${\rm BayesRegret}(T, \pi^{\text{PSRL}}, \phi) \leq \mathcal{D}$, then for all $T>0$, we have for the CDPO policy $\pi^{\text{CDPO}}$ that ${\rm BayesRegret}(T, \pi^{\text{CDPO}}, \phi) \leq 3\mathcal{D}$.
\end{theorem}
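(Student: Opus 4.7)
My plan is to bound CDPO's Bayes regret by a three-piece decomposition in which one piece is non-positive and the other two are ``PSRL-style'' and each bounded by $\mathcal{D}$. At each iteration $t$, introduce a \emph{virtual} posterior sample $\tilde{f}_t \sim \phi(\cdot\,|\,\mathcal{H}_t)$ drawn against CDPO's history, together with its sample-greedy policy $\pi_t^{\mathrm{PSRL}} := \arg\max_\pi V_\pi^{\tilde{f}_t}$ (used only in the analysis, never executed). Then
\[ V_{\pi^*}^{f^*} - V_{\pi_t^{\mathrm{CDPO}}}^{f^*} \;=\; \underbrace{\bigl(V_{\pi^*}^{f^*} - V_{\pi_t^{\mathrm{PSRL}}}^{f^*}\bigr)}_{\mathrm{(I)}} \;+\; \underbrace{\bigl(V_{\pi_t^{\mathrm{PSRL}}}^{f^*} - V_{q_t}^{f^*}\bigr)}_{\mathrm{(II)}} \;+\; \underbrace{\bigl(V_{q_t}^{f^*} - V_{\pi_t^{\mathrm{CDPO}}}^{f^*}\bigr)}_{\mathrm{(III)}}. \]
The sum of $\mathrm{(I)}$ over $t$ in expectation is exactly the virtual PSRL Bayes regret, which by hypothesis is at most $\mathcal{D}$.

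\textbf{Conservative update kills (III).} Both $q_t$ and $\pi_t^{\mathrm{CDPO}}$ are $\mathcal{H}_t$-measurable, so the posterior-sampling exchangeability $\mathbb{E}[V_\pi^{f^*}\mid\mathcal{H}_t] = \mathbb{E}[V_\pi^{\tilde{f}_t}\mid\mathcal{H}_t]$ (valid for any $\mathcal{H}_t$-measurable $\pi$) converts $\mathrm{(III)}$ into $\mathbb{E}[V_{q_t}^{\tilde{f}_t} - V_{\pi_t^{\mathrm{CDPO}}}^{\tilde{f}_t}\mid\mathcal{H}_t]$. Since $q_t$ has zero total variation from itself (hence is trivially feasible) and $\pi_t^{\mathrm{CDPO}}$ is by definition the maximizer of $\mathbb{E}[V_\pi^{f_t}\mid\mathcal{H}_t]$ over the trust region, this conditional expectation is $\le 0$, and summing over $t$ leaves a non-positive contribution.

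\textbf{Referential update reduces (II) to two PSRL-style gaps.} Insert $\hat{f}_t^{LS}$ and use the greedy optimality $V_{q_t}^{\hat{f}_t^{LS}} \ge V_{\pi_t^{\mathrm{PSRL}}}^{\hat{f}_t^{LS}}$ to discard a non-positive middle piece:
\[ \mathrm{(II)} \;\le\; \bigl(V_{\pi_t^{\mathrm{PSRL}}}^{f^*} - V_{\pi_t^{\mathrm{PSRL}}}^{\hat{f}_t^{LS}}\bigr) \;+\; \bigl(V_{q_t}^{\hat{f}_t^{LS}} - V_{q_t}^{f^*}\bigr). \]
Each surviving summand is a fixed-policy value gap between two elements of the confidence set $\mathcal{F}_t$ (the least-squares estimate is its center, and $f^*\in\mathcal{F}_t$ on the usual high-probability event). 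This is exactly the object controlled by the PSRL regret proof: by the simulation lemma such a gap is dominated by a cumulative confidence-set width, whose sum over $t$ is bounded by the same eluder-dimension pigeonhole yielding $\mathcal{D}$. Instantiating the argument once with $\pi = \pi_t^{\mathrm{PSRL}}$ and once with $\pi = q_t$, each summand contributes at most $\mathcal{D}$. Adding the three contributions gives $\mathcal{D} + 2\mathcal{D} + 0 = 3\mathcal{D}$.

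\textbf{Main obstacle.} The subtle point is the last step: the hypothesis $\mathcal{D}$ bounds PSRL's regret under PSRL's own data collection, whereas the two fixed-policy gaps in $\mathrm{(II)}$ are evaluated along the \emph{CDPO} history and at policies $\pi_t^{\mathrm{PSRL}}, q_t$ that are not the actually executed policy $\pi_t^{\mathrm{CDPO}}$. I would therefore need to verify that the cumulative-width inequality underlying $\mathcal{D}$ is effectively data-independent, depending only on a function-class complexity measure (e.g.\ eluder dimension) and the confidence parameter $\beta_T$. For the standard PSRL analyses this is indeed the case, so the $\mathcal{D}$ bound transfers to CDPO's trajectories modulo constants absorbed into the factor three. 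The remainder---taking expectations, conditioning on the high-probability event $\{f^*,\hat{f}_t^{LS}\in\mathcal{F}_t\ \forall t\}$, and summing---is routine.
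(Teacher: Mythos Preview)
Your proposal is correct and follows essentially the same route as the paper: both arguments use the posterior-sampling exchangeability to kill one term, the referential-update greediness $V_{q_t}^{\tilde f_t}\ge V_{\pi_{f_t}}^{\tilde f_t}$ to drop a second, and the conservative-update optimality $\mathbb{E}[V_{\pi_t}^{f_t}\mid\mathcal{H}_t]\ge\mathbb{E}[V_{q_t}^{f_t}\mid\mathcal{H}_t]$ to drop a third, leaving exactly three simulation-lemma width terms that each inherit the PSRL bound $\mathcal{D}$. The only cosmetic difference is the order of insertion: the paper decomposes around the sampled-model value $V^{f_t}$ and inserts $\tilde f_t$ in the middle, whereas you decompose around the true-model value $V^{f^*}$, insert $\hat f_t^{LS}$ for (II), and use the PS lemma a second time to make (III) non-positive directly; the net width-term count is the same. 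The ``main obstacle'' you flag---that $\mathcal{D}$ is stated for PSRL's own trajectories but must be applied along CDPO's history and under non-executed policies---is exactly the point the paper glosses over in the same way, relying implicitly on the fact that the eluder-dimension width bound (their Lemma~\ref{sum_of_width}) is uniform over data sequences.
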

\begin{hproof}
We first sketch the general strategy in the PSRL analysis. Recall the definition of the Bayesian expected regret ${\rm BayesRegret}(T, \pi, \phi) := \EE[\sum_{t=1}^T \mathfrak{R}_t]$, where
$\mathfrak{R}_t = V^{f^*}_{\pi^*} - V^{f^{*}}_{\pi_t}$. PSRL breaks down $\mathfrak{R}_t$ by adding and subtracting $V_{\pi_{f_t}}^{f_t}$, the value of the \textit{imagined} optimal policy $\pi_{f_t}$ under a sampled model $f_t$, i.e. $\pi_{f_t} = \argmax_{\pi} V_{\pi}^{f_t}$.
\#
\text{PSRL:}\qquad\mathfrak{R}_t = V^{f^*}_{\pi^*} - V^{f^{*}}_{\pi_t} = V^{f^*}_{\pi^*} - V^{f^{*}}_{\pi_{f_t}} = V^{f^*}_{\pi^*} - V^{f_t}_{\pi_{f_t}} + V^{f_t}_{\pi_{f_t}} - V^{f^{*}}_{\pi_{f_t}},
\#
where the second equality follows from the definition of the PSRL policy. Following the law of total expectation and the Posterior Sampling Lemma (e.g. Lemma 1 in \citep{osband2013more}), we have $\EE[V^{f^*}_{\pi^*} - V^{f_t}_{\pi_{f_t}}] = 0$ by noting that $f^*$ and $f_t$ are identically distributed conditioned upon $\mathcal{H}_t$. Then we obtain
\#
{\rm BayesRegret}(T, \pi^{\text{PSRL}}, \phi) &= \sum_{t=1}^T\EE[V^{f_t}_{\pi_{f_t}} - V^{f^{*}}_{\pi_{f_t}}]\leq \gamma\sum_{t=1}^T\EE\Bigl[\EE_{\rho}\bigl[L\|f_t(s_h, a_h) - f^*(s_h, a_h)\|_2\bigr]\Bigr]\notag\\
&\leq \gamma\frac{L}{1 - 4\delta}\sum_{t=1}^T\EE[\omega_t] + 4\gamma\delta T \leq \mathcal{D},
\#
where the first inequality follows from the simulation lemma under the $L$-Lipschitz value assumption \citep{osband2014model}. The second inequality follows from the definition of $\omega_t$ in \eqref{def_omega} and the construction of confidence set such that $\mathbb{P}(f^*\in\bigcap\mathcal{F}_t)\geq 1 - 2\delta$ and $\mathbb{P}(f_t\in\bigcap\mathcal{F}_t, f^*\in\bigcap\mathcal{F}_t)\geq 1 - 4\delta$ via a union bound. As more data is collected, the model uncertainty is reduced and the sum of confidence set width $\omega_t$ will be sublinear in $T$ (c.f. Lemma \ref{set_high_prob} and \ref{sum_of_width}), indicating sublinear regret.

When it comes to CDPO, we decompose the regret as 
\#
\text{CDPO:}\qquad\mathfrak{R}_t = V^{f^*}_{\pi^*} - V^{f^{*}}_{\pi_t} = V^{f^*}_{\pi^*} - V^{f_t}_{\pi_{f_t}} + V^{f_t}_{\pi_{f_t}} - V^{f_t}_{\pi_t} +  V^{f_t}_{\pi_t} - V^{f^{*}}_{\pi_t},
\#
where the CDPO policy $\pi_t$ is defined in \eqref{conserv_update}. Since $\EE[V^{f^*}_{\pi^*} - V^{f_t}_{\pi_{f_t}}] = 0$, we have
\#\label{eq::cdpo_bayes_con}
&{\rm BayesRegret}(T, \pi^{\text{CDPO}}, \phi) = \sum_{t=1}^T\EE[V^{f_t}_{\pi_{f_t}} - V^{f_t}_{\pi_t} +  V^{f_t}_{\pi_t} - V^{f^{*}}_{\pi_t}]\notag\\
&\quad\leq \sum_{t=1}^T\EE[V^{f_t}_{\pi_{f_t}} - V^{\Tilde{f}_t}_{\pi_{f_t}} +  V^{\Tilde{f}_t}_{q_t} - V^{f_t}_{q_t} +  V^{f_t}_{\pi_t} - V^{f^{*}}_{\pi_t}] \leq \frac{L}{1 - 4\delta}\sum_{t=1}^T3\EE[\omega_t] + 8\gamma\delta T \leq 3\mathcal{D},
\#
where the first inequality follows from the greediness of $q_t$ and $\pi_t$ in the dual update steps, i.e.,  $V^{\Tilde{f}_t}_{\pi_{f_t}} \leq V^{\Tilde{f}_t}_{q_t}$ for any $\pi_{f_t}$ as well as $\EE[V^{f_t}_{\pi_t}] \geq \EE[V^{f_t}_{q_t}]$. The $8\delta T$ term is introduced since $\Tilde{f}_t\in\mathcal{F}_t$ and $\mathbb{P}(f_t\in\bigcap\mathcal{F}_t, \Tilde{f}_t\in\bigcap\mathcal{F}_t)\geq 1 - 2\delta$.
\end{hproof}
Theorem \ref{thm_connection} indicates that although CDPO performs conservative updates and abandons the sampling process, it matches the statistical efficiency of PSRL up to constant factors.

The importance of the reference model and the dual procedure is also reflected in the proof. The referential update builds the bridge between $V^{f_t}_{\pi_{f_t}}$ and $V^{f_t}_{\pi_t}$. Policy optimization under the reference model mimics the sampling-then-optimization procedure of PSRL while offering more stability when the reference is steady, e.g., the least squares estimate we use. We formalize this idea below.

\subsection{CDPO Policy Iterative Improvement}
\label{mono_sec}
One motivation for the conservative update is that it maximizes (thus improves) the expected value over the posterior. In this section, we are interested in the policy value improvement under any unknown $f^*$. Namely, we seek to have the iterative improvement bound $J(\pi_{t}) - J(\pi_{t-1})$, where the true objective $J$ is defined in \eqref{obj}.

We impose the following regularity conditions on the underlying MDP transition and the state-action visitation.
\begin{assumption}[Regularity Condition on MDP Transition]
\label{assump::reg}
Assume that the MDP transition function $f^*: \mathcal{S}\times\mathcal{A}\rightarrow\mathcal{S}$ is with additive $\sigma$-sub-Gaussian noise and bounded norm, i.e., $\|s\|_2\leq C$. 
\end{assumption}
\begin{assumption}[Regularity Condition on State-Action Visitation]
\label{reg_assumption}
We assume that there exists $\kappa > 0$ such that for any policy $\pi_t$, $t\in[1, T]$,
\#
\biggl\{\EE_{\rho_{\pi_t}}\biggl[\Bigl(\frac{d\rho_{q_{t+1}}}{d\rho_{\pi_t}}(s, a)\Bigr)^2\biggr]\biggr\}^{1/2}\leq \kappa,
\#
where $d\rho_{q_{t+1}} / d\rho_{\pi_t}$ is the Radon-Nikodym derivative of $\rho_{q_{t+1}}$ with respect to $\rho_{\pi_t}$.
\end{assumption}

\begin{theorem}[Policy Iterative Improvement]
\label{thm::pii}
Suppose we have $\|\Tilde{f}(\cdot, \cdot)\|\leq C$ for $\Tilde{f}\in\mathcal{F}$ where the model class $\mathcal{F}$ is finite. Define $\iota := \max_{s, a}|A^{f^*}_{\pi}(s, a)|$, where $A^{f^*}_{\pi}$ is the advantage function defined as $A^{f^*}_{\pi}(s, a) := Q^{f^*}_{\pi}(s, a) - V^{f^*}_{\pi}(s)$. With probability at least $1 - \delta$, the policy improvement between successive iterations is bounded by
\#\label{eq::pii}
J(\pi_t) - J(\pi_{t - 1}) \geq\Delta(t) - (1+\kappa)\cdot\frac{22\gamma C^2\ln(|\mathcal{F}|/\delta)}{(1 - \gamma) H} - \frac{2\eta\iota}{1 - \gamma},
\#
where $\Delta(t) := \EE_{s\sim\zeta}\bigl[V_{q_t}^{\Tilde{f}_t}(s) - V_{q_{t-1}}^{\Tilde{f}_t}(s)\bigr]\geq 0$ due to the greediness of $q_t$.
\end{theorem}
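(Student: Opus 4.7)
The plan is to decompose $J(\pi_t)-J(\pi_{t-1})$ by inserting two intermediate quantities at each side, the real-MDP performance of the greedy reference policy $J(q_\bullet)$ and its reference-model value $V_{q_\bullet}^{\Tilde{f}_t}(\zeta)$, so that the only nonnegative piece left is precisely $\Delta(t)$ while the remainder splits into two trust-region residuals and two model-mismatch residuals. Noting $\Delta(t)\ge 0$ follows immediately since $q_t=\argmax_q V_q^{\Tilde{f}_t}$ is optimal under $\Tilde{f}_t$ and so dominates $q_{t-1}$ there.

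Concretely, I would write
\begin{align*}
J(\pi_t)-J(\pi_{t-1}) &= \underbrace{[J(\pi_t)-J(q_t)]}_{(\mathrm{a})} + \underbrace{[V_{q_t}^{f^*}(\zeta)-V_{q_t}^{\Tilde{f}_t}(\zeta)]}_{(\mathrm{b})} + \Delta(t) \\
&\quad + \underbrace{[V_{q_{t-1}}^{\Tilde{f}_t}(\zeta)-V_{q_{t-1}}^{f^*}(\zeta)]}_{(\mathrm{c})} + \underbrace{[J(q_{t-1})-J(\pi_{t-1})]}_{(\mathrm{d})}.
\end{align*}
For the trust-region residuals (a) and (d), I would invoke the performance difference lemma, $J(\pi)-J(\pi')=\frac{1}{1-\gamma}\EE_{(s,a)\sim\rho_\pi}[A^{f^*}_{\pi'}(s,a)]$, exploit $\EE_{a\sim\pi'}[A^{f^*}_{\pi'}(s,a)]=0$, and combine with $|A^{f^*}_{\pi'}|\le\iota$ to dominate the integrand by $2\iota\,D_{\mathrm{TV}}(\pi(\cdot|s),\pi'(\cdot|s))$. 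The Conservative Update constraint $\EE_{s\sim\nu_{q_t}}[D_{\mathrm{TV}}(\pi_t,q_t)]\le\eta$ (and its analogue at iteration $t-1$) then controls each of (a), (d) by $\frac{\eta\iota}{1-\gamma}$, contributing the $\frac{2\eta\iota}{1-\gamma}$ penalty.

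The core work lies in bounding (b) and (c). I would apply a simulation-lemma style inequality $|V_q^{f^*}(\zeta)-V_q^{\Tilde{f}_t}(\zeta)|\lesssim\frac{\gamma}{1-\gamma}\EE_{(s,a)\sim\rho_q}[\|f^*(s,a)-\Tilde{f}_t(s,a)\|_2]$ for $q\in\{q_t,q_{t-1}\}$, and then couple it with the standard least-squares/MLE generalisation bound for a finite class under sub-Gaussian noise: with the envelope $\|\cdot\|_2\le C$ from Assumption \ref{assump::reg}, one obtains $\EE_{(s,a)\sim\rho_{\pi_{t-1}}}[\|\Tilde{f}_t(s,a)-f^*(s,a)\|_2^2]\lesssim C^2\ln(|\mathcal{F}|/\delta)/H$ on the high-probability event that $f^*$ lies in the confidence set, whose complement carries mass at most $\delta$.

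The main obstacle will be the measure change from the data-collection law $\rho_{\pi_{t-1}}$ to the evaluation laws $\rho_{q_t}$ and $\rho_{q_{t-1}}$ demanded by the simulation lemma. For (c), the evaluation policy $q_{t-1}$ is already aligned with the data-collection iteration and contributes a constant-$1$ factor after Cauchy--Schwarz; for (b), I would invoke Assumption \ref{reg_assumption} together with Cauchy--Schwarz, $\EE_{\rho_{q_t}}[g]\le\sqrt{\EE_{\rho_{\pi_{t-1}}}[(d\rho_{q_t}/d\rho_{\pi_{t-1}})^2]}\cdot\sqrt{\EE_{\rho_{\pi_{t-1}}}[g^2]}\le\kappa\sqrt{\EE_{\rho_{\pi_{t-1}}}[\|\Tilde{f}_t-f^*\|_2^2]}$, which injects the $\kappa$ factor. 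Summing the two produces the $(1+\kappa)$ prefactor, while carefully tracking absolute constants through the simulation lemma and the MLE deviation bound yields the stated coefficient $22$. A single union bound across the MLE concentration event and the deterministic decomposition then gives the claimed high-probability guarantee.
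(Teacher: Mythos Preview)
Your decomposition is natural given how the statement writes $\Delta(t)$ in terms of $q_{t-1}$, but it creates a gap the paper's proof avoids. The data used to fit $\Tilde f_t=\hat f_t^{LS}$ is collected by the \emph{reactive} policy $\pi_{t-1}$ (Algorithm~\ref{alg:cdpo}, line~5), not by the intermediate policy $q_{t-1}$. Hence the least-squares generalization bound (Lemma~\ref{lemma::ls}) controls the model error under $\rho_{\pi_{t-1}}$, and your term~(c) requires transporting this to $\rho_{q_{t-1}}$. Your claim that $q_{t-1}$ is ``already aligned with the data-collection iteration'' is therefore incorrect, and Assumption~\ref{reg_assumption} only supplies Radon--Nikodym control for the forward shift $\rho_{\pi_{t-1}}\to\rho_{q_t}$, not for $\rho_{\pi_{t-1}}\to\rho_{q_{t-1}}$. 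Without an additional distribution-shift assumption, (c) cannot be bounded by the constant-$1$ factor you assert. A secondary issue: by your own ``dominate the integrand by $2\iota\,D_{\mathrm{TV}}$'' step, each of (a) and (d) contributes $\tfrac{2\eta\iota}{1-\gamma}$, so together they give $\tfrac{4\eta\iota}{1-\gamma}$, not $\tfrac{2\eta\iota}{1-\gamma}$.

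The paper sidesteps both problems by a coarser two-term split,
\[
J(\pi_t)-J(\pi_{t-1})=[J(\pi_t)-J(q_t)]+[J(q_t)-J(\pi_{t-1})],
\]
and applies the combined performance-difference/simulation bound (Lemma~\ref{lemma_diff}) directly to the second bracket with $\pi_1=q_t$, $\pi_2=\pi_{t-1}$, $f=\Tilde f_t$. This produces model-error terms over $\rho_{q_t}$ (handled by $\kappa$ via Assumption~\ref{reg_assumption}) and over $\rho_{\pi_{t-1}}$ (the actual data law, no change of measure needed), and only a \emph{single} trust-region residual $|J(\pi_t)-J(q_t)|\le\tfrac{2\eta\iota}{1-\gamma}$. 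The trade-off is that the $\Delta(t)$ which falls out of this route is $\EE_{s\sim\zeta}[V^{\Tilde f_t}_{q_t}(s)-V^{\Tilde f_t}_{\pi_{t-1}}(s)]$ rather than the $q_{t-1}$ version printed in the theorem statement; the paper's proof is internally consistent with the former, and the nonnegativity of $\Delta(t)$ still follows from the optimality of $q_t$ under $\Tilde f_t$.
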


The above theorem provides the iterative improvement bound following the CDPO algorithm. When $H$ is large enough, the policy value improvement is at least $\Delta(t)$ by choosing a properly small $\eta$.

In particular, the first term $\Delta(t)$ characterizes the policy improvement brought by the greedy exploitation in \eqref{greedy_update}, and $\Delta(t)\geq 0$ since $q_t$ is optimal under the reference model $\Tilde{f}_t$. The second term in \eqref{eq::pii} accounts for the generalization error of least square methods. Specifically, model $\Tilde{f}_t=\hat{f}_t^{LS}\in\mathcal{F}_t$ is trained to fit the history samples. However, we seek to have the model error bound over the state-action visitation measure, which requires the deviation from the empirical mean to its expectation using Bernstein’s inequality and union bound. Finally, the trust-region constraint in \eqref{conserv_update} brings the $4\eta\alpha / (1 - \gamma)$ term, which reduces to zero if $\eta$ is small. This makes intuitive sense as $\eta$ controls the degree of conservative exploration.

\subsection{Global Optimality of CDPO}
\label{asym_sec}
We now analyze the global optimality of CDPO by studying its expected regret. As discussed in Section \ref{analysis}, agnostic reinforcement learning is impossible. Without structural assumptions, additional complexity measure is required for a generalization bound beyond finite settings. For this reason, we adopt the notation of \textit{eluder dimension} \citep{russo2013eluder,osband2014model}, defined as follows:
\begin{definition}[($\mathcal{F},\varepsilon$)-Dependence]
If we say $(s, a)\in\mathcal{S}\times\mathcal{A}$ is $(\mathcal{F},\varepsilon)$-dependent on $\{(s_i, a_i)\}_{i=1}^n\subseteq \mathcal{S}\times\mathcal{A}$, then
\$
\forall f_1,f_2 &\in \mathcal{F}, \,\sum_{i=1}^{n} \bigl\| f_1(s_i, a_i) - f_2(s_i, a_i) \bigr\|_2^2 \leq \varepsilon^2 \Rightarrow \bigl\|f_1(s, a) - f_2(s, a)\bigr\|_2 \leq \varepsilon.
\$
Conversely, $(s, a)\in\mathcal{S}\times\mathcal{A}$ is ($\mathcal{F},\varepsilon$)-independent of $\{(s_i, a_i)\}_{i=1}^n$ if and only if it does not satisfy the definition for dependence.
\end{definition}

\begin{definition}[Eluder Dimension]
\label{elu_def}
The eluder dimension $dim_E(\mathcal{F},\varepsilon)$ is the length of the longest possible sequence of elements in $\mathcal{S}\times\mathcal{A}$ such that for some $\varepsilon'\geq\varepsilon$, every element is ($\mathcal{F},\varepsilon'$)-independent of its predecessors.
\end{definition}
We make the following assumption on the Lipschitz continuity of the value function.
\begin{assumption}[Lipschitz Continuous Value]
\label{assump1}
At iteration $t$, assume the value function $V^{f_t}_{\pi}$ for any policy $\pi$ is Lipschitz continuous in the sense that $|V^{f_t}_{\pi}(s_1) - V^{f_t}_{\pi}(s_2)| \leq L_t\lVert s_1 - s_2\rVert_2$.
\end{assumption}

Notably, Assumption \ref{assump1} holds under certain regularity conditions of the MDP, e.g. when the transition and rewards are Lipschitz continuous \citep{bastani2020sample,pirotta2015policy}. Under this assumption, many RL settings can be satisfied \citep{dong2021provable}, \eg, nonlinear models with stochastic Lipschitz policies and Lipschitz reward models, and is thus adopted by various model-based RL work \citep{luo2018algorithmic,chowdhury2019online,dong2021provable}.

We now study the global optimality of CDPO by the following expected regret theorem, which can be seen as a direct consequence of Theorem \ref{thm_connection} that states the statistical equivalence between CDPO and PSRL.
\begin{theorem}[Expected Regret of CDPO]
\label{near_opt_bound}
Let $N(\mathcal{F},\alpha,\lVert\cdot\rVert_2)$ be the $\alpha$-covering number of $\mathcal{F}$. Denote $d_E := \dim_{E}(\mathcal{F},T^{-1})$ for the eluder dimension of $\mathcal{F}$ at precision $1/T$. Under Assumption \ref{assump::reg} and \ref{assump1}, the cumulative expected regret of CDPO in $T$ iterations is bounded by
\#
&{\rm BayesRegret}(T, \pi, \phi) \leq \frac{\gamma T(3T - 5)L}{(T-1)(T-2)} \cdot\left(1 + \frac{1}{1 - \gamma} C d_E + 4\sqrt{T d_E\beta}\right) + 4\gamma C,\\
&\text{where \,}\beta := 8\sigma^2\log\Bigl(2N\bigl(\mathcal{F}, 1 / (T^2), \lVert\cdot\rVert_2\bigr)T\Bigr) + 2\bigl(8C + \sqrt{8\sigma^2 \log(8T^3)}\,\bigr)/T \text{ and } L := \EE[L_t].\notag
\#
\end{theorem}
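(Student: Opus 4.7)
The plan is to reduce the bound to the analysis already carried out for Theorem \ref{thm_connection}. That theorem shows that for any confidence set construction, $\text{BayesRegret}(T,\pi^{\text{CDPO}},\phi) \leq \tfrac{L}{1-4\delta}\sum_{t=1}^T 3\,\mathbb{E}[\omega_t] + 8\gamma\delta T$, so the task is (i) to bound the confidence parameter $\beta_t$ entering the definition of $\mathcal{F}_t$ in terms of the covering number, and (ii) to bound $\sum_{t=1}^T \mathbb{E}[\omega_t]$ in terms of the eluder dimension $d_E$, and then (iii) to calibrate $\delta$ to obtain the stated prefactor.

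For (i), I would construct $\mathcal{F}_t = \{f \in \mathcal{F} : \|f - \hat f_t^{LS}\|_{2,E_t} \leq \sqrt{\beta_t}\}$ and show $\mathbb{P}(f^* \in \bigcap_t \mathcal{F}_t) \geq 1-2\delta$ (this is exactly Lemma \ref{set_high_prob} alluded to in the hproof). The argument is standard: pick a $(1/T^2)$-cover of $\mathcal{F}$ in $\|\cdot\|_2$, apply a sub-Gaussian concentration inequality under Assumption \ref{assump::reg} to each cover element together with a union bound across the $T$ iterations, and finally pay the discretization error using $\|s\|_2 \leq C$. This yields the displayed expression $\beta = 8\sigma^2\log\!\bigl(2N(\mathcal{F},1/T^2,\|\cdot\|_2)\,T\bigr) + 2(8C + \sqrt{8\sigma^2\log(8T^3)})/T$ once $\delta$ is chosen on the order of $1/T$.

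For (ii), I would invoke the standard eluder-dimension pigeonhole argument (Lemma \ref{sum_of_width}, following Proposition 3 of Russo and Van Roy 2013): if $f^*, \hat f_t^{LS} \in \mathcal{F}_t$, then any $(s_t,a_t)$ with $\omega_t(s_t,a_t) > \varepsilon$ must be $(\mathcal{F},\varepsilon)$-dependent on few predecessors, so
\[
\sum_{t=1}^T \omega_t(s_t,a_t) \;\leq\; \frac{C\,d_E}{1-\gamma} \;+\; 4\sqrt{T\,d_E\,\beta},
\]
where the first term absorbs the rounds on which $\omega_t$ exceeds the bounded diameter $C/(1-\gamma)$ of value, and the second term is the Cauchy–Schwarz-plus-eluder sum. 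Since this bound is pathwise, taking expectation is immediate.

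Finally, for (iii), I substitute the width bound into the decomposition from Theorem \ref{thm_connection} and pick $\delta$ in terms of $T$ to make $8\gamma\delta T$ absorb into the $4\gamma C$ remainder while the prefactor $3L/(1-4\delta)$ gives the explicit rational function $\gamma T(3T-5)L / ((T-1)(T-2))$. The Lipschitz constant of value enters through $L = \mathbb{E}[L_t]$ via Assumption \ref{assump1} when applying the simulation lemma step inside the proof of Theorem \ref{thm_connection}. The main obstacle is the bookkeeping in step (i): carefully propagating the $1/T^2$ discretization error and the $\delta$ failure probability so that they fit into the stated form of $\beta$ and leave only the $4\gamma C$ additive constant; the eluder-dimension step and the reduction to Theorem \ref{thm_connection} are essentially plug-and-play.
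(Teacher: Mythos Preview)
Your proposal follows essentially the same route as the paper's proof: decompose the Bayes regret via the Posterior Sampling Lemma into three model-difference terms, apply the Simulation Lemma together with the Lipschitz value assumption to reduce each to a sum of confidence-set widths, then invoke Lemma~\ref{set_high_prob} for $\beta$ and Lemma~\ref{sum_of_width} for $\sum_t \omega_t$, and finally calibrate $\delta=1/(2T)$.

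There is one small discrepancy in your step~(iii): the prefactor $3L/(1-4\delta)$ does \emph{not} yield the stated $\gamma T(3T-5)L/((T-1)(T-2))$. The paper obtains this exact constant by treating the three value-difference terms asymmetrically. For the two terms comparing the reference model $\tilde f_t$ with the sampled model $f_t$, the relevant event $A=\{\tilde f_t,f_t\in\bigcap_t\mathcal{F}_t\}$ has probability at least $1-2\delta$ (not $1-4\delta$), because $\tilde f_t=\hat f_t^{LS}$ is deterministically the center of $\mathcal{F}_t$ and so lies in it with probability one; only the single term comparing $f_t$ with $f^*$ needs the full union bound giving $1-4\delta$. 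With $\delta=1/(2T)$ this produces the prefactor
\[
\frac{2\gamma LT}{T-1}+\frac{\gamma LT}{T-2}=\frac{\gamma LT(3T-5)}{(T-1)(T-2)}
\]
and the additive constant $\gamma C+\gamma C+2\gamma C=4\gamma C$. Your uniform $1/(1-4\delta)$ treatment would instead give $3\gamma LT/(T-2)$, which is a valid bound of the same order but not the displayed expression.
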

Here, the covering number is introduced since we are considering $\mathcal{F}$ that may contain infinitely many functions, for which we cannot simply apply a union bound. Besides, $\beta$ is the confidence parameter that contains $f^*$ with high probability (via concentration inequality). 

To clarify the asymptotics of the expected regret bound, we introduce another measure of dimensionality that captures the sensitivity of $\mathcal{F}$ to statistical overfitting.
\begin{corollary}[Asymptotic Bound]
Define the Kolmogorov dimension w.r.t. function class $\mathcal{F}$ as
\$
d_K = \dim_{K}(\mathcal{F}) := \limsup_{\alpha \downarrow 0}\frac{\log(N(\mathcal{F},\alpha,\lVert\cdot\rVert_2))}{\log(1/\alpha)}.
\$
Under the assumptions of Theorem \ref{near_opt_bound} and by omitting terms logarithmic in $T$, the regret of CDPO is
\#
&{\rm BayesRegret}(T, \pi, \phi) = \Tilde{O}(L\sigma\sqrt{d_K d_E T\,}).
\#
\end{corollary}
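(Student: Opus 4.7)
The plan is to start from the explicit, finite-$T$ bound already furnished by Theorem \ref{near_opt_bound} and simply track which terms dominate as $T\to\infty$, substituting the Kolmogorov-dimension estimate for the covering number.

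First, I would examine the prefactor $\frac{\gamma T(3T-5)L}{(T-1)(T-2)}$ and note that it converges to $3\gamma L$, so it is $O(L)$ and contributes no $T$-dependence to the asymptotics. Inside the parentheses, the constant $1$ and the term $\frac{C d_E}{1-\gamma}$ are both $o(\sqrt{T d_E \beta})$ once we observe that $\beta$ will behave like $\sigma^2 \log T$ (so $\sqrt{T d_E \beta}$ grows at least like $\sqrt{T \log T}$ while the $d_E$ term grows only in $d_E$). The additive $4\gamma C$ term is also absorbed. Hence the asymptotics are driven entirely by $L\sqrt{T d_E \beta}$.

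Next I would handle $\beta$ itself. The second summand $2\bigl(8C + \sqrt{8\sigma^2 \log(8T^3)}\bigr)/T$ decays like $\sqrt{\log T}/T$ and is negligible. The dominant piece is $8\sigma^2 \log\bigl(2N(\mathcal{F}, T^{-2}, \|\cdot\|_2)T\bigr)$. By the definition of the Kolmogorov dimension, for any $\epsilon>0$ and all sufficiently small $\alpha$, $\log N(\mathcal{F},\alpha,\|\cdot\|_2) \leq (d_K+\epsilon)\log(1/\alpha)$. Taking $\alpha = T^{-2}$ gives $\log N(\mathcal{F}, T^{-2},\|\cdot\|_2) \leq 2(d_K+\epsilon)\log T$ for $T$ large. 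Therefore $\beta = O(\sigma^2 d_K \log T)$, i.e.\ $\beta = \tilde O(\sigma^2 d_K)$ after suppressing logarithmic factors.

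Plugging this back, the dominant contribution to the regret is
\[
L\cdot \sqrt{T\, d_E\, \beta}\;=\; L\cdot \sqrt{T\, d_E\, \tilde O(\sigma^2 d_K)}\;=\;\tilde O\bigl(L\sigma\sqrt{d_K\, d_E\, T}\bigr),
\]
which is exactly the claimed bound. The only subtlety, and the step I would be most careful about, is invoking the $\limsup$ definition of $d_K$ correctly: one needs to fix an arbitrarily small $\epsilon>0$, absorb $(d_K+\epsilon)$ into the $\tilde O(\cdot)$ notation (which hides logarithmic and constant factors), and then note that for the asymptotic statement it is legitimate to replace $d_K+\epsilon$ by $d_K$ since $\epsilon$ was arbitrary. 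Everything else is straightforward arithmetic on the bound of Theorem \ref{near_opt_bound}.
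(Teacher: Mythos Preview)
Your proposal is correct and is precisely the intended derivation: the paper states this corollary as an immediate consequence of Theorem~\ref{near_opt_bound} without supplying a separate proof, and your step-by-step extraction of the dominant $L\sqrt{T d_E \beta}$ term together with the Kolmogorov-dimension bound $\log N(\mathcal{F},T^{-2},\|\cdot\|_2)\le 2(d_K+\epsilon)\log T$ is exactly how one fills in the omitted details.
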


The sublinear regret result permits the global optimality and sample efficiency for any model class with a reasonable complexity measure. Meanwhile, the iterative improvement theorem guarantees efficient exploration and good performance even when the model class is highly nonlinear.
\section{Empirical Evaluation}
\label{exp}
\subsection{Understanding Different Exploration Mechanisms}
We first provide insights and evidence of why CDPO exploration can be more efficient in the tabular $N$-Chain MDPs, which have optimal \textit{right} actions and suboptimal \textit{left} actions at each of the $N$ states. Settings and full results are provided in Appendix \ref{chain_exp}. In Figure \ref{post_fig}, we compare the posterior of CDPO and PSRL at the state that is the furthest away from the initial state, i.e. the state that is the hardest for the agents to reach and explore.
\begin{figure*}[htb]
    \centering
\begin{minipage}[t] {0.59\linewidth}
\centering
\includegraphics[width=0.99\textwidth]{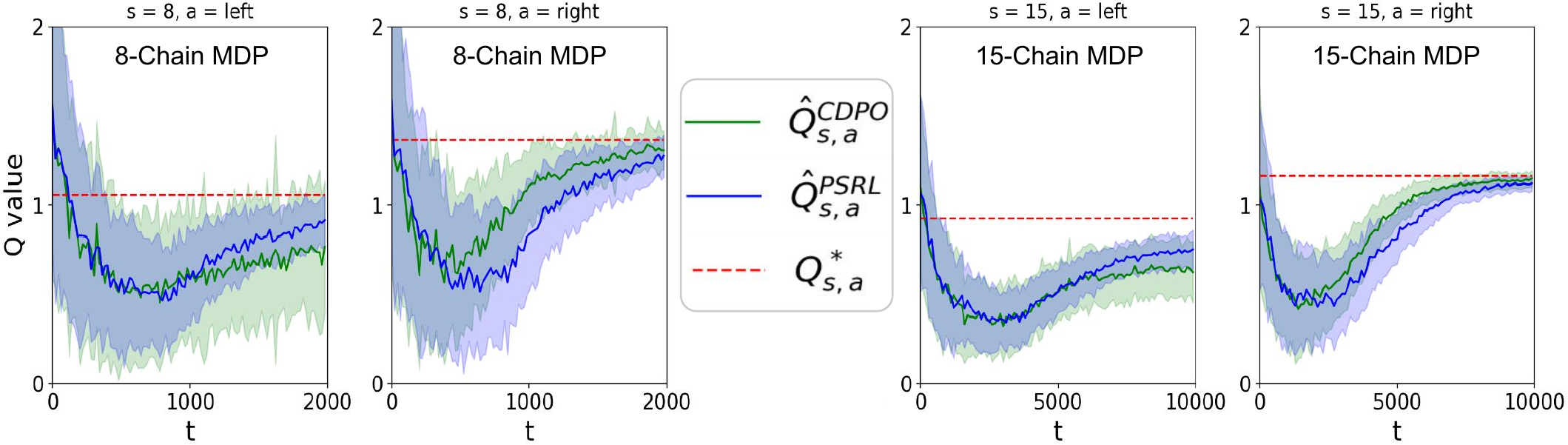}\ 
\caption{CDPO and PSRL posterior on an $8$-Chain MDP and a $15$-Chain MDP, where the \textit{right} actions are optimal.}
\label{post_fig}
\end{minipage}
\begin{minipage}[t] {0.4\linewidth}%
\includegraphics[width=0.99\textwidth]{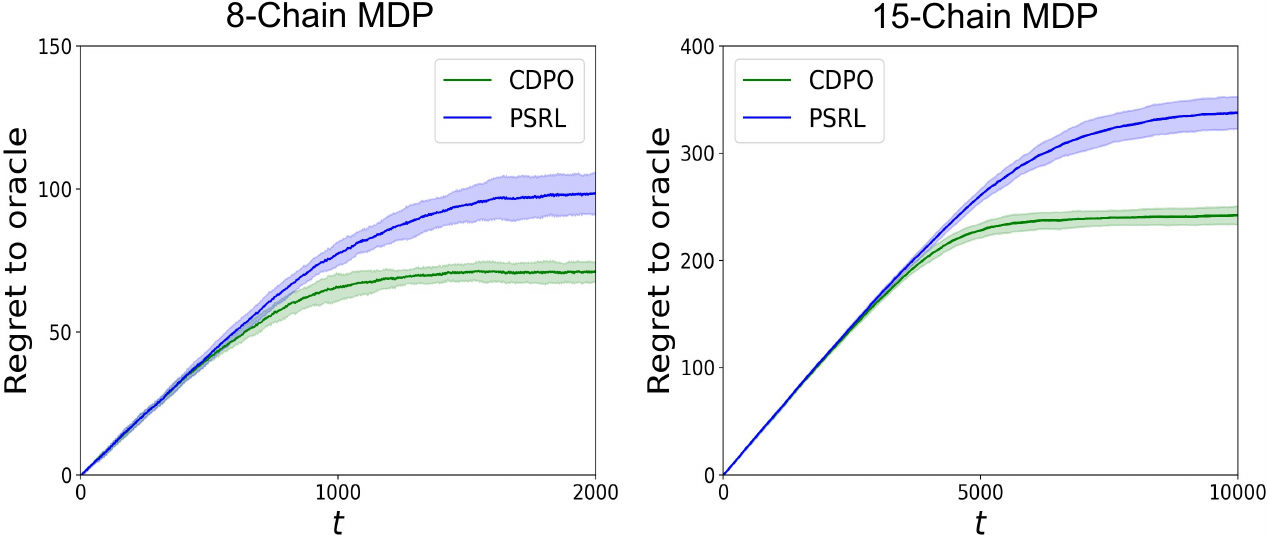}%
\caption{Regret curve of CDPO and PSRL when $N=8$ and $N=15$.}
\label{reg_fig}
\end{minipage}
\end{figure*}

When training starts, both algorithms have a large variance of value estimation. However, as training progresses, CDPO gives more accurate and certain estimates, but \textit{only} for the optimal \textit{right} actions not for the suboptimal \textit{left} actions, while PSRL agents explore \textit{both} directions. This verifies the potential over-exploration issue in PSRL: as long as the uncertainty contains unrealistically large values, PSRL agents can perform uninformative exploration by acting suboptimally according to an inaccurate \textit{sampled} model. In contrast, CDPO replaces the sampled model with a stable mean estimate and cares about the \textit{expected} value, thus avoiding such pitfalls. We see in Figure \ref{reg_fig} that although CDPO has much larger uncertainty for the suboptimal \textit{left} actions, its regret is lower.

\subsection{Exploration Efficiency with Nonlinear Model Class}
In finite MDPs, PSRL-style agents can specify and try every possible action to finally obtain an accurate high-confidence prediction. However, our discussion in Section \ref{analysis} indicates that a similar over-exploration issue in more complex environments can lead to less informative exploration steps, which only eliminate an exponentially small portion of the uncertainty.

To see its impact on the training performance, we report the results of provable algorithms with nonlinear models on several MuJoCo tasks in Figure \ref{fig::non_provable}. For OFU-RL, we mainly evaluate HUCRL \citep{curi2020efficient}, a deep algorithm proposed to deal with the intractability of the joint optimization. We observe that all algorithms achieve asymptotic optimality in the inverted pendulum. Since the dimension of the pendulum task is low, learning an accurate (and thus generalizable) model poses no actual challenge. However, in higher dimensional tasks such as half-cheetah, CDPO achieves a higher asymptotic value with faster convergence. Implementation details and hyperparameters are provided in Appendix \ref{exp_settings}.
\vspace{-0.2cm}
\begin{figure*}[htbp]
\centering
\subfigure{
    \begin{minipage}[t]{0.33\linewidth}
        \centering
        \includegraphics[width=1.1\textwidth]{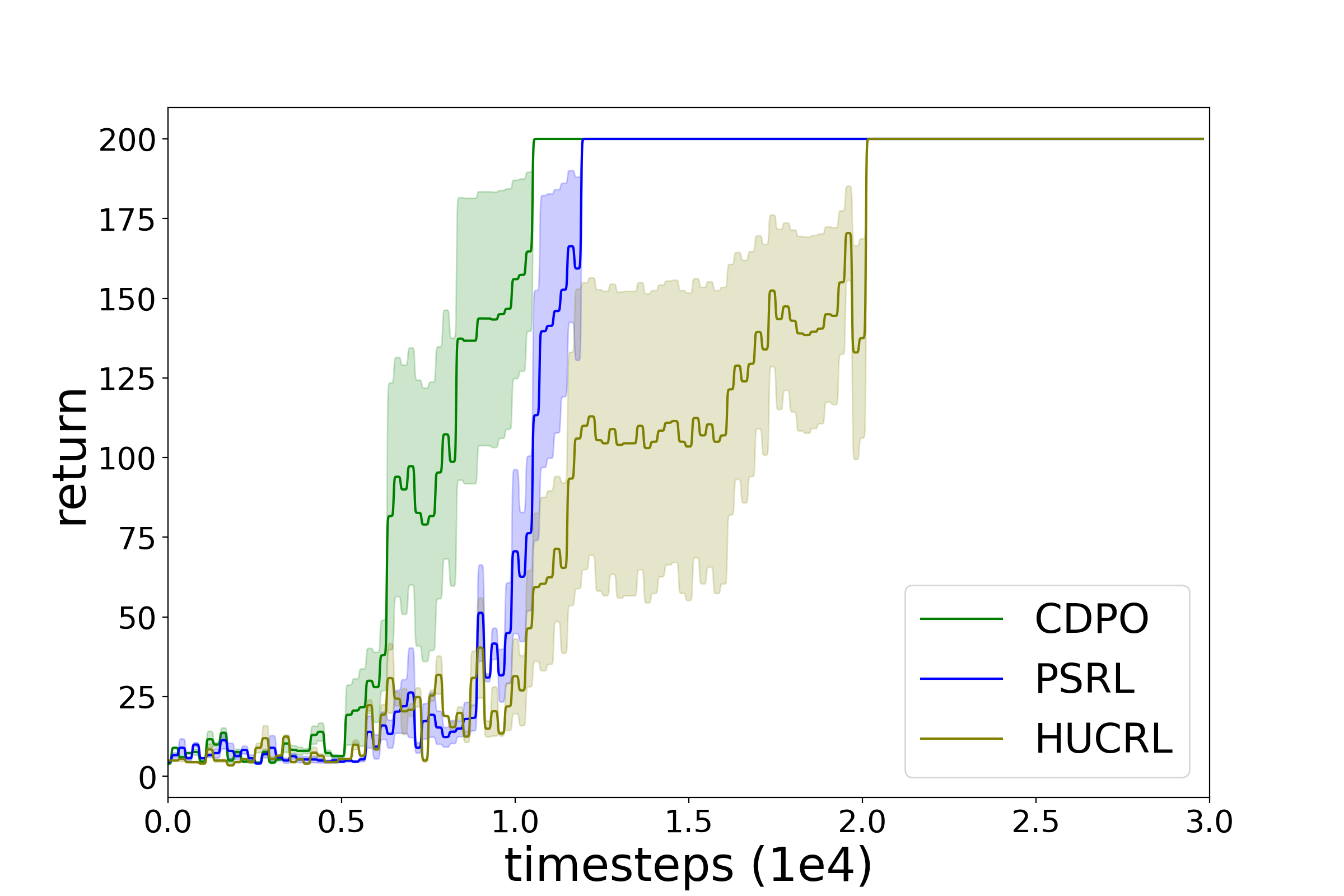}\\
        (a) Inverted Pendulum.
    \end{minipage}%
}%
\subfigure{
    \begin{minipage}[t]{0.33\linewidth}
        \centering
        \includegraphics[width=1.1\textwidth]{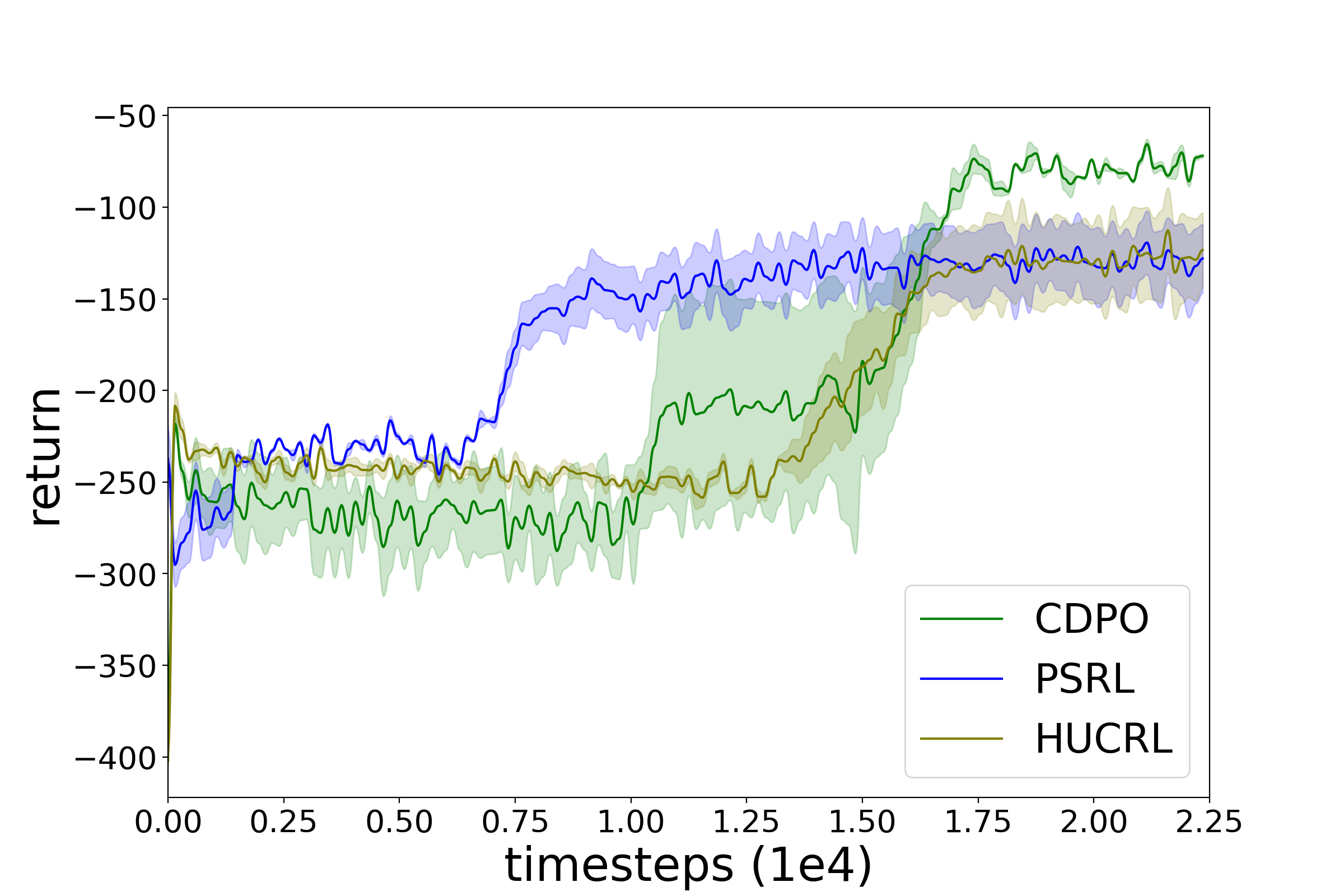}\\
        (b) 7-DOF Pusher.
    \end{minipage}%
}%
 \subfigure{
    \begin{minipage}[t]{0.33\linewidth}
        \centering
        \includegraphics[width=1.1\textwidth]{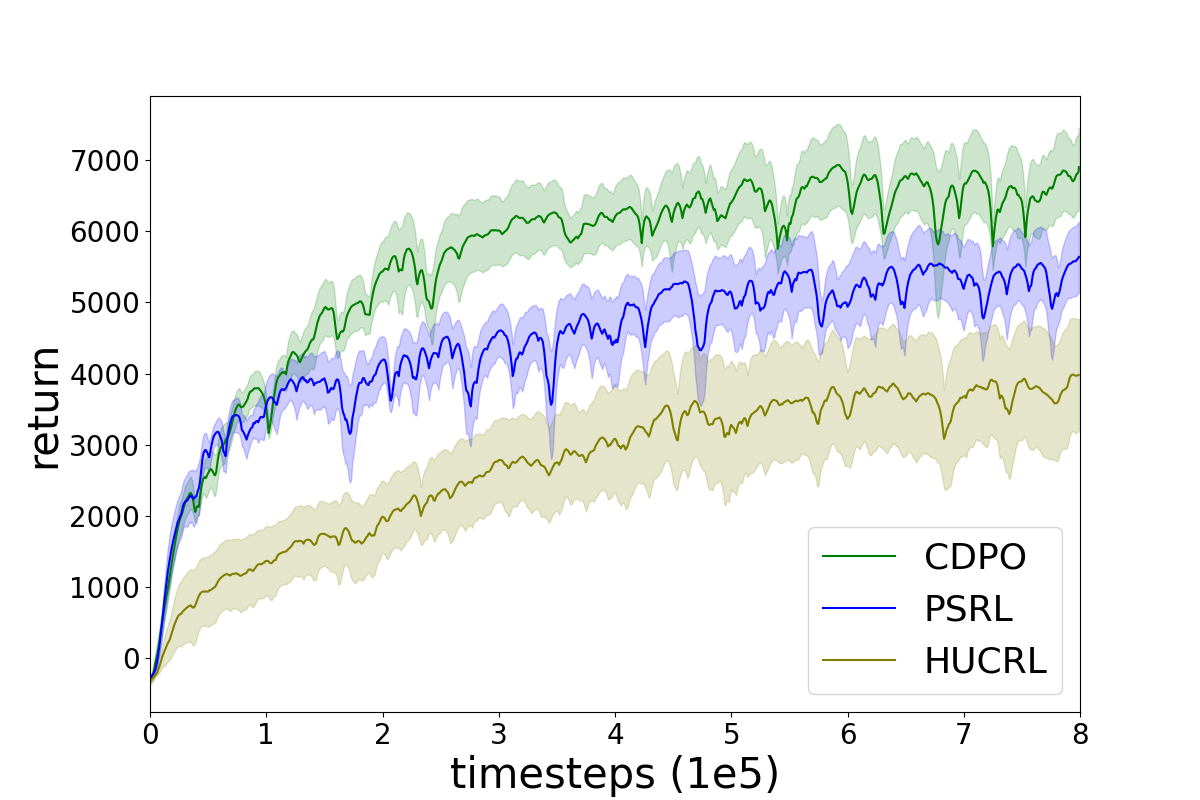}\\
        (c) Half-Cheetah.
    \end{minipage}%
}
\caption{Performance of CDPO, PSRL, and HUCRL equipped with nonlinear models in several MuJoCo tasks: inverted pendulum swing-up, pusher goal-reaching,
and half-cheetah locomotion.}
\label{fig::non_provable}
\end{figure*}
\subsection{Comparison with Prior RL Algorithms}
\label{comp_exp}
We also examine a broader range of MBRL algorithms, including MBPO \citep{janner2019trust}, SLBO \citep{luo2018algorithmic}, and ME-TRPO \citep{kurutach2018model}. The model-free baselines include SAC \citep{haarnoja2018soft}, PPO \citep{schulman2017proximal}, and MPO \citep{abdolmaleki2018maximum}. The results are shown in Figure \ref{fig:compare_fig}. We observe that CDPO achieves competitive or higher asymptotic performance while requiring fewer samples compared to both the model-based and the model-free baselines.
\vspace{-0.2cm}
\begin{figure*}[htbp]
\centering
\subfigure{
    \begin{minipage}[t]{0.33\linewidth}
        \centering
        \includegraphics[width=1.1\textwidth]{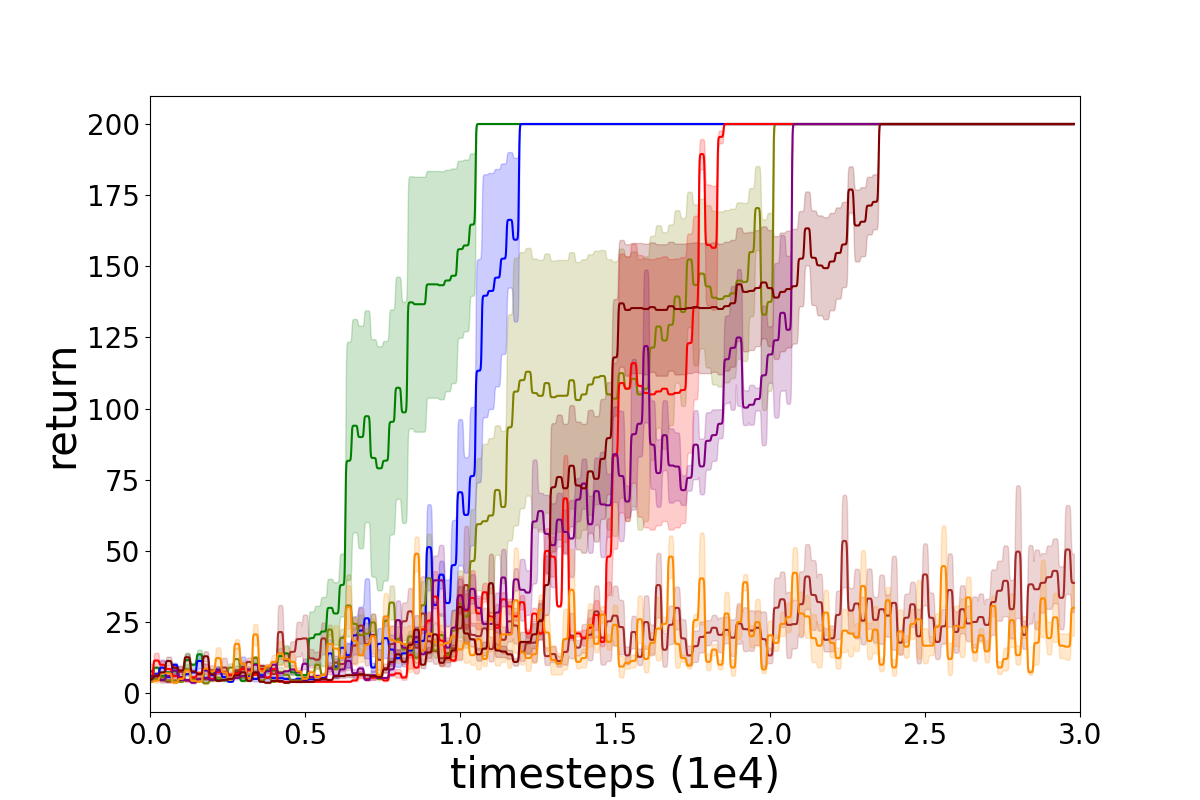}\\
        (a) Inverted Pendulum.
    \end{minipage}%
}%
\subfigure{
    \begin{minipage}[t]{0.33\linewidth}
        \centering
        \includegraphics[width=1.1\textwidth]{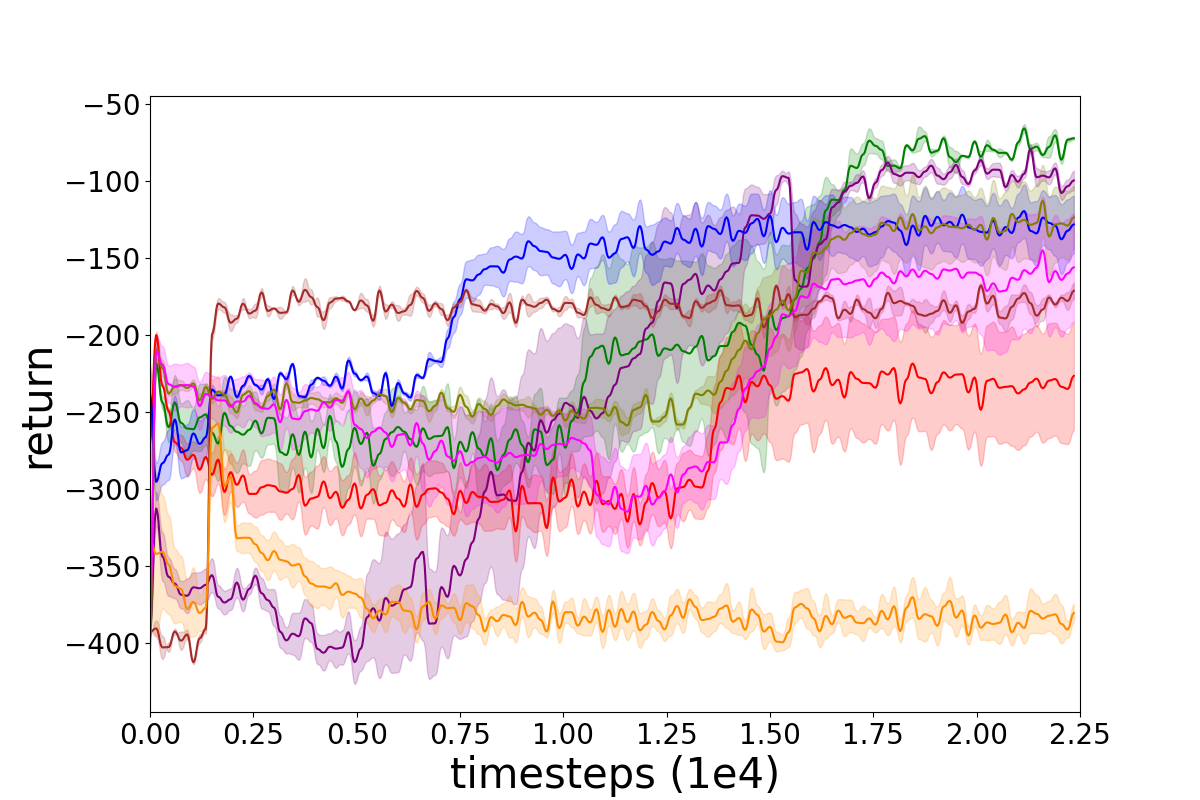}\\
        (b) 7-DOF Pusher.
    \end{minipage}%
}%
 \subfigure{
    \begin{minipage}[t]{0.33\linewidth}
        \centering
        \includegraphics[width=1.1\textwidth]{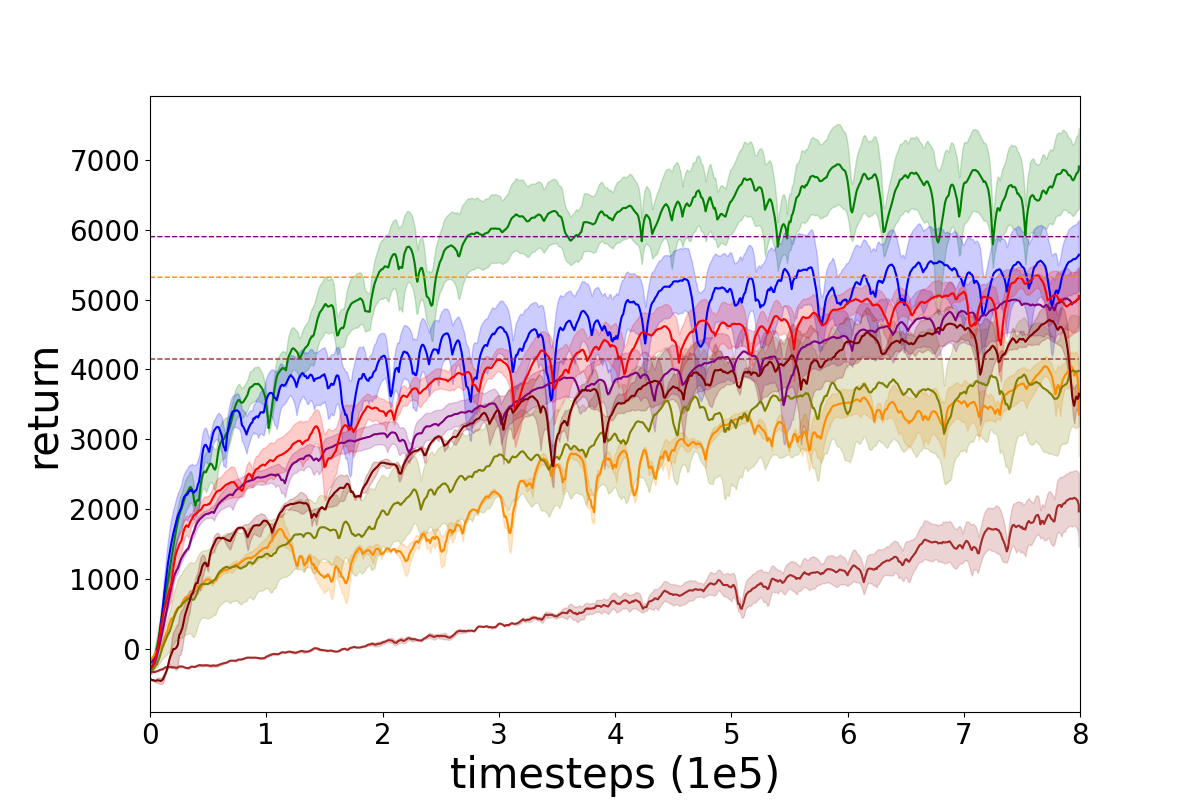}\\
        (c) Half-Cheetah.
    \end{minipage}%
}
\vspace{-0.1cm}
 \subfigure{
    \begin{minipage}[t]{1\linewidth}
        \centering
        \includegraphics[width=3.8in]{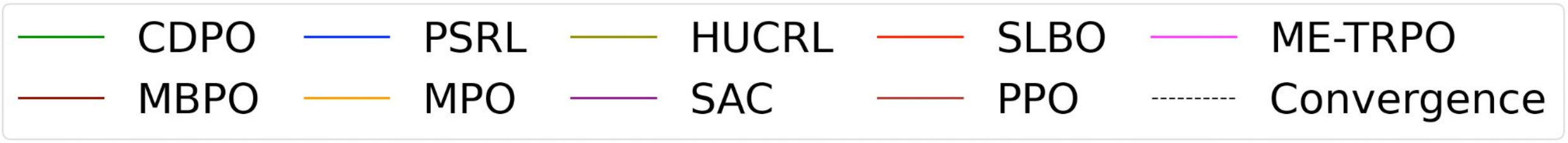}\\
\end{minipage}%
}
\caption{Comparison between CDPO and model-free, model-based RL baseline algorithms.}
\label{fig:compare_fig}
\end{figure*}
\vspace{-0.2cm}
 
\subsection{Ablation Study}
We conduct ablation studies to provide a better understanding of the components in CDPO. One can observe from Figure \ref{fig:ablation} that the policies updated with only Referential Update or Conservative Update lag behind the dual framework. We also test the necessity and sensitivity of the constraint hyperparameter $\eta$. We see that a constant $\eta$ and a time-decayed $\eta$ achieve similar asymptotic values with a similar convergence rate, showing the robustness of CDPO. However, removing the constraint will lose the policy improvement guarantee, thus causing degradation. Ablation on different choices of $\texttt{MBPO}$ solver (Dyna and POPLIN-P \citep{wang2019exploring}) shows the generalizability of CDPO.

\begin{figure*}[htbp]
\vspace{-0.5cm}
\centering
\subfigure{
    \begin{minipage}[t]{0.325\linewidth}
        \centering
        \includegraphics[width=1.9in]{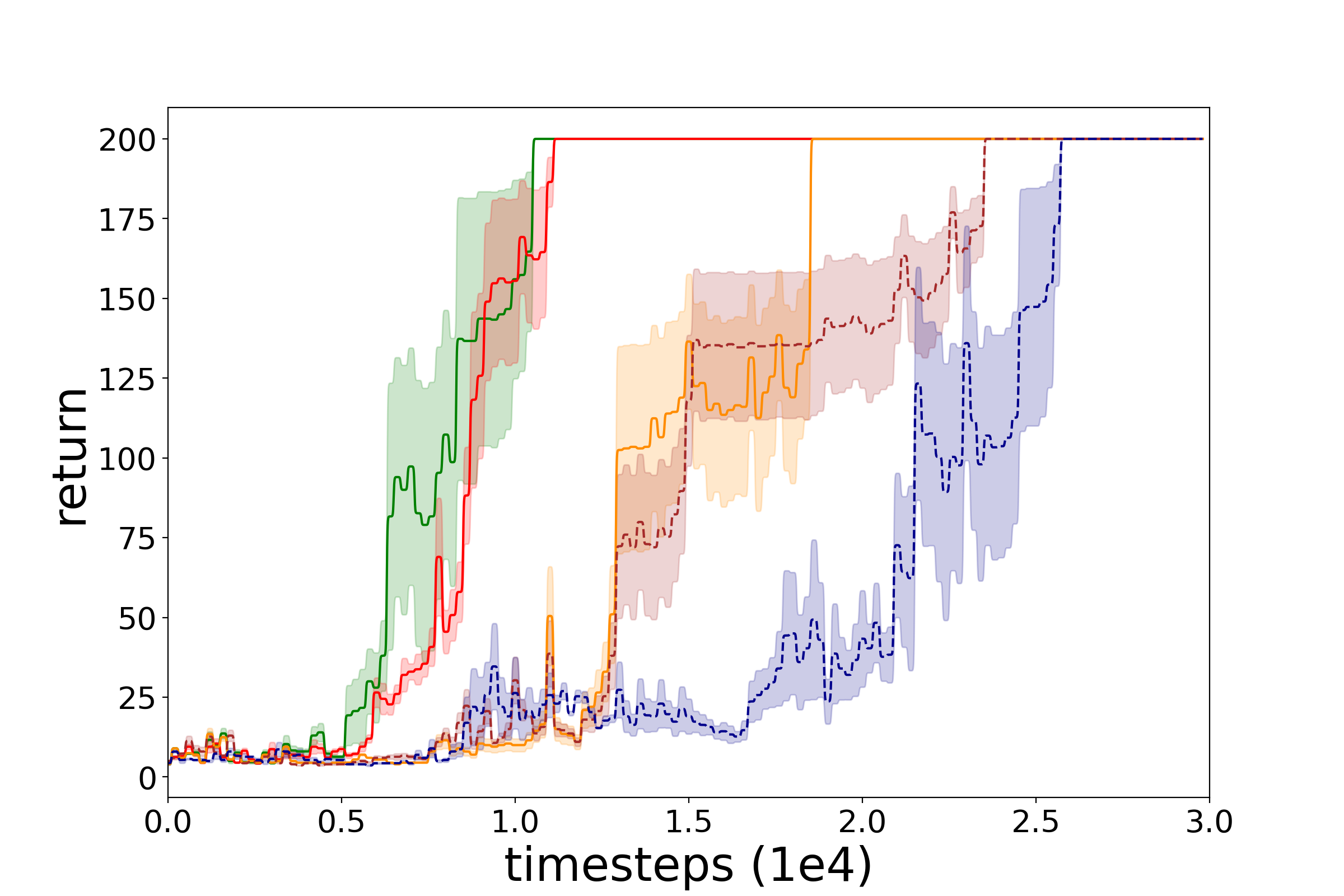}\\
        (a) Inverted Pendulum.
    \end{minipage}%
}%
\subfigure{
    \begin{minipage}[t]{0.325\linewidth}
        \centering
        \includegraphics[width=1.9in]{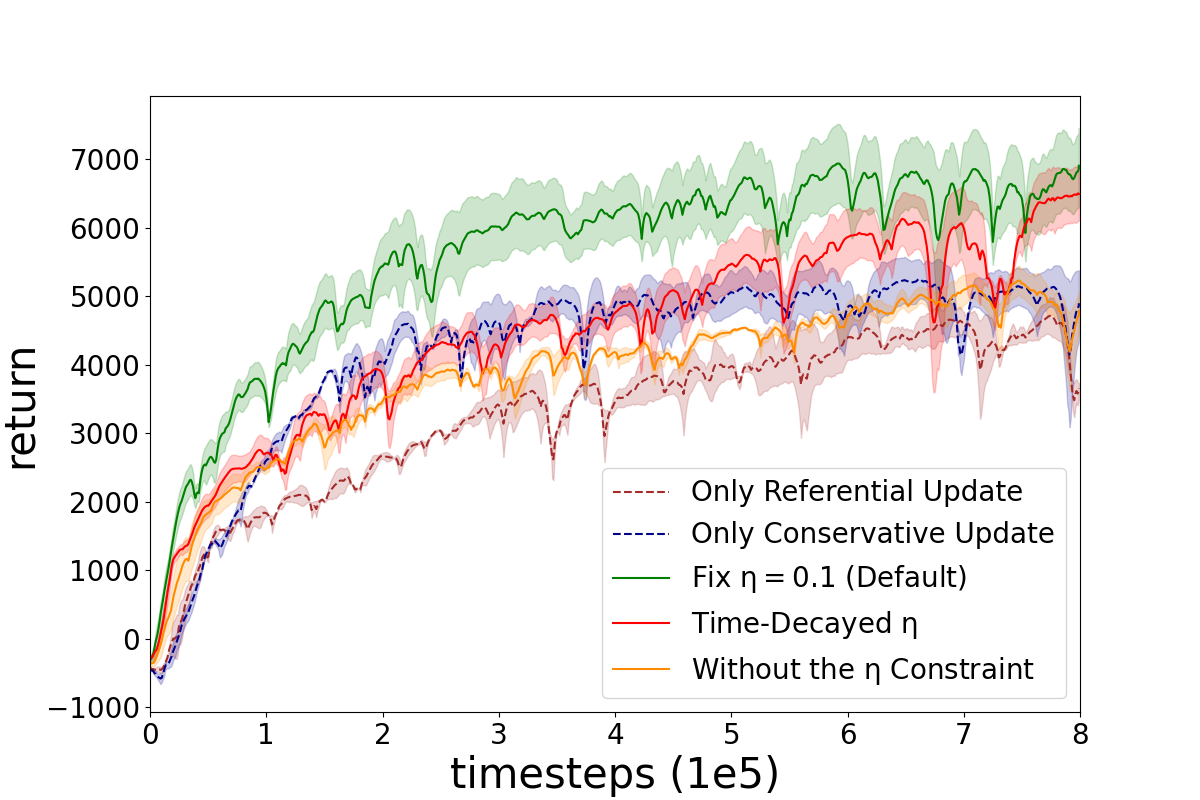}\\
        (b) Half-Cheetah.
    \end{minipage}%
}%
 \subfigure{
    \begin{minipage}[t]{0.325\linewidth}
        \centering
        \includegraphics[width=1.9in]{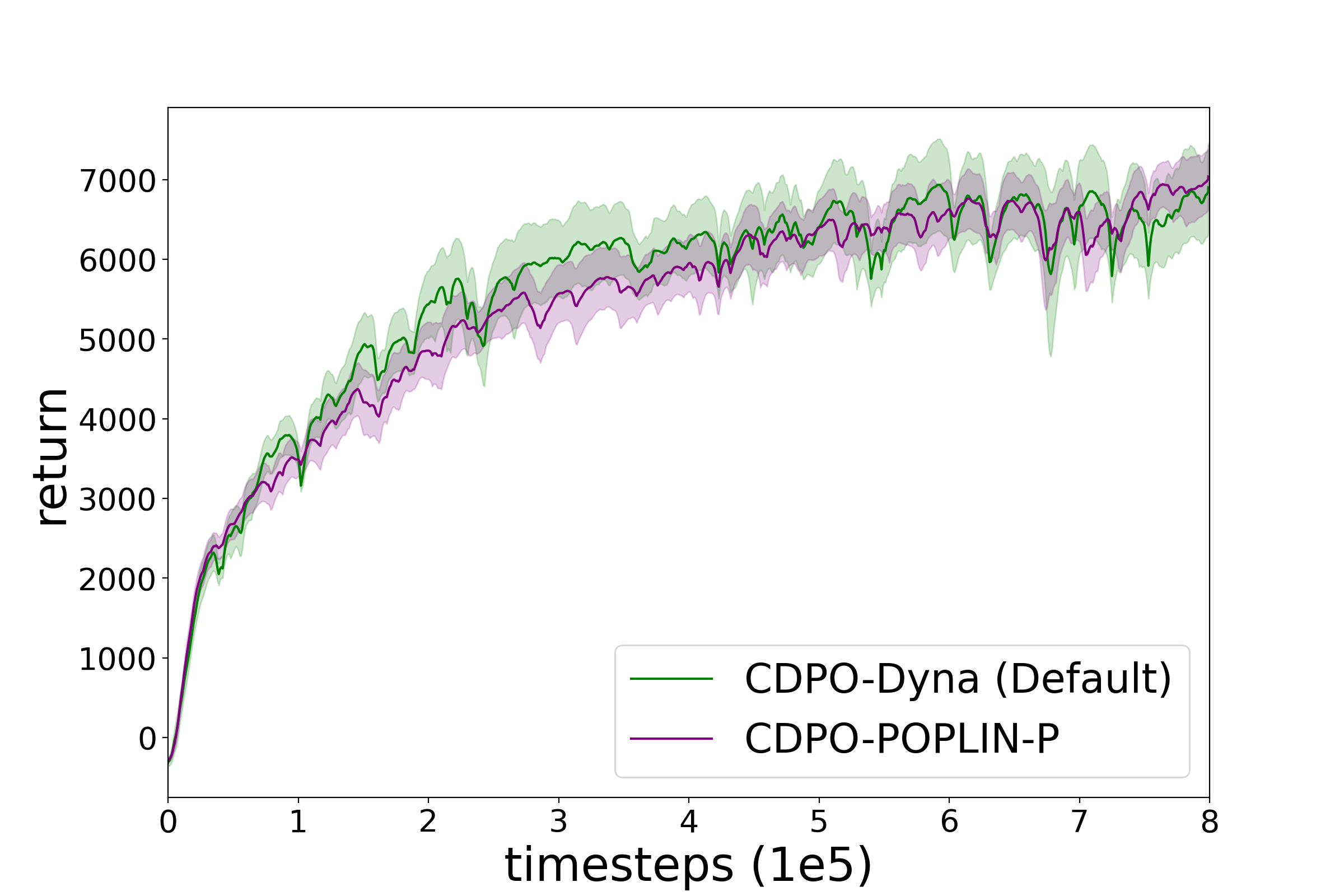}\\
        (c) Half-Cheetah.
    \vspace{-3cm}
    \end{minipage}%
}
\caption{Ablation studies on the effect of the dual update steps and the trust-region constraint. The robustness and generalizability of the CDPO framework are demonstrated by the results of different choices of the constraint threshold and different solvers.}
\label{fig:ablation}
\end{figure*}

\section{Conclusions \& Future Work}
\label{conclusion_disccuss}
In this work, we present \textit{Conservative Dual Policy Optimization} (CDPO), a simple yet provable model-based algorithm. By iterative execution of the \textit{Referential Update} and \textit{Conservative Update}, CDPO explores within a reasonable range while avoiding aggressive policy update. Moreover, CDPO gets rid of the harmful sampling procedure in previous provable approaches. Instead, an intermediate policy is optimized under a stable \textit{reference} model, and the agent conservatively explore the environment by maximizing the \textit{expected} policy value. With the same order of regret as PSRL, the proposed algorithm can achieve global optimality while monotonically improving the policy. Considering our naive choice of the reference model, other more sophisticated designs should be a fruitful future direction. It will also be interesting to explore different choices of the $\texttt{MBPO}$ solvers, which we would like to leave as future work.

\bibliography{references.bib}
\bibliographystyle{plain}
\section*{Checklist}

\begin{enumerate}

\item For all authors...
\begin{enumerate}
  \item Do the main claims made in the abstract and introduction accurately reflect the paper's contributions and scope?
    \answerYes{}
  \item Did you describe the limitations of your work?
    \answerYes{} Some components in our algorithm are naively designed.
  \item Did you discuss any potential negative societal impacts of your work?
    \answerYes{} See Appendix \ref{societal}.
  \item Have you read the ethics review guidelines and ensured that your paper conforms to them?
    \answerYes{}
\end{enumerate}

\item If you are including theoretical results...
\begin{enumerate}
  \item Did you state the full set of assumptions of all theoretical results?
    \answerYes{} See Assumption \ref{assump::reg} and \ref{assump1}.
	\item Did you include complete proofs of all theoretical results?
    \answerYes{} See Appendix \ref{main_proofs}.
\end{enumerate}

\item If you ran experiments...
\begin{enumerate}
  \item Did you include the code, data, and instructions needed to reproduce the main experimental results (either in the supplemental material or as a URL)?
    \answerYes{}
  \item Did you specify all the training details (e.g., data splits, hyperparameters, how they were chosen)?
    \answerYes{}
	\item Did you report error bars (e.g., with respect to the random seed after running experiments multiple times)?
    \answerYes{}
	\item Did you include the total amount of compute and the type of resources used (e.g., type of GPUs, internal cluster, or cloud provider)?
    \answerYes{}
\end{enumerate}

\item If you are using existing assets (e.g., code, data, models) or curating/releasing new assets...
\begin{enumerate}
  \item If your work uses existing assets, did you cite the creators?
    \answerNA{}
  \item Did you mention the license of the assets?
    \answerNA{}
  \item Did you include any new assets either in the supplemental material or as a URL?
    \answerNA{} Our code can be found in the supplemental material.
  \item Did you discuss whether and how consent was obtained from people whose data you're using/curating?
    \answerNA{}
  \item Did you discuss whether the data you are using/curating contains personally identifiable information or offensive content?
    \answerNA{}
\end{enumerate}

\item If you used crowdsourcing or conducted research with human subjects...
\begin{enumerate}
  \item Did you include the full text of instructions given to participants and screenshots, if applicable?
    \answerNA{}
  \item Did you describe any potential participant risks, with links to Institutional Review Board (IRB) approvals, if applicable?
    \answerNA{}
  \item Did you include the estimated hourly wage paid to participants and the total amount spent on participant compensation?
    \answerNA{}
\end{enumerate}

\end{enumerate}


\newpage
\appendix
\section{Proofs}
\label{main_proofs}
\subsection{Proof of Theorem \ref{thm::pii}}
\begin{proof}
We lay out the proof in two major steps. Firstly, we characterize the performance difference between $J(q_t)$ and $J(\pi_{t-1})$, which can be done by applying Lemma \ref{lemma_diff}. Specifically, we set $\pi_1$, $\pi_2$ in Lemma \ref{lemma_diff} to $q_t$, $\pi_{t-1}$ and set $f$ as the reference model $\Tilde{f}_t$. Then we obtain
\#\label{qtminus}
&J(q_t) - J(\pi_{t-1}) \notag\\ 
&\qquad= \Delta(t) - \frac{\gamma}{2(1 - \gamma)}\biggl(\EE_{\rho_{q_t}}\Bigl[\bigl\|\Tilde{f}_t(s, a) - f^*(\cdot|s, a)\bigr\|_1\Bigr]+ \EE_{\rho_{\pi_{t-1}}}\Bigl[\bigl\|\Tilde{f}_t(s, a) - f^*(\cdot|s, a)\bigr\|_1\Bigr]\biggr),
\#
where $\Delta(t) := \EE_{s\sim\zeta}\bigl[V_{q_t}^{\Tilde{f}_t}(s) - V_{\pi_{t-1}}^{\Tilde{f}_t}(s)\bigr] \geq 0$ due to the optimality of $q_t$ under $\Tilde{f}_t$, i.e., $q_t = \argmax_q V_q^{\Tilde{f}_t}$. 

Recall that the reference model is the least squares estimate, i.e.,
\$\Tilde{f}_t = \hat{f}_t^{LS} = \argmin_{f\in\mathcal{F}}\sum_{(s, a, s')\in\mathcal{H}_{t-1}}\bigl\| f(s, a) - s' \bigr\|_2^2,\$
where $\mathcal{H}_{t-1}$ is the trajectory in the real environment when following policy $\pi_{t-1}$.

From the simulation property of continuous distribution, we have the following equivalence between the direct and indirect ways of drawing samples:
\$
s'\sim f^*(\cdot|s, a) \ \equiv \ s'= f^*(s, a) + \epsilon, \epsilon\sim p(\epsilon),
\$
where $p(\epsilon)$ is some noise distribution. 

Therefore, according to the Gaussian noise assumption, we obtain from the least squares generalization bound in Lemma \ref{lemma::ls} that
\#
\EE_{\rho_{\pi_{t-1}}}\Bigl[\bigl\|\Tilde{f}_t(s, a) - f^*(\cdot|s, a)\bigr\|_1\Bigr]\leq \frac{22C^2\ln(|\mathcal{F}|/\delta)}{H},
\#
where $\epsilon_{\text{approx}} = 0$ in the generalization bound as the realizability is guaranteed since $\hat{f}_t^{LS}$ and $f^*$ are from the same function class $\mathcal{F}$.

Similarly, we have for the intermediate policy $q_t$ that
\#
\EE_{\rho_{q_{t}}}\Bigl[\bigl\|\Tilde{f}_t(s, a) - f^*(\cdot|s, a)\bigr\|_1\Bigr]&\leq \EE_{\rho_{\pi_{t-1}}}\Bigl[\bigl\|\Tilde{f}_t(s, a) - f^*(\cdot|s, a)\bigr\|_1\Bigr] \cdot\biggl\{\EE_{\rho_{\pi_{t-1}}}\biggl[\Bigl(\frac{d\rho_{q_{t}}}{d\rho_{\pi_{t-1}}}(s)\Bigr)^2\biggr]\biggr\}^{1/2}\notag\\
&\leq \kappa\cdot\frac{22C^2\ln(|\mathcal{F}|/\delta)}{H}.
\#

Now we can bound \eqref{qtminus} by
\#\label{eq::decom_fq}
J(q_t) - J(\pi_{t-1}) \geq \Delta(t) - (1+\kappa)\cdot\frac{22\gamma C^2\ln(|\mathcal{F}|/\delta)}{(1 - \gamma) H}.
\#

The second step of the proof is to characterize the performance difference between $J(\pi_t)$ and $J(q_t)$.

From the Performance Difference Lemma \ref{pd_lemma}, we obtain
\#
J(q_t) - J(\pi_t) &= \frac{1}{1 - \gamma}\cdot \EE_{(s, a)\sim\rho_{q_t}}\bigl[A^{f^*}_{\pi_t}(s, a)\bigr]\notag\\
&= \frac{1}{1 - \gamma}\cdot \EE_{s\sim \nu_{q_t}}\Bigl[\EE_{a\sim q_t}\bigl[A^{f^*}_{\pi_t}(s, a)\bigr]\Bigr]\notag\\
&= \frac{1}{1 - \gamma}\cdot \EE_{s\sim \nu_{q_t}}\Bigl[\EE_{a\sim q_t}\bigl[A^{f^*}_{\pi_t}(s, a)\bigr] - \EE_{a\sim \pi_t}\bigl[A^{f^*}_{\pi_t}(s, a)\bigr]\Bigr],
\#
where recall that $\iota := \max_{s, a}|A^{f^*}_{\pi}(s, a)|$ and the third equality holds due to $\EE_{a\sim \pi_t}\bigl[A^{f^*}_{\pi_t}(s, a)\bigr] = 0$ for any $s$.

By the definition of the total variation distance, we can further bound the absolute difference as
\#
|J(q_t) - J(\pi_t)|\leq \frac{2\eta\iota}{1 - \gamma},
\#

Thus, we have $J(\pi_t) - J(q_t) \geq -2\eta\iota / (1 - \gamma)$ and similarly $J(q_{t-1}) - J(\pi_{t-1}) \geq -2\eta\iota / (1 - \gamma)$. Combining with \eqref{eq::decom_fq} gives us the iterative improvement bound as follows:
\#
J(\pi_t) - J(\pi_{t - 1}) &= J(\pi_t) - J(q_t) + J(q_{t}) - J(\pi_{t - 1})\notag\\
&\geq\Delta(t) - (1+\kappa)\cdot\frac{22\gamma C^2\ln(|\mathcal{F}|/\delta)}{(1 - \gamma)H} - \frac{2\eta\iota}{1 - \gamma}.
\#
\end{proof}

\subsection{Proof of Theorem \ref{near_opt_bound}}
\label{proof_regret}
\begin{proof}
We are interested in the expected regret defined as ${\rm BayesRegret}(T, \pi, \phi) := \EE[\sum_{t=1}^T \mathfrak{R}_t]$, where $\mathfrak{R}_t = V^{f^*}_{\pi^*} - V^{f^{*}}_{\pi_t}$.

Recall the definition of the reactive policy $\pi_t$ in CDPO (i.e. \eqref{conserv_update}) and the imagined best-performing policy $\pi_{f_t}$ under a sampled model $f_t$, \ie, $\pi_{f_t} = \max_{\pi} V_{\pi}^{f_t}$.

From the Posterior Sampling Lemma, we know that if $\psi$ is the distribution of $f^*$, then for any sigma-algebra $\sigma(\mathcal{H}_t)$-measurable function $g$,
\#
\EE[g(f^*)\,|\,\mathcal{H}_t] = \EE[g(f_t)\,|\,\mathcal{H}_t].
\#

The PS Lemma together with the law of total expectation gives us
\#\label{star_t}
\EE[V^{f^*}_{\pi^*} - V^{f_t}_{\pi_{f_t}}] = 0,
\#
where the equality holds since the true $f^*$ and the sampled $f_t$ are identically distributed when conditioned on $\mathcal{H}_t$. Therefore, we obtain the expected regret for CDPO as
\#\label{eq::cdpo_bayes}
{\rm BayesRegret}(T, \pi, \phi) &= \sum_{t=1}^T\EE[V^{f_t}_{\pi_{f_t}} - V^{f_t}_{\pi_t} +  V^{f_t}_{\pi_t} - V^{f^{*}}_{\pi_t}]\notag\\
&= \sum_{t=1}^T\EE[V^{f_t}_{\pi_{f_t}} - V^{\Tilde{f}_t}_{\pi_{f_t}} +  V^{\Tilde{f}_t}_{\pi_{f_t}} - V^{f_t}_{\pi_t} +  V^{f_t}_{\pi_t} - V^{f^{*}}_{\pi_t}]\notag\\
&\leq \sum_{t=1}^T\EE[V^{f_t}_{\pi_{f_t}} - V^{\Tilde{f}_t}_{\pi_{f_t}} +  V^{\Tilde{f}_t}_{q_t} - V^{f_t}_{q_t} +  V^{f_t}_{\pi_t} - V^{f^{*}}_{\pi_t}],
\#
where the inequality follows from the greediness of $q_t$ and the optimality of $\pi_t$ within a trust-region centered around $q_t$,i.e.,  $V^{\Tilde{f}_t}_{\pi_{f_t}} \leq V^{\Tilde{f}_t}_{q_t}$ for any $\pi_{f_t}$ and $V^{f_t}_{\pi_t} \geq V^{f_t}_{q_t}$. 

From the Simulation Lemma \ref{sim_lemma}, we have the bound of $\EE\Bigl[\bigl|V^{f_t}_{\pi} - V^{\Tilde{f}_t}_{\pi}\bigr|\Bigr]$ for any policy $\pi$ as follows:
\#
\EE\Bigl[\bigl|V^{f_t}_{\pi} - V^{\Tilde{f}_t}_{\pi}\bigr|\Bigr] &= \gamma\EE\Bigl[\bigl|\EE_{(s, a)\sim\Tilde{\rho}_\pi}[(f_t(\cdot|s, a) - \Tilde{f}_t(\cdot|s, a))\cdot V_\pi^{f_t}(s, a)]\bigr|\Bigr]\notag\\
&\leq \gamma\EE\biggl[\Bigl|\EE_{(s, a)\sim\Tilde{\rho}_\pi}\bigl[L_t\cdot\|f_t(s, a) - \Tilde{f}_t(s, a)\|_2\bigr]\Bigr|\biggr],
\#
where the first equation follows from Lemma \ref{sim_lemma} and $\Tilde{\rho}_\pi$ is the state-action visitation measure under model $\Tilde{f}_t$, the second inequality follows the simulation property of continuous distribution and the Lipschitz value function assumption.

We define the event $A=\biggl\{\Tilde{f}_t\in\bigcap\limits_{t}\mathcal{F}_t, f_t\in\bigcap\limits_{t}\mathcal{F}_t\biggr\}$. Recall that the model is bounded by $\lVert f \rVert_2\leq C$. Then we can reduce the expected regret to a sum of set widths:
\#\label{eq::plug_singletime}
\EE\bigl[V^{f_t}_{\pi} - V^{\Tilde{f}_t}_{\pi}\bigr] &\leq \gamma\EE\biggl[\Bigl|\EE_{(s, a)\sim\Tilde{\rho}_\pi}\bigl[\EE[L_t|A]\omega_t(s, a) + \bigl(1 - \mathbb{P}(A)\bigr)C \bigr]\Bigr|\biggr].
\#
We can further know from the construction of the confidence set (c.f. Lemma \ref{set_high_prob}) that $\mathbb{P}\biggl(f^*\in\bigcap\limits_{t}\mathcal{F}_t\biggr)\geq 1 - 2\delta$ and $\mathbb{P}(A) \geq 1 - 2\delta$ since $f_t$, $f^*$ are identically distributed and $\mathbb{P}\bigl(\Tilde{f}_t\in\mathcal{F}_t\bigr) = 1$ as $\mathcal{F}_t$ is centered at the least squares model for all $t$.

Besides, we have for
\#
\EE[L_t | A] \leq \frac{L_t}{P(A)} \leq \frac{L_t}{1-2\delta}.
\#

Plugging into \eqref{eq::plug_singletime}, we have
\#
\EE\bigl[V^{f_t}_{\pi} - V^{\Tilde{f}_t}_{\pi}\bigr] &\leq \gamma\EE\biggl[\Bigl|\EE_{(s, a)\sim\Tilde{\rho}_\pi}\bigl[L_t / (1-2\delta) \omega_t(s, a) + 2\delta C \bigr]\Bigr|\biggr]\notag\\
&\leq \gamma\EE\biggl[\frac{L_t}{1-2\delta}\cdot \Bigl|\EE_{(s, a)\sim\Tilde{\rho}_\pi}\bigl[\omega_t(s, a)\bigr]\Bigr|\biggr] + 2\gamma\delta C.
\#

Summing over $T$ iterations gives us
\#
\sum_{t=1}^T \EE\bigl[V^{f_t}_{\pi} - V^{\Tilde{f}_t}_{\pi}\bigr] \leq \gamma\sum_{t=1}^T\EE\biggl[\frac{L_t}{1-2\delta}\cdot \Bigl|\EE_{(s, a)\sim\Tilde{\rho}_\pi}\bigl[\omega_t(s, a)\bigr]\Bigr|\biggr] + 2\gamma\delta C T.
\#

By setting $\delta = 1/(2T)$, we obtain
\#\label{forallpi}
\sum_{t=1}^T \EE\bigl[V^{f_t}_{\pi} - V^{\Tilde{f}_t}_{\pi}\bigr] &\leq \frac{\gamma LT}{T-1}\sum_{t=1}^{T}\EE_{\Tilde{\rho}_\pi}[\omega_{t}(s,a)] + \gamma C\notag\\
&\leq \frac{\gamma LT}{T-1}\cdot\left(1 + \frac{1}{1 - \gamma}Cd_{E} + 4\sqrt{T d_E\beta_{T}(1 / (2T),\alpha)}\right) + \gamma C,
\#
where the last inequality follows from Lemma \ref{sum_of_width} to bound the sum of the set width. We denote $d_{E} := \dim_{E}(\mathcal{F},T^{-1})$ for notation simplicity.

Since \eqref{forallpi} holds for all policy $\pi$, we have the bound for $\EE[V^{f_t}_{\pi_{f_t}} - V^{\Tilde{f}_t}_{\pi_{f_t}}]$ and the bound for $\EE[V^{\Tilde{f}_t}_{q_t} - V^{f_t}_{q_t}]$. What remains in the expected regret \eqref{eq::cdpo_bayes} is the $\EE[V^{f_t}_{\pi_t} - V^{f^{*}}_{\pi_t}]$ term, which can be bounded similarly.

Specifically, we define another event $B=\biggl\{f^*\in\bigcap\limits_{t}\mathcal{F}_t,f_t\in\bigcap\limits_{t}\mathcal{F}_t\biggr\}$. Since by construction $\mathbb{P}\biggl(f^*\in\bigcap\limits_{t}\mathcal{F}_t\biggr)\geq 1 - 2\delta$ and $\mathbb{P}\biggl(f_t\in\bigcap\limits_{t}\mathcal{F}_t\biggr)\geq 1 - 2\delta$, we have $\mathbb{P}(B) \geq 1 - 4\delta$ via a union bound. This implies the following bound

\#\label{last_term}
\sum_{t=1}^T \EE\bigl[V^{f_t}_{\pi} - V^{f^*}_{\pi}\bigr] &\leq\gamma\sum_{t=1}^T\EE\biggl[\frac{L_t}{1-4\delta}\cdot \Bigl|\EE\bigl[\omega_t(s, a)\bigr]\Bigr|\biggr] + 4\gamma\delta C T\notag\\
&\leq \frac{\gamma LT}{T-2}\sum_{t=1}^{T}\EE_{\rho_\pi}[\omega_{t}(s,a)] + 2\gamma C\notag\\
&\leq \frac{\gamma LT}{T-2}\cdot\left(1 + \frac{1}{1 - \gamma}Cd_{E} + 4\sqrt{T d_E\beta_{T}(1 / (2T),\alpha)}\right) + 2\gamma C,
\#
where the second inequality follows from the choice of $\delta$, i.e., $\delta = 1/(2T)$.

Plugging \eqref{forallpi} and \eqref{last_term} into \eqref{eq::cdpo_bayes}, we obtain the expected regret as
\#
{\rm BayesRegret}(T, \pi, \phi) &\leq \sum_{t=1}^T\EE[V^{f_t}_{\pi_{f_t}} - V^{\Tilde{f}_t}_{\pi_{f_t}} +  V^{\Tilde{f}_t}_{q_t} - V^{f_t}_{q_t} +  V^{f_t}_{\pi_t} - V^{f^{*}}_{\pi_t}]\notag\\
&\leq \Bigl(\frac{2\gamma LT}{T-1} + \frac{\gamma LT}{T-2}\Bigr) \cdot\left(1 + \frac{1}{1 - \gamma} C d_E + 4\sqrt{T d_E\beta_{T}(1 / (2T),\alpha)}\right) + 4\gamma C\notag\\
&= \frac{\gamma T(3T - 5)L}{(T-1)(T-2)}\cdot\left(1 + \frac{1}{1 - \gamma} C d_E + 4\sqrt{T d_E\beta_{T}(1 / (2T),\alpha)}\right) + 4\gamma C.
\#

By setting $\alpha = 1 / (T^2)$ and $\delta = 1/(2T)$ in Lemma \ref{set_high_prob}, we have the following confidence parameter that can guarantee that $f^*$ is contained in the confidence set with high probability:
\$
\beta_T(1 /(2T),1/(T^2)) = 8\sigma^2\log\Bigl(2N\bigl(\mathcal{F}, 1 / (T^2), \lVert\cdot\rVert_2\bigr)T\Bigr) + 2\bigl(8C + \sqrt{8\sigma^2 \log(8T^3)}\,\bigr)/T,
\$
where recall that $N\bigl(\mathcal{F}, \alpha, \lVert\cdot\rVert_2\bigr)$ is the $\alpha$-covering number of $\mathcal{F}$ with respect to the $\|\cdot\|_2$-norm. 
\end{proof}

\subsection{Proof of Theorem \ref{thm_connection}}
\begin{proof}
Denote the \textit{imagined} optimal policy $\pi_{f_t}$ under a sampled model $f_t$ as $\pi_{f_t} = \max_{\pi} V_{\pi}^{f_t}$. For PSRL, its expected regret can be decomposed as
\#\label{eq::cdpo_bayes}
{\rm BayesRegret}(T, \pi^{\text{PSRL}}, \phi) &= \sum_{t=1}^T\EE[V^{f^{*}}_{\pi^*} - V^{f^{*}}_{\pi_t}]\notag\\
&=\sum_{t=1}^T\EE[V^{f^{*}}_{\pi^*} - V^{f^{*}}_{\pi_{f_t}}]\notag\\
&=\sum_{t=1}^T\EE[V^{f_t}_{\pi_{f_t}} - V^{f^{*}}_{\pi_{f_t}}],
\#
where the second equality holds since the PSRL policy $\pi_t := \pi_{f_t}$ for a sampled $f_t$. The third equality follows from \eqref{star_t}, obtained by the Posterior Sampling Lemma and the law of total expectation. 

Similar with the proof in \ref{proof_regret}, we obtain from the Simulation Lemma \ref{sim_lemma} that
\#
\EE\Bigl[\bigl|V^{f_t}_{\pi_{f_t}} - V^{f^{*}}_{\pi_{f_t}}\bigr|\Bigr] &= \gamma\EE\Bigl[\bigl|\EE_{(s, a)\sim\rho_\pi}[(f_t(\cdot|s, a) - f^*(\cdot|s, a))\cdot V^\pi(s, a)]\bigr|\Bigr]\notag\\
&\leq \gamma\EE\biggl[\Bigl|\EE_{(s, a)\sim\rho_\pi}\bigl[L_t\cdot\|f_t(s, a) - f^*(s, a)\|_2\bigr]\Bigr|\biggr],
\#
where the equality follows from Lemma \ref{sim_lemma} and the inequality follows the simulation property of continuous distributions and the Lipschitz value function assumption.

Define the event $E=\biggl\{f^*\in\bigcap\limits_{t}\mathcal{F}_t, f_t\in\bigcap\limits_{t}\mathcal{F}_t\biggr\}$. The expected regret can be reduced to the sum of set widths:
\#\label{eq::plug_singletime}
\EE\bigl[V^{f_t}_{\pi} - V^{\Tilde{f}_t}_{\pi}\bigr] &\leq \gamma\EE\biggl[\Bigl|\EE_{(s, a)\sim\rho_\pi}\bigl[\EE[L_t|E]\omega_t(s, a) + \bigl(1 - \mathbb{P}(E)\bigr)C \bigr]\Bigr|\biggr]\notag\\
&\leq \gamma\EE\biggl[\Bigl|\EE_{(s, a)\sim\rho_\pi}\bigl[L_t / (1-4\delta) \omega_t(s, a) + 4\delta C \bigr]\Bigr|\biggr]\notag\\
&\leq \gamma\EE\biggl[\frac{L_t}{1-4\delta}\cdot \Bigl|\EE_{(s, a)\sim\rho_\pi}\bigl[\omega_t(s, a)\bigr]\Bigr|\biggr] + 4\gamma\delta C,
\#
where the second inequality follows from the construction of confidence set that $\mathbb{P}\biggl(f^*\in\bigcap\limits_{t}\mathcal{F}_t\biggr)\geq 1 - 2\delta$ and thus $\mathbb{P}(E) \geq 1 - 4\delta$. 

Therefore, the PSRL expected regret can be bounded by
\#
{\rm BayesRegret}(T, \pi^{\text{PSRL}}, \phi) &\leq \gamma \frac{L}{1-4\delta} \sum_{t=1}^T\EE\bigl[\omega_t\bigr] + 4 T\gamma\delta C,
\#
From the proof in \ref{proof_regret}, the expected regret of CDPO is bounded by
\#
{\rm BayesRegret}(T, \pi^{\text{CDPO}}, \phi) &\leq \gamma \frac{L}{1-4\delta} \sum_{t=1}^T 3\EE\bigl[\omega_t\bigr] + 8 T\gamma\delta C,
\#
The claim is thus established. 
\end{proof}

\section{Useful Lemmas}
\begin{lemma}[Simulation Lemma]
\label{sim_lemma}
For any policy $\pi$ and transition $f_1$, $f_2$, we have
\#
V^{f_1}_{\pi} - V^{f_2}_{\pi} = \gamma(I - \gamma f^\pi_2)^{-1}(f_1 - f_2)V^{f_1}_{\pi}.
\#
\end{lemma}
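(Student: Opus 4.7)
The plan is to prove the Simulation Lemma via the classical ``Bellman equation add-and-subtract'' trick, working entirely in operator form on the state (or state-action) space and making no assumption beyond the contraction factor $\gamma\in[0,1)$. Let $r^\pi$ denote the expected immediate reward vector under $\pi$ and, for $i\in\{1,2\}$, let $f_i^\pi$ denote the transition operator on the value-function space induced by policy $\pi$ and model $f_i$ (i.e.\ $(f_i^\pi V)(s) = \mathbb{E}_{a\sim\pi(\cdot|s),\,s'\sim f_i(\cdot|s,a)}[V(s')]$). The starting point is the pair of Bellman fixed-point equations $V^{f_1}_\pi = r^\pi + \gamma f_1^\pi V^{f_1}_\pi$ and $V^{f_2}_\pi = r^\pi + \gamma f_2^\pi V^{f_2}_\pi$; subtracting them cancels the common reward term and isolates the dependence on the dynamics.

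The central step is to rewrite the resulting expression $\gamma f_1^\pi V^{f_1}_\pi - \gamma f_2^\pi V^{f_2}_\pi$ by adding and subtracting $\gamma f_2^\pi V^{f_1}_\pi$, which splits it into
\begin{equation*}
V^{f_1}_\pi - V^{f_2}_\pi \;=\; \gamma (f_1^\pi - f_2^\pi) V^{f_1}_\pi \;+\; \gamma f_2^\pi (V^{f_1}_\pi - V^{f_2}_\pi).
\end{equation*}
The first summand accumulates the one-step simulation error between $f_1$ and $f_2$ evaluated against the value $V^{f_1}_\pi$, while the second summand merely rolls forward the discrepancy under $f_2$. Collecting the $V^{f_1}_\pi - V^{f_2}_\pi$ terms on the left gives $(I - \gamma f_2^\pi)(V^{f_1}_\pi - V^{f_2}_\pi) = \gamma (f_1^\pi - f_2^\pi) V^{f_1}_\pi$, and multiplying by $(I - \gamma f_2^\pi)^{-1}$ yields exactly the claimed identity once one identifies $(f_1-f_2)$ in the statement with the operator difference $f_1^\pi - f_2^\pi$ acting under $\pi$.

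The only subtlety I would need to address is the well-definedness of the inverse $(I - \gamma f_2^\pi)^{-1}$. Since $f_2^\pi$ is a stochastic (Markov) operator, its induced operator norm is at most $1$, so $\|\gamma f_2^\pi\| \leq \gamma < 1$ and the Neumann series $\sum_{h=0}^{\infty}(\gamma f_2^\pi)^h$ converges and equals $(I - \gamma f_2^\pi)^{-1}$. This both justifies the inversion step and gives the natural probabilistic reading of the lemma: expanding the inverse recovers the familiar series form $V^{f_1}_\pi - V^{f_2}_\pi = \gamma \sum_{h=0}^{\infty} \gamma^h (f_2^\pi)^h (f_1^\pi - f_2^\pi) V^{f_1}_\pi$, i.e.\ the discounted expectation under $f_2$-rollouts of the one-step error measured against $V^{f_1}_\pi$, which is the form already invoked in the proofs of Theorems~\ref{thm::pii}, \ref{near_opt_bound}, and \ref{thm_connection}. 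No real obstacle arises beyond being careful with the operator algebra and noting that the same argument works when the state space is continuous, provided one interprets $f_i^\pi$ as an integral operator.
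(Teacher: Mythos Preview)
Your proposal is correct and follows essentially the same approach as the paper: both arguments start from the Bellman fixed-point equations, exploit the invertibility of $I-\gamma f_2^\pi$ (the paper invokes $\gamma<1$ directly, you via the Neumann series), and arrive at the identity by elementary operator algebra. The only cosmetic difference is that the paper substitutes the closed form $V^{f_2}_\pi=(I-\gamma f_2^\pi)^{-1}r_\pi$ and then factors, whereas you subtract the two Bellman equations and add-and-subtract $\gamma f_2^\pi V^{f_1}_\pi$; these are equivalent manipulations.
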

\begin{proof}
Denote the expected reward under policy $\pi$ as $r_\pi$. Let $f^\pi$ be the transition matrix on state-action pairs induced by policy $\pi$, defined as
$f^\pi_{(s, a), (s', a')}:= P(s'|s, a)\pi(a'|s')$.

Then we have
\$
V_\pi = r_\pi + \gamma f^\pi V_\pi.
\$
Since $\gamma<1$, it is easy to verify that $I - \gamma f^\pi$ is full rank and thus invertible. Therefore, we can write
\#
V_\pi = (I - \gamma f^\pi)^{-1} r_\pi.
\#
Therefore, we conclude the proof by
\$
V^{f_1}_{\pi} - V^{f_2}_{\pi} &= V^{f_1}_{\pi} - (I - \gamma f^\pi_2)^{-1} r_\pi\notag\\
&= (I - \gamma f^\pi_2)^{-1}\cdot\bigl((I - \gamma f^\pi_2) - (I - \gamma f^\pi_1)\bigr)V^{f_1}_{\pi}\notag\\
&= \gamma(I - \gamma f^\pi_2)^{-1}(f^\pi_1 - f^\pi_2)V^{f_1}_{\pi}\notag\\
&= \gamma(I - \gamma f^\pi_2)^{-1}(f_1 - f_2)V^{f_1}_{\pi},
\$
where the second equality follows from the Bellman equation.
\end{proof}

\begin{lemma}[Performance Difference Lemma]
\label{pd_lemma}
For all policies $\pi$, $\pi^*$ and distribution $\mu$ over $\mathcal{S}$, we have
\#
J(\pi) - J(\pi') = \frac{1}{1 - \gamma}\cdot \EE_{(s, a)\sim\sigma_\pi}[A^{\pi'}(s, a)]. 
\#
\end{lemma}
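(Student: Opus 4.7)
The plan is to prove the Performance Difference Lemma by a standard telescoping argument along trajectories generated under $\pi$, inserting $V^{f^*}_{\pi'}$ at every step so that the per-step contribution collapses to an advantage. The key observation is that $J(\pi') = \EE_{s_0\sim\zeta}[V^{f^*}_{\pi'}(s_0)]$ depends only on the initial state, so we are free to write it as an expectation over trajectories sampled under $\pi$ without changing its value.

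Concretely, I would first expand
\#
J(\pi) - J(\pi') = \EE_{\pi, f^*}\Bigl[\sum_{h=0}^\infty \gamma^h r(s_h, a_h)\Bigr] - \EE_{\pi, f^*}\bigl[V^{f^*}_{\pi'}(s_0)\bigr],
\#
where the outer expectation is over $s_0\sim\zeta$, $a_h\sim\pi(\cdot|s_h)$, $s_{h+1}\sim f^*(\cdot|s_h, a_h)$. Next, I apply the telescoping identity
\#
-V^{f^*}_{\pi'}(s_0) = \sum_{h=0}^\infty \bigl(\gamma^{h+1} V^{f^*}_{\pi'}(s_{h+1}) - \gamma^h V^{f^*}_{\pi'}(s_h)\bigr),
\#
valid because $\gamma^h V^{f^*}_{\pi'}(s_h)\to 0$ for uniformly bounded $V^{f^*}_{\pi'}$ and $\gamma<1$. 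Grouping term-by-term inside the trajectory sum gives a summand of the form $\gamma^h\bigl(r(s_h, a_h) + \gamma V^{f^*}_{\pi'}(s_{h+1}) - V^{f^*}_{\pi'}(s_h)\bigr)$. Taking conditional expectation with respect to $s_{h+1}\sim f^*(\cdot|s_h, a_h)$, the bracket is exactly $Q^{f^*}_{\pi'}(s_h, a_h) - V^{f^*}_{\pi'}(s_h) = A^{f^*}_{\pi'}(s_h, a_h)$. Finally, I rewrite the discounted trajectory expectation in terms of the $(1-\gamma)$-normalized state-action visitation measure $\rho_\pi$ defined in \eqref{visitation}, which produces the $1/(1-\gamma)$ factor and yields the claimed identity.

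The main obstacle is really just bookkeeping: justifying the swap between the infinite sum and the expectation (straightforward by dominated convergence since $|r|$ is bounded and the geometric series converges), verifying the vanishing of the telescoping tail, and correctly absorbing the factor $(1-\gamma)$ when converting $\EE_{\pi, f^*}\bigl[\sum_h \gamma^h (\cdot)\bigr]$ into $\EE_{(s,a)\sim\rho_\pi}[\cdot]/(1-\gamma)$. I would also briefly note that the lemma's $\sigma_\pi$ coincides with the visitation $\rho_\pi$ introduced in \eqref{visitation}, and that the advantage $A^{\pi'}$ here refers to $A^{f^*}_{\pi'}$ under the true dynamics, matching the objective $J$ used throughout the paper.
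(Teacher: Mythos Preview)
Your proposal is correct and follows essentially the same telescoping argument as the paper's proof: both insert $V^{f^*}_{\pi'}(s_h)$ along a $\pi$-generated trajectory, collapse each step to $r(s_h,a_h)+\gamma V^{f^*}_{\pi'}(s_{h+1})-V^{f^*}_{\pi'}(s_h)$, invoke the tower property to obtain $A^{f^*}_{\pi'}(s_h,a_h)$, and then pass to the normalized visitation measure. Your added remarks on dominated convergence, the vanishing telescoping tail, and the identification $\sigma_\pi=\rho_\pi$, $A^{\pi'}=A^{f^*}_{\pi'}$ are welcome clarifications but do not change the route.
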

\begin{proof}
This lemma is widely adopted in RL. Proof can be found in various previous works, e.g. Lemma 1.16 in \citep{agarwal2019reinforcement}.

Let $\mathbb{P}^\pi(\tau|s_0 = s)$ denote the probability of observing trajectory $\tau$ starting at state $s_0$ and then following $\pi$. Then the value difference can be written as
\$
V_{\pi}^{f^*}(s) - V_{\pi'}^{f^*}(s) &= \EE_{\tau\sim\mathbb{P}^\pi(\cdot|s_0 = s)}\Bigl[\sum_{h=0}^\infty \gamma^h r(s_h, a_h)\Bigr] - V_{\pi'}^{f^*}(s)\notag\\
&= \EE_{\tau\sim\mathbb{P}^\pi(\cdot|s_0 = s)}\Bigl[\sum_{h=0}^\infty \gamma^h \bigl(r(s_h, a_h) + V_{\pi'}^{f^*}(s_h) - V_{\pi'}^{f^*}(s_h)\bigr)\Bigr] - V_{\pi'}^{f^*}(s)\notag\\
&= \EE_{\tau\sim\mathbb{P}^\pi(\cdot|s_0 = s)}\Bigl[\sum_{h=0}^\infty \gamma^h \bigl(r(s_h, a_h) + \gamma V_{\pi'}^{f^*}(s_{h+1}) - V_{\pi'}^{f^*}(s_h)\bigr)\Bigr]\notag\\
\$
Following the law of iterated expectations, we obtain
\#
V_{\pi}^{f^*}(s) - V_{\pi'}^{f^*}(s)&= \EE_{\tau\sim\mathbb{P}^\pi(\cdot|s_0 = s)}\Bigl[\sum_{h=0}^\infty \gamma^h \bigl(r(s_h, a_h) + \gamma \EE[V_{\pi'}^{f^*}(s_{h+1})|s_h, a_h] - V_{\pi'}^{f^*}(s_h)\bigr)\Bigr]\notag\\
&= \EE_{\tau\sim\mathbb{P}^\pi(\cdot|s_0 = s)}\Bigl[\sum_{h=0}^\infty \gamma^h\bigl(Q_{\pi'}^{f^*}(s_h, a_h)- V_{\pi'}^{f^*}(s_h)\bigr)\Bigr]\notag\\
&= \EE_{\tau\sim\mathbb{P}^\pi(\cdot|s_0 = s)}\Bigl[\sum_{h=0}^\infty \gamma^h A_{\pi'}^{f^*}(s_h, a_h)\Bigr],
\#
where the third equation rearranges terms in the summation via telescoping, and the fourth equality follows from the law of total expectation.

From the definition of objective $J(\pi)$ in \eqref{obj}, we obtain
\#
J(\pi) - J(\pi') &= \EE_{s_0\sim\zeta}[V^{f^*}_{\pi}(s_0) - V^{f^*}_{\pi'}(s_0)] \notag\\
&= \frac{1}{1 - \gamma}\EE_{(s, a)\sim\sigma_\pi}[A^{\pi'}(s, a)].
\#
\end{proof}

\begin{lemma}[Performance Difference and Model Error]
\label{lemma_diff}
For any two policies $\pi_1$ and $\pi_2$, it holds that
\$
J(\pi_1) - J(\pi_2) &= \EE_{s\sim\zeta}\bigl[V_{\pi_1}^{f}(s) - V_{\pi_2}^{f}(s)\bigr]\notag\\
& - \frac{\gamma}{2(1 - \gamma)}\biggl(\EE_{\rho_{\pi_1}}\Bigl[\bigl\|f(\cdot|s, a) - f^*(\cdot|s, a)\bigr\|_1\Bigr] + \EE_{\rho_{\pi_2}}\Bigl[\bigl\|f(\cdot|s, a) - f^*(\cdot|s, a)\bigr\|_1\Bigr]\biggr).
\$
\end{lemma}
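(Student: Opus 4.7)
The strategy is a standard add-and-subtract decomposition followed by two applications of the simulation lemma already established in the excerpt (Lemma \ref{sim_lemma}). I would first write
\[
J(\pi_1) - J(\pi_2) = \EE_{s\sim\zeta}\bigl[V_{\pi_1}^{f^*}(s) - V_{\pi_2}^{f^*}(s)\bigr]
\]
and insert $\pm V_{\pi_1}^f$ and $\pm V_{\pi_2}^f$ to split the difference into three pieces:
\[
\EE_{s\sim\zeta}\bigl[V_{\pi_1}^{f}(s) - V_{\pi_2}^{f}(s)\bigr]
 \;+\; \EE_{s\sim\zeta}\bigl[V_{\pi_1}^{f^*}(s) - V_{\pi_1}^{f}(s)\bigr]
 \;+\; \EE_{s\sim\zeta}\bigl[V_{\pi_2}^{f}(s) - V_{\pi_2}^{f^*}(s)\bigr].
\]
The first piece is exactly the principal term in the claim; the other two pieces are cross terms that must be controlled by the model mismatch between $f$ and $f^*$.

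For each cross term I would invoke Lemma \ref{sim_lemma} in the form $V_{\pi}^{f^*} - V_{\pi}^{f} = \gamma(I - \gamma (f^*)^{\pi})^{-1}(f^* - f)V_{\pi}^{f}$, and expand $(I - \gamma (f^*)^\pi)^{-1} = \sum_{h\geq 0}\gamma^h ((f^*)^\pi)^h$. Taking the initial-state expectation against $\zeta$ turns the weighting $\sum_h \gamma^h \Pr(s_h=s,a_h=a)$ into the normalized visitation measure $\rho_{\pi}/(1-\gamma)$ used in \eqref{visitation}, exactly as in the proof of the Performance Difference Lemma (Lemma \ref{pd_lemma}). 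Pulling out $\|V_{\pi}^{f}\|_{\infty}$ via H\"older and bounding the inner contraction by the $\ell_1$ model error with the factor $\tfrac{1}{2}$ conversion to total variation, I should obtain
\[
\bigl|\EE_{s\sim\zeta}[V_{\pi}^{f^*}(s) - V_{\pi}^{f}(s)]\bigr|
 \;\le\; \frac{\gamma}{2(1-\gamma)} \,\EE_{\rho_{\pi}}\!\bigl[\|f(\cdot|s,a) - f^*(\cdot|s,a)\|_1\bigr]
\]
for $\pi\in\{\pi_1,\pi_2\}$. A triangle inequality on the three-piece decomposition then yields a \emph{lower bound} matching the right-hand side of the lemma; I read the stated $=$ as $\ge$, which is consistent with how the lemma is actually used in the proof of Theorem \ref{thm::pii}.

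The main delicate point will be the bookkeeping in the second step. First, I need to choose the correct of the two symmetric forms of the simulation lemma so that the resulting visitation measure is the real-MDP measure $\rho_{\pi}$ (under $f^*$) rather than the model visitation $\tilde\rho_{\pi}$ under $f$; this dictates whether the inverse involves $(f^*)^{\pi}$ or $f^{\pi}$ and therefore which weighting appears. Second, the prefactor $\gamma/(2(1-\gamma))$ implicitly uses the convention $\|V\|_{\infty}\le 1$ (equivalently, a rescaled reward); without it the standard statement would produce an extra $1/(1-\gamma)$. I would state this normalization explicitly up front and then carry the constants through cleanly. Everything else is mechanical manipulation of the inserted telescoping sum.
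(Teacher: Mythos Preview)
Your proposal is correct and follows essentially the same route the paper sketches: the paper's own ``proof'' of Lemma~\ref{lemma_diff} merely says to combine the Performance Difference Lemma and the Simulation Lemma and defers details to \cite{ross2012agnostic,sun2018dual}, which is exactly your add--subtract decomposition plus two applications of Lemma~\ref{sim_lemma}. Your observation that the stated ``$=$'' should really be ``$\geq$'' (and that an implicit $\|V\|_\infty\le 1$ normalization is needed for the $\gamma/(2(1-\gamma))$ constant) is well taken and consistent with how the lemma is actually invoked in the proof of Theorem~\ref{thm::pii}.
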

\begin{proof}
The proof can be established by combining the Performance Difference Lemma and the Simulation Lemma. We refer to Corollary 3.1 in \citep{ross2012agnostic} or Lemma A.3 in \citep{sun2018dual} for a detailed proof.
\end{proof}

\begin{lemma}[Least Squares Generalization Bound]
\label{lemma::ls}
Given a dataset $\mathcal{H} = \{x_i, y_i\}_{i=1}^n$ where $x_i\in\mathcal{X}$ and $x_i, y_i\sim\nu$, and $y_i = f^*(x_i) + \epsilon_i$. Suppose $|y_i|\leq Y$ and $\epsilon_i$ is independently sampled noise. Given a function class $\mathcal{F}: \mathcal{X}\rightarrow [0, Y]$, we assume approximate realizable, i.e., $\min_{f\in\mathcal{F}}\EE_{x\sim\nu}\bigl[|f^*(x) - f(x)|^2\bigl]\leq \epsilon_{\text{approx}}$. Denote $\hat{f}$ as the least square solution, i.e., $\hat{f} = \argmin_{f\in\mathcal{F}}\sum_{i=1}^n \bigl(f(x_i) - y_i\bigr)^2$. With probability at least $1 - \delta$, we have
\#
\EE_{x\sim\nu}\Bigl[\bigl(\hat{f}(x) - f^*(x)\bigr)^2\Bigr]\leq \frac{22Y^2 \ln(|\mathcal{F}|/\delta)}{n} + 20\epsilon_{\text{approx}}.
\#
\end{lemma}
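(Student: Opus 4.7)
The plan is to follow the standard empirical process argument for least squares over a finite hypothesis class, where the key leverage is the self-bounding property that excess square loss has variance bounded by its mean. Let $\tilde{f} \in \mathcal{F}$ attain the approximate realizability $\EE[(\tilde{f}(x) - f^*(x))^2] \leq \epsilon_{\text{approx}}$, and for each $f \in \mathcal{F}$ introduce the loss-difference random variable $Z_i(f) := (f(x_i) - y_i)^2 - (\tilde{f}(x_i) - y_i)^2$. Expanding $(f - y)^2 - (\tilde{f} - y)^2 = (f - \tilde{f})(f + \tilde{f} - 2y)$, substituting $y_i = f^*(x_i) + \epsilon_i$, and invoking $\EE[\epsilon_i \mid x_i]=0$ (implicit in modeling $f^*$ as the regression function), one obtains the population identity
\[
\EE[Z(f)] = \EE[(f-f^*)^2] - \EE[(\tilde f - f^*)^2].
\]
Optimality of the ERM $\hat f$ then yields the empirical inequality $\tfrac{1}{n}\sum_i Z_i(\hat f) \leq 0$.

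Next I would derive the self-bounding variance inequality. Since all quantities are in $[0,Y]$, $Z_i(f)$ is uniformly bounded by $O(Y^2)$, and moreover
\[
\mathrm{Var}(Z(f)) \leq \EE[Z(f)^2] \leq 16 Y^2\, \EE[(f - \tilde f)^2] \leq 32 Y^2\bigl(\EE[(f-f^*)^2] + \epsilon_{\text{approx}}\bigr),
\]
using $|f+\tilde f - 2y| \leq 4Y$ together with the $L^2$ triangle inequality $(f - \tilde f)^2 \leq 2(f - f^*)^2 + 2(\tilde f - f^*)^2$. This is the crux: the variance of $Z(f)$ is controlled by its own mean up to a lower-order $\epsilon_{\text{approx}}$ term.

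With these ingredients, I would apply Bernstein's inequality to $\{Z_i(f)\}_{i=1}^n$ for each fixed $f$ and take a union bound over the finite class $\mathcal F$, obtaining with probability at least $1-\delta$ that, simultaneously for all $f\in\mathcal F$,
\[
\EE[Z(f)] - \tfrac{1}{n}\textstyle\sum_i Z_i(f) \leq \sqrt{\tfrac{2\,\mathrm{Var}(Z(f))\,\ln(|\mathcal F|/\delta)}{n}} + O\!\left(\tfrac{Y^2 \ln(|\mathcal F|/\delta)}{n}\right).
\]
Specializing to $f = \hat f$, substituting the variance bound, and writing $E := \EE[(\hat f - f^*)^2]$ and $K := \ln(|\mathcal F|/\delta)/n$, this reduces to an inequality of the shape $E - \epsilon_{\text{approx}} \leq \sqrt{64\, Y^2 K (E + \epsilon_{\text{approx}})} + O(Y^2 K)$. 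An AM-GM step $\sqrt{64 Y^2 K E} \leq \tfrac{E}{2} + 32 Y^2 K$ absorbs the $\sqrt{E}$ term into the left-hand side, after which solving for $E$ and splitting the cross term $\sqrt{64 Y^2 K \epsilon_{\text{approx}}}$ by a second AM-GM yields a bound of the form $E \leq c_1 Y^2 \ln(|\mathcal F|/\delta)/n + c_2 \epsilon_{\text{approx}}$.

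The main obstacle will be bookkeeping the absolute constants tightly so that the final coefficients land at $22$ and $20$ as stated; loose applications of AM-GM easily inflate them, and the cross term $\sqrt{Y^2 K \epsilon_{\text{approx}}}$ must be balanced so that it contributes only to the $Y^2 K$ prefactor and not to the $\epsilon_{\text{approx}}$ coefficient. A secondary subtlety worth flagging at the outset is that the identity $\EE[Z(f)] = \EE[(f-f^*)^2] - \EE[(\tilde f - f^*)^2]$ relies on $\EE[\epsilon_i \mid x_i]=0$, which should be made explicit since the lemma statement phrases the noise assumption loosely.
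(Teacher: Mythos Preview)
Your proposal is correct and follows exactly the approach the paper indicates: the paper does not give a self-contained proof but simply states that the result is standard, can be obtained via Bernstein's inequality together with a union bound over $\mathcal{F}$, and refers to Lemma~A.11 in \cite{agarwal2019reinforcement}. Your outline---comparing the ERM to the approximate realizer via the loss-difference $Z_i(f)$, using the self-bounding variance bound $\mathrm{Var}(Z(f)) \lesssim Y^2(\EE[(f-f^*)^2]+\epsilon_{\text{approx}})$, applying Bernstein plus a union bound, and then solving the resulting quadratic inequality via AM--GM---is precisely that argument, and your caveats about the mean-zero noise assumption and the constant bookkeeping are well placed.
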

\begin{proof}
The result is standard and can be proved by using the Bernstein’s inequality and union bound. Detailed proof can be found at Lemma A.11 in \citep{agarwal2019reinforcement}.
\end{proof}

\begin{lemma}[Confidence sets with high probability]
\label{set_high_prob}
If the control parameter $\beta_t(\delta,\alpha)$ is set to
\begin{equation}
\label{beta_defn}
\begin{aligned}
\beta_t(\delta,\alpha) = 8\sigma^2 \log(N(\mathcal{F},\alpha,\lVert\cdot\rVert_2)/\delta) + 2\alpha t \left(8C + \sqrt{8\sigma^2\log(4t^2/\delta)}\right),
\end{aligned}
\end{equation}
then for all $\delta>0$, $\alpha>0$ and $t\in\mathbb{N}$, the confidence set $\mathcal{F}_t =\mathcal{F}_t(\beta_t(\delta,\alpha))$ satisfies:
\begin{equation}
\begin{aligned}
P\Bigl(f^*\in\bigcap\limits_{t}\mathcal{F}_t\Bigr)\geq 1-2\delta.
\end{aligned}
\end{equation}
\end{lemma}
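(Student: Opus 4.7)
\textbf{Proof plan for Lemma \ref{set_high_prob}.} The plan is to show $f^*\in\mathcal{F}_t$ pointwise in $t$ with probability at least $1-2\delta/t^2$ and then collect these events by a union bound (using $\sum_{t\ge 1} 2/t^2 \le 2\pi^2/6$, or more conservatively to absorb into the constant $2\delta$). Fix $t$. By the definition $\hat{f}_t^{\mathrm{LS}} = \argmin_{f \in \mathcal F}\sum_{i<t}\|f(x_i) - y_i\|_2^2$ with $y_i = f^*(x_i) + \epsilon_i$, the basic inequality $\sum_i \|\hat f_t^{\mathrm{LS}}(x_i) - y_i\|_2^2 \le \sum_i \|f^*(x_i) - y_i\|_2^2$ rearranges to
\begin{equation*}
\|\hat f_t^{\mathrm{LS}} - f^*\|_{2,E_t}^2 \;\le\; 2\sum_{i<t}\langle \epsilon_i,\; \hat f_t^{\mathrm{LS}}(x_i)-f^*(x_i)\rangle.
\end{equation*}
So the task reduces to a uniform (over $\mathcal F$) martingale concentration bound on the right-hand side.

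\textbf{Uniform martingale control.} For a \emph{fixed} $g\in\mathcal F$, the process $M_i:=\langle\epsilon_i, g(x_i)-f^*(x_i)\rangle$ is a martingale difference sequence that is $\sigma\|g(x_i)-f^*(x_i)\|_2$-sub-Gaussian given the past, since $\epsilon_i$ is $\sigma$-sub-Gaussian. A standard self-normalized/Freedman-style bound (or a peeling argument over dyadic scales of $\|g-f^*\|_{2,E_t}^2$) yields, with probability at least $1-\delta'$,
\begin{equation*}
2\sum_{i<t}\langle \epsilon_i, g(x_i)-f^*(x_i)\rangle \;\le\; \tfrac{1}{2}\|g-f^*\|_{2,E_t}^2 + 8\sigma^2\log(1/\delta').
\end{equation*}
Set $\delta' = \delta/(t^2 N(\mathcal F,\alpha,\|\cdot\|_2))$ and take a union bound over an $\alpha$-cover $\mathcal F_\alpha$ of $\mathcal F$ of minimal cardinality. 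Every $g\in\mathcal F$ has a nearest representative $g_\alpha\in\mathcal F_\alpha$ with $\|g-g_\alpha\|_\infty\le\alpha$, and replacing $g$ by $g_\alpha$ perturbs $\sum_i\langle\epsilon_i,\cdot\rangle$ by at most $\alpha t\cdot\sqrt{8\sigma^2\log(4t^2/\delta)}$ on an additional high-probability event bounding $\max_i\|\epsilon_i\|_2$, and perturbs the regression targets $y_i$ by at most $2C\alpha t$ (using $\|f\|_2\le C$ and $\|s\|_2\le C$). Absorbing the quadratic term into the left-hand side gives, uniformly over $g\in\mathcal F$ and hence for $g=\hat f_t^{\mathrm{LS}}$,
\begin{equation*}
\|\hat f_t^{\mathrm{LS}} - f^*\|_{2,E_t}^2 \;\le\; 8\sigma^2\log(N(\mathcal F,\alpha,\|\cdot\|_2)/\delta') + 2\alpha t\bigl(8C + \sqrt{8\sigma^2\log(4t^2/\delta)}\bigr),
\end{equation*}
which is exactly $\beta_t(\delta,\alpha)$ once we substitute $\delta' \mapsto \delta/t^2$ and absorb the $\log t^2$ term into $\log(N(\mathcal F,\alpha,\|\cdot\|_2)/\delta)$ up to constants (as written in \eqref{beta_defn}).

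\textbf{Collection across $t$ and the main obstacle.} Under the event above, $f^*\in\mathcal F_t$ by the very definition of $\mathcal F_t$. Applying a union bound over $t\in\mathbb N$ with the $\delta/t^2$ allocation controls the total failure probability by $2\delta$, giving $\mathbb P(f^*\in\bigcap_t\mathcal F_t)\ge 1-2\delta$. The main technical obstacle is the uniform control step: one cannot just union-bound over the uncountable class $\mathcal F$, so the discretization via $N(\mathcal F,\alpha,\|\cdot\|_2)$ is essential, and the discretization error must be tracked carefully through \emph{both} the inner product $\langle\epsilon_i, g(x_i)-f^*(x_i)\rangle$ (where it couples with the noise tail $\sqrt{8\sigma^2\log(4t^2/\delta)}$) \emph{and} the perturbation of the least-squares targets (which brings in the $8C$ factor). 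Matching these to the exact form of $\beta_t(\delta,\alpha)$ in \eqref{beta_defn} is the delicate bookkeeping part; conceptually the proof is a standard covering-number plus sub-Gaussian martingale argument in the style of Russo--Van Roy and Osband--Van Roy.
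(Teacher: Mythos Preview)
Your proposal is correct and follows exactly the standard covering-number plus self-normalized sub-Gaussian martingale argument of Russo--Van Roy and Osband--Van Roy; the paper itself gives no proof and simply cites \cite{osband2014model}, Proposition~5, for which your sketch is an accurate reconstruction. The only cosmetic point is the bookkeeping around the $t^{2}$ factor in the failure-probability allocation, which in the original reference is handled slightly differently but yields the same $\beta_t$ up to the stated constants.
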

\begin{proof}
See \citep{osband2014model} Proposition 5 for a detailed proof.
\end{proof}

\begin{lemma}[Bound of Set Width Sum]
\label{sum_of_width}
If $\{\beta_t|t\in\mathbb{N}\}$ is nondecreasing with $\mathcal{F}_t = \mathcal{F}_t(\beta_t)$ and $\lVert f \rVert_2 \leq C$ for all $f\in\mathcal{F}$, then finite-horizon MDP we have
\begin{equation}
\begin{aligned}
\sum_{t=1}^{T}\sum_{h=1}^H\omega_{t}(s_h, a_h) \leq 1 + HC\dim_{E}(\mathcal{F},T^{-1}) + 4\sqrt{\dim_{E}(\mathcal{F},T^{-1})\beta_{T}T},
\end{aligned}
\end{equation}
where $\omega_t(s,a)=\sup_{\underline{f},\overline{f}\sim\mathcal{F}_t} \lVert \overline{f}(s,a) - \underline{f}(s,a)\rVert_2$.
\end{lemma}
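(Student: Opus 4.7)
The plan is to follow the standard eluder-dimension argument of Russo and Van Roy, specialised to the episodic RL setting with $TH$ observed state-action pairs. At a high level, one shows that widths exceeding a threshold $\epsilon$ are rare (via a pigeonhole-style counting argument driven by the definition of $\dim_E(\mathcal{F},\epsilon)$), while widths below the threshold contribute only a negligible amount. Summing the two contributions with $\epsilon = T^{-1}$ reproduces the three terms in the claimed bound.

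The engine of the proof is the following count bound: for any $\epsilon>0$,
\[
\Bigl|\bigl\{(t,h)\in[T]\times[H] : \omega_t(s_{h,t},a_{h,t}) > \epsilon\bigr\}\Bigr| \,\leq\, \Bigl(\tfrac{4\beta_T}{\epsilon^2}+1\Bigr)\dim_E(\mathcal{F},\epsilon).
\]
To establish this, I would observe that whenever $\omega_t > \epsilon$ there exist $\underline{f},\overline{f}\in\mathcal{F}_t$ witnessing the width. By the construction of $\mathcal{F}_t$ and the monotonicity of $\beta_t$, both lie within $\sqrt{\beta_T}$ of $\hat{f}_t^{LS}$ in the cumulative empirical $2$-norm, giving $\|\overline{f}-\underline{f}\|_{2,E_t}^2 \leq 4\beta_T$. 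Then I would partition the past history into maximal sub-sequences on each of which the current pair is $(\mathcal{F},\epsilon)$-dependent on the preceding elements; by Definition~\ref{elu_def} each such block contains at most $\dim_E(\mathcal{F},\epsilon)$ pairs, and each block in which $\omega$ is large at its end must contribute at least $\epsilon^2$ to the empirical norm above, so the number of blocks is at most $4\beta_T/\epsilon^2$.

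With the count bound in hand, I would sort all $TH$ widths in decreasing order as $\omega_{(1)}\geq\cdots\geq\omega_{(TH)}$ and split the sum at threshold $T^{-1}$. The small-width tail $\sum_{k:\omega_{(k)}\leq T^{-1}}\omega_{(k)}$ contributes at most $1$ after absorbing the residual factor into the following term. For the large-width head, inverting the count bound yields, at rank $k$, the estimate
\[
\omega_{(k)} \,\leq\, \min\!\Bigl\{\,C,\; 2\sqrt{\beta_T/\bigl(k/\dim_E(\mathcal{F},T^{-1}) - 1\bigr)}\,\Bigr\}.
\]
Splitting the sum according to whether the $C$-clip or the $1/\sqrt{k}$-tail is binding gives a plateau contribution of $HC\,\dim_E(\mathcal{F},T^{-1})$ (with the $H$ arising from summing the per-step estimate across $h\in[H]$, since the confidence set is refreshed only between episodes) and a tail contribution of $4\sqrt{\dim_E(\mathcal{F},T^{-1})\beta_T T}$ obtained by integrating the $1/\sqrt{k}$ envelope. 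Collecting the three pieces delivers the stated bound.

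The main obstacle is the combinatorial counting argument in the second paragraph: translating the uniform bound $\|\overline{f}-\underline{f}\|_{2,E_t}^2\leq 4\beta_T$ into the block decomposition governed by $\dim_E(\mathcal{F},\epsilon)$ requires careful bookkeeping of which past pairs are $(\mathcal{F},\epsilon)$-dependent on which, and of when an independent predecessor can be charged $\epsilon^2$ of empirical norm. Getting the precise constants $1$, $HC$, and $4$ is delicate but mechanical once this combinatorial skeleton is in place; the argument is tight up to universal constants and mirrors the per-episode bound in Osband and Van Roy after accounting for the $H$ inner steps per episode.
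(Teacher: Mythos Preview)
Your proposal is correct and follows exactly the standard eluder-dimension argument of Russo--Van~Roy / Osband--Van~Roy that the paper invokes; indeed, the paper does not give its own proof but simply cites \cite{osband2014model}, Proposition~6, whose proof is precisely the count-then-sort-then-integrate scheme you outline. Your sketch of the counting lemma and the subsequent rank-inversion is faithful to that reference, so there is nothing to add.
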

\begin{proof}
See \citep{osband2014model} Proposition 6 for a detailed proof.
\end{proof}

\section{Limitations of Eluder Dimension}
\label{eluder_defn}
In Theorem \ref{near_opt_bound}, the \textit{eluder dimension} $d_E$ appears in the Bayes expected regret bound to capture how effectively the observed samples can extrapolate to unobserved transitions.

For some specific function classes, Osband et al. \citep{osband2014model} provide the corresponding eluder dimension bound, \eg, for (generalized) linear function classes, quadratic function class, and for finite MDPs, c.f. Proposition 1-4 in \citep{osband2014model}.

However, for non-linear models, Dong et al. \citep{dong2021provable} show that the $\varepsilon$-eluder dimension of one-layer neural networks is \textit{at least} exponential in model dimension. Similar results are also established in \citep{li2021eluder}. We refer to Section 5 in \citep{dong2021provable} or Section 4 in \citep{li2021eluder} for details and more explanations.

\section{Additional Related Work}
\label{additional_relate}
Some MBRL work also concerns iterative policy improvement. SLBO \citep{luo2018algorithmic} provides a trust-region policy optimization framework based on OFU. However, the conditions for monotonic improvement cannot be satisfied by most parameterized models \citep{luo2018algorithmic,dong2021provable}, which leads to a greedy algorithm in practice. Prior work that shares similarities with ours contains DPI \citep{sun2018dual} and GPS \citep{levine2014learning,montgomery2016guided} as dual policy optimization procedures are adopted. Both DPI and GPS leverage a \textit{locally} accurate model and use different objectives for imitating the intermediate policy within a trust-region. However, the policy imitation procedure updates the policy parameter in a \textit{supervised} manner, which poses additional challenges for effective exploration, resulting in unknown convergence results even with a simple model class. In contrast, CDPO by taking the epistemic uncertainty into consideration can be shown to achieve global optimality. In fact, greedy model exploitation is provably optimal only in very limited cases, \eg, linear-quadratic regulator (LQR) settings \citep{mania2019certainty}.

OFU-RL has shown to achieve an optimal sublinear regret when applied to online LQR \citep{abbasi2011regret}, tabular MDPs \citep{jaksch2010near} and linear MDPs \citep{jin2020provably}. Among them, HUCRL \citep{curi2020efficient} is a deep algorithm proposed to deal with the joint optimization intractability in \eqref{ofu}. Besides, Russo and Van Roy \citep{russo2013eluder,russo2014learning} unify the bounds in various settings (e.g., finite or linear MDPs) by introducing an additional model complexity measure --- eluder dimension. Other complexity measure include witness rank \citep{sun2019model}, linear dimensionality \citep{yang2020reinforcement} and sequential Rademacher complexity \citep{dong2021provable}.

\section{Algorithm Instantiations}
\label{instan}
The model-based policy optimization solver $\texttt{MBPO}(\pi, \{f\}, \mathcal{J})$ in Algorithm \ref{alg:cdpo} can be instantiated as one of the following algorithms, Dyna-style policy optimization in Algorithm \ref{alg:CDPO-dyna}, model-based back-propagation in Algorithm \ref{alg:CDPO-bptt}, and model predictive control policy optimization in Algorithm \ref{alg:CDPO-mpc_policy}. By default,  \texttt{MBPO} is instantiated as the Dyna solver (i.e. Algorithm \ref{alg:CDPO-dyna}) in our MuJoCo experiments and as the policy iteration solver in our $N$-Chain MDPs experiments. We note that the instantiations are not restricted to the listed algorithms, and many other \texttt{MBPO} algorithms that augment policy learning with a predictive model can also be leveraged, \eg, model-based value expansion \citep{feinberg2018model,buckman2018sample}. In the Referential Update step where no input policy exists in $\texttt{MBPO}(\cdot, \hat{f}_{t}^{LS}, \eqref{greedy_update}$, we initialize policy $\pi = \pi_{t-1}$, i.e. the reactive policy from the last iteration. 

\vskip4pt
\noindent{\bf Dyna.} Dyna involves model-generated data and optimizing the policy with any model-free RL method, \eg, REINFORCE or actor-critic \citep{konda2000actor}. The state-action value can be estimated by learning a critic function or unrolling the model. In Constrained Conservative Update, the input objective function $\mathcal{J}$ is \eqref{conserv_update}, which is with constraints. Thus, the Lagrangian multiplier is introduced, similar to the model-free trust-region algorithms \citep{schulman2015trust,schulman2017proximal,abdolmaleki2018maximum}.
\begin{algorithm}
\caption{Dyna Model-Based Policy Optimization}
\textbf{Input:} Policy $\pi$, model set $\{f\}$, objective function $\mathcal{J}$.
\begin{algorithmic}[1]
\label{alg:CDPO-dyna}
\STATE Initialize a simulation data buffer $\hat{\mathcal{D}}$
\STATE Sample a batch of initial states from the initial distribution $\zeta$
\STATE \textcolor{gray}{\(\triangleright\) Data simulation}
\FOR{initial state sample $s_0$}
\FOR{model $f$ in model set $\{f\}$}
\FOR{timestep $h=1,...,H$}
\STATE Sample action $\hat{a}_h\sim\pi(\cdot|\hat{s}_h)$
\STATE Sample simulation state $\hat{s}_{h+1}\sim f(\hat{s}_h, \hat{a}_h)$
\STATE Append simulation data to buffer $\hat{\mathcal{D}} = \hat{\mathcal{D}} \cup (\hat{s}_{h}, \hat{a}_{h}, r_h, \hat{s}_{h+1})$
\ENDFOR
\ENDFOR
\ENDFOR
\STATE \textcolor{gray}{\(\triangleright\) Policy optimization with any model-free algorithm \texttt{ModelFree}}
\STATE Objective optimization of policy on the simulated data $\pi\leftarrow\texttt{ModelFree}(\hat{\mathcal{D}}, \pi)$
\end{algorithmic}
\end{algorithm}

\vskip4pt
\noindent{\bf Back-Propagation Through Time.} BPTT \citep{kurutach2018model,xu2022accelerated} is a first-order model-based policy optimization framework based on pathwise gradient (or reparameterization gradient) \citep{suh2022differentiable}. There are also several variants including Stochastic Value Gradients (SVG) \citep{heess2015learning}, Model-Augmented Actor-Critic (MAAC) \citep{clavera2020model}, and Probabilistic Inference for Learning COntrol (PILCO) \citep{deisenroth2011pilco}. Specifically, the policy parameters are updated by directly computing the derivatives of the performance with respect to the parameters. When the optimization of objective function is constrained, the accumulating step (Algorithm \ref{alg:CDPO-bptt} Line 9) can be $L\leftarrow L + \gamma^h r(\hat{s}_h, \hat{a}_h) - \lambda D_{\text{KL}}$, where $\lambda$ is the Lagrangian multiplier and $D_{\text{KL}}$ is the corresponding KL constraint.

\begin{algorithm}
\caption{Model-Based Back-Propagation Policy Optimization}
\textbf{Input:} Policy $\pi$, model set $\{f\}$, objective function $\mathcal{J}$.
\begin{algorithmic}[1]
\label{alg:CDPO-bptt}
\STATE Initialize a simulation data buffer $\hat{\mathcal{D}}$
\STATE Start from initial state $s_0$
\STATE Reset $L\leftarrow 0$
\STATE \textcolor{gray}{\(\triangleright\) Data simulation}
\FOR{model $f$ in model set $\{f\}$}
\FOR{timestep $h=1,...,H$}
\STATE Sample action $\hat{a}_h\sim\pi(\cdot|\hat{s}_h)$
\STATE Sample simulation state $\hat{s}_{h+1}\sim f(\hat{s}_h, \hat{a}_h)$
\STATE Accumulate reward and constraint to $L$
\ENDFOR
\ENDFOR
\STATE \textcolor{gray}{\(\triangleright\) Policy optimization}
\STATE Compute policy gradient with back-propagation through time
\STATE Objective optimization of policy $\pi\leftarrow\texttt{PolicyGradient}$
\end{algorithmic}
\end{algorithm}

\vskip4pt
\noindent{\bf Model Predictive Control Policy Optimization.} MPC is a \textit{planning} framework that directly generates optimal action sequences under the model. Different from the above model-augmented policy optimization methods, MPC policy optimization directly generates optimal action sequences under the model and then distills the policy. Specifically, the pseudocode in Algorithm \ref{alg:CDPO-mpc_policy} begins with initial actions generated by the policy. Then with a shooting method, \eg, the cross-entropy method (CEM), the actions are refined and the policy that generates these optimal actions are distilled. Below, the algorithm to obtain the refined actions \texttt{EliteActions} can be CEM with action noise added to the action or policy parameter, \ie, POPLIN-A and POPLIN-P in \citep{wang2019exploring}. The policy can be updated by \texttt{UpdatePolicy} using behavior cloning.

\vskip4pt
\noindent{\bf Policy Iteration for Tabular MDPs.} In tabular settings where the state space $\mathcal{S}$ and action space $\mathcal{A}$ are discrete and countable, we can perform policy iteration under each model in the model set $\{f\}$. Here, the model is the tabular representation instead of function approximators. Based on the state-action values under various models, the optimal action at each state is the one that maximizes the weighted average of the values within the constraint of total variation distance.

\begin{algorithm}
\caption{Model Predictive Control Policy Optimization}
\textbf{Input:} Policy $\pi$, model set $\{f\}$, objective function $\mathcal{J}$, algorithm to update actions \texttt{EliteActions}, algorithm to update policy \texttt{UpdatePolicy}.
\begin{algorithmic}[1]
\label{alg:CDPO-mpc_policy}
\STATE Start from initial state $s_0$
\STATE Reset $J\leftarrow 0$
\STATE \textcolor{gray}{\(\triangleright\) Model-based planning}
\FOR{model $f$ in model set $\{f\}$}
\FOR{timestep $h=1,...,H$}
\STATE Sample action $\hat{a}_h\sim\pi(\cdot|\hat{s}_h)$
\STATE Sample simulation state $\hat{s}_{h+1}\sim f(\hat{s}_h, \hat{a}_h)$
\STATE Accumulate reward and constraint to $J$
\ENDFOR
\ENDFOR
\STATE $\textbf{a}\leftarrow \texttt{EliteActions}(J,\hat{a}_{1:N})$
\STATE \textcolor{gray}{\(\triangleright\) Policy distillation}
\STATE $\pi\leftarrow\texttt{UpdatePolicy}(\textbf{a})$
\end{algorithmic}
\end{algorithm}

\section{Experimental Settings and Results in $N$-Chain MDPs}
\subsection{Settings of MuJoCo Experiments}
\label{exp_settings}
In the MuJoCo experiments, we use a $5$-layer neural network to approximate the dynamical model. We use deterministic ensembles \citep{chua2018deep} to capture the model epistemic uncertainty. Specifically, different ensembles are learned with independent transition data to construct the $1$-step ahead confidence interval at every timestep. Each ensemble is separately trained using Adam \citep{kingma2014adam}. And the number of ensemble heads can be set to 3, 4, or, 5, each of which is shown to be able to provide considerable performance in our experiments. All the experiments are repeated with $6$ random seeds.

Since neural networks are not calibrated in general, \ie, the model uncertainty set is not guaranteed to contain the real dynamics, we follow HUCRL \citep{curi2020efficient} to re-calibrate \citep{kuleshov2018accurate} the model. Our MuJoCo code is also built upon the HUCRL GitHub repository.

When using the Dyna model-based policy optimization, the number of gradient steps for each optimization procedure in an iteration is set to $20$. And we empirically find that the KL divergence (or total variance) constraint makes the algorithm more efficient when computing the $\argmax$ in the optimization step, since optimizing from $\pi_{t-1}$ at iteration $t$ needs fewer policy gradient steps if the policy update is constrained within a certain trust region.

The task-specific and task-common settings and parameters are listed below in Table \ref{tab:mbrl_params}.

\begin{table}[h]
  \caption{Experimental parameters.}  
  \label{tab:mbrl_params}
  \centering
  \begin{tabular}{c|ccc}
    & Inverted Pendulum & Pusher & Half-Cheetah \\\hline\hline
    episode length $H$ & 200 & 150 & 1000 \\
    dimension of state & 4 & 23 & 18 \\
    dimension of action & 1 & 7 & 6 \\
    action penalty & 0.001 & 0.1 & 0.1 \\\hline
    hidden nodes  & \multicolumn{3}{c}{(200, 200, 200, 200, 200)} \\
    activation function & \multicolumn{3}{c}{Swish} \\
    optimizer & \multicolumn{3}{c}{Adam} \\
    learning rate & \multicolumn{3}{c}{$10^{-3}$} \\
  \end{tabular}
\end{table}

\subsection{Experiments in $N$-Chain MDPs}
\label{chain_exp}
Besides the experiments in MuJoCo, we also conduct tabular experiments in the $N$-Chain environment that is proposed in \citep{markou2019bayesian}. Specifically, there are in total $2$ actions and $N$ states in an MDP. The initial state is $s_1$ and the agent can choose to go \textit{left} or \textit{right} at each of the $N$ states. The \textit{left} action always succeeds and moves the agent to the left state, giving reward $r\sim\mathcal{N}(0, \delta^2)$. Taking the \textit{right} action at state $s_1, \ldots, s_{N-1}$ gives reward $r\sim\mathcal{N}(-\delta, \delta^2)$ and succeeds with probability $1 - 1/N$,  moving the agent to the right state and otherwise moving the agent to the left state. Taking the \textit{right} action at $s_N$ gives reward $r\sim\mathcal{N}(1, \delta^2)$ and moves the agent back to $s_1$ with probability $1 - 1/N$.

We set $\delta = 0.1\exp{(-N/4)}$, such that going \textit{right} is the optimal action at least up to $N=40$. As the number of states $N$ is increasing, the agent needs \textit{deep} exploration (e.g. guided by uncertainty) instead of \textit{dithering} exploration (e.g. epsilon-greedy exploration), such that the agent can keep exploring despite receiving negative rewards \citep{osband2019deep}.

\begin{figure*}[htbp]
\centering
\includegraphics[scale=0.2]{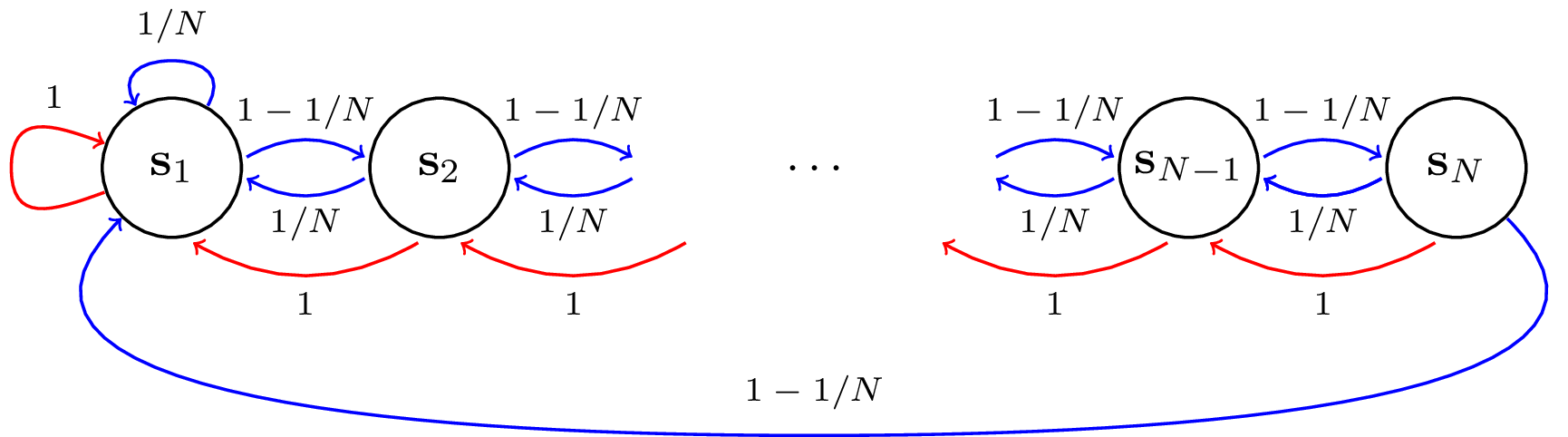}
\caption{Illustration of the $N$-Chain MDP. Blue arrows correspond to action \textit{right} (optimal) and red arrows correspond to action \textit{left} (suboptimal). The figure is copied from \citep{markou2019bayesian}.}
\label{fig:$N$-Chain_mdp}
\end{figure*}

For this reason, we evaluate the proposed algorithm CDPO and compare it with other Bayesian RL algorithms, including Bayesian Q-Learning (BQL) \citep{dearden1998bayesian}, Posterior Sampling for RL (PSRL) \citep{osband2013more}, the Uncertainty Bellman Equation (UBE) \citep{o2018uncertainty} and Moment Matching (MM) approach \citep{markou2019bayesian}. For CDPO, the dual optimization steps are solved by policy iteration, and the conservative update is performed within the total variation distance $\eta=0.2$ (c.f. Policy Iteration for Tabular MDPs in Appendix \ref{instan}). We choose conjugate priors to represent the posterior distribution: we use a Categorical-Dirichlet model for discrete transition distribution at each $(s, a)$, and a Normal-Gamma (NG) model for continuous reward distribution at each $(s, a, s')$.

\begin{figure*}[htbp]
\centering
\includegraphics[scale=0.34]{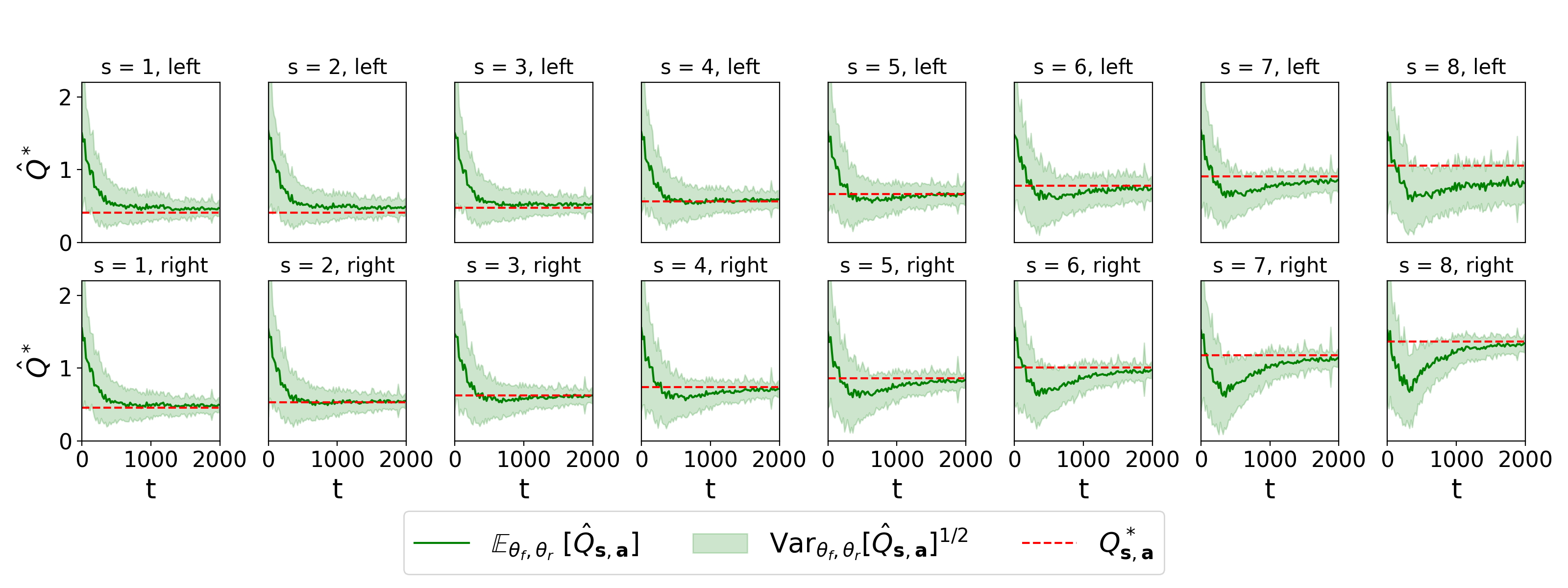}
\caption{Posterior evolution of CDPO algorithm in the $8$-Chain MDP.}
\label{fig:cdpo_chain}
\end{figure*}

\vskip4pt
\noindent{\bf Evolution of Posterior.} Figure \ref{fig:cdpo_chain} demonstrates the evolution of the posterior of the CDPO algorithm in an $8$-Chain MDP. As training progresses the posteriors concentrate on the true optimal state-action values and the behavior policy converges on the optimal one. The fast reduction of uncertainty is central to achieving principled and efficient exploration.

Compared to the posterior evolution of the PSRL algorithm corresponding to the optimal actions, i.e. the bottom row of curves in Figure \ref{fig:psrl_chain}, the expected value estimates of CDPO are closer to the ground-truth, and the variance is also smaller. Notably, the variance of CDPO might be higher for suboptimal actions, e.g., $s=8, a=\text{left}$ (the last image of the first row in Figure \ref{fig:cdpo_chain}). It is due to the conservative nature of CDPO that it only cares about the \textit{expected} value, instead of the value of a sampled (imperfect) model as in PSRL. In other words, as long as the uncertainty is large, the PSRL agents can take suboptimal actions to explore the uninformative regions, which causes the inefficient over-exploration issue.

\begin{figure*}[htbp]
\centering
\includegraphics[scale=0.34]{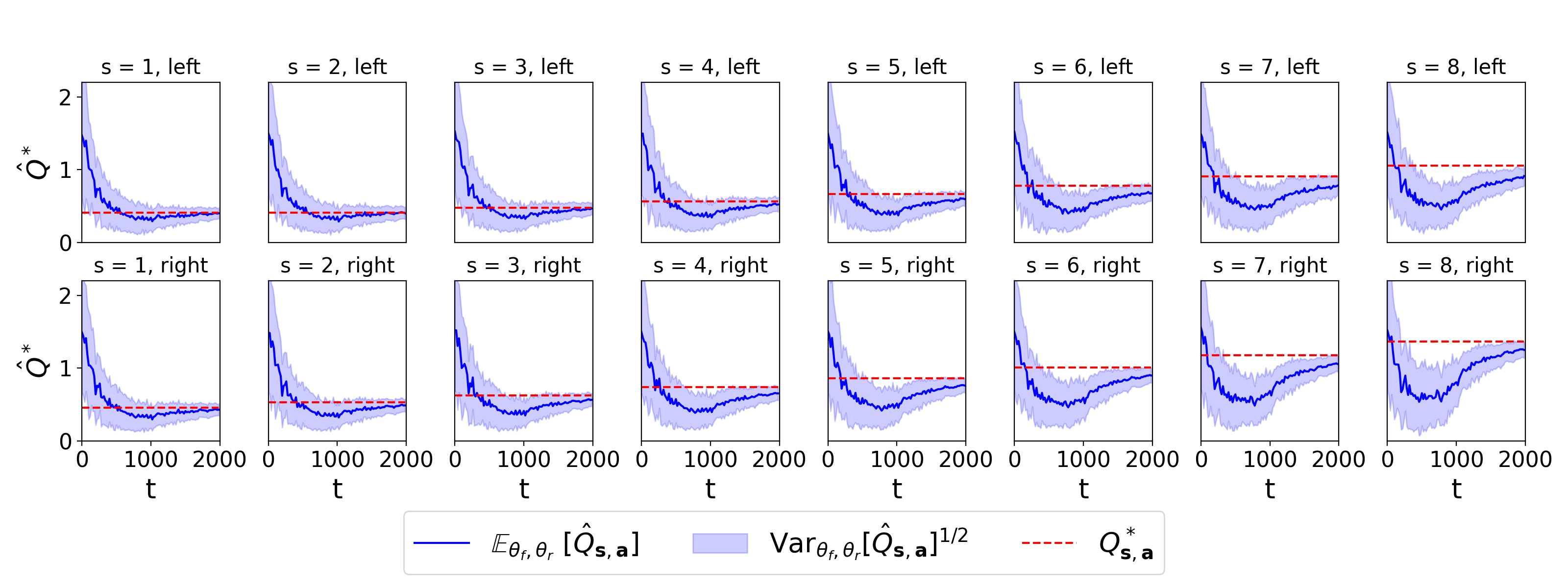}
\caption{Posterior evolution of PSRL algorithm in the $8$-Chain MDP.}
\label{fig:psrl_chain}
\end{figure*}

\vskip4pt
\noindent{\bf Cumulative Regret.} We compare CDPO and previous algorithms on the $N$-Chain MDPs with various state sizes $N$ by measuring the cumulative regret of an oracle agent following the optimal policy. The results are shown in Figure \ref{fig:curve_regret}. To make the performances comparable on the same scale, we also provide the normalized regret in Figure \ref{fig:com_regret}.

We observe that when the size of state space $N$ is relatively smaller, e.g. $N\leq 5$, CDPO, PSRL, BQL, and MM algorithms achieve sublinear regret. The performances of these algorithms are also comparable, showing the necessity of deep exploration. On the contrary, Q-Learning which only relies on dithering exploration mechanisms fail to find the optimal strategy. However, as $N$ is increasing, where the exploration must be effective for the agent to continually explore despite receiving negative rewards, the CDPO agents offer significantly lower cumulative regret and faster convergence. 

\begin{figure*}[htbp]
\centering
\subfigure{
    \begin{minipage}[t]{0.325\linewidth}
        \centering
        \includegraphics[width=1.9in]{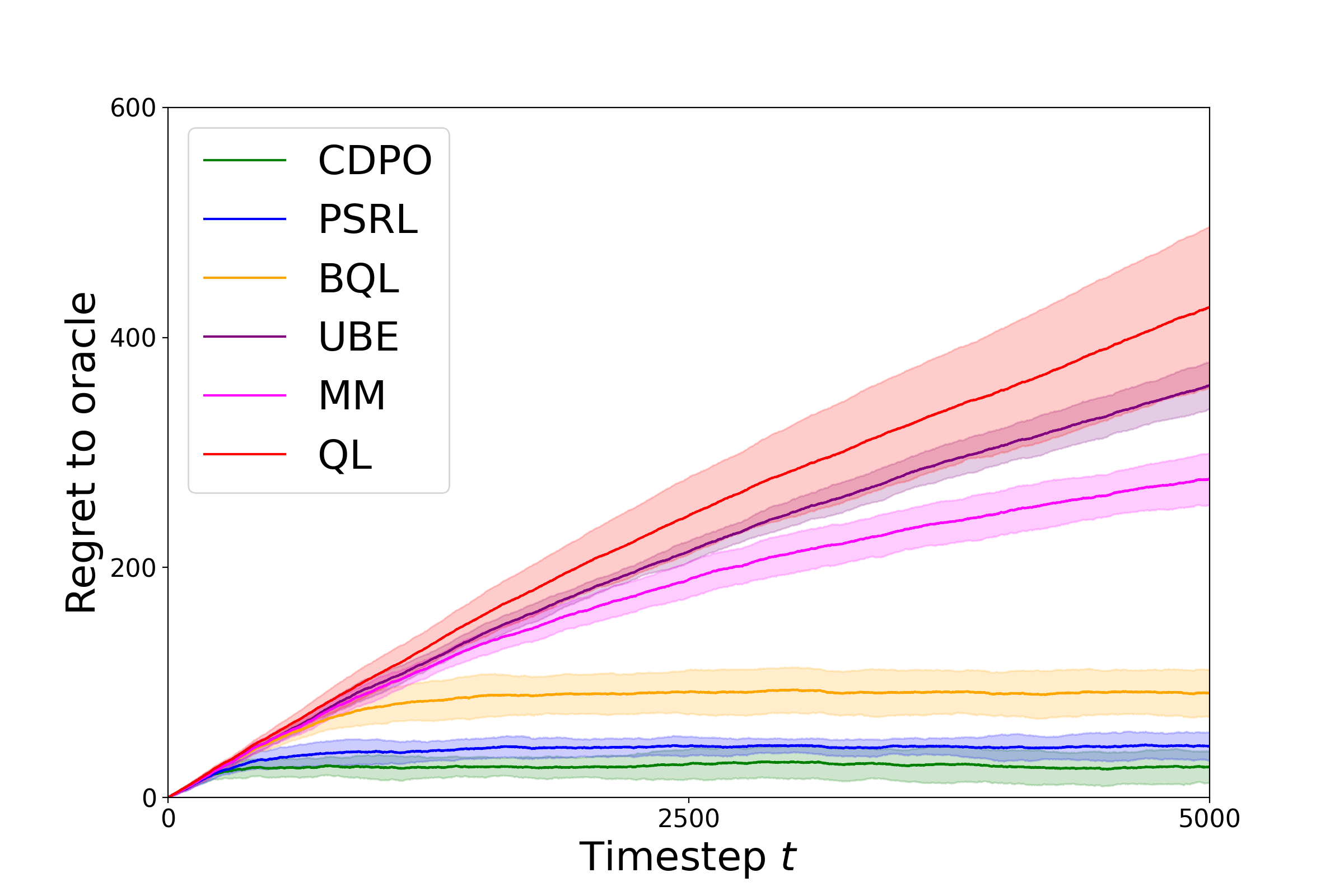}\\
        (a) $5$-Chain.
    \end{minipage}%
}%
\subfigure{
    \begin{minipage}[t]{0.325\linewidth}
        \centering
        \includegraphics[width=1.9in]{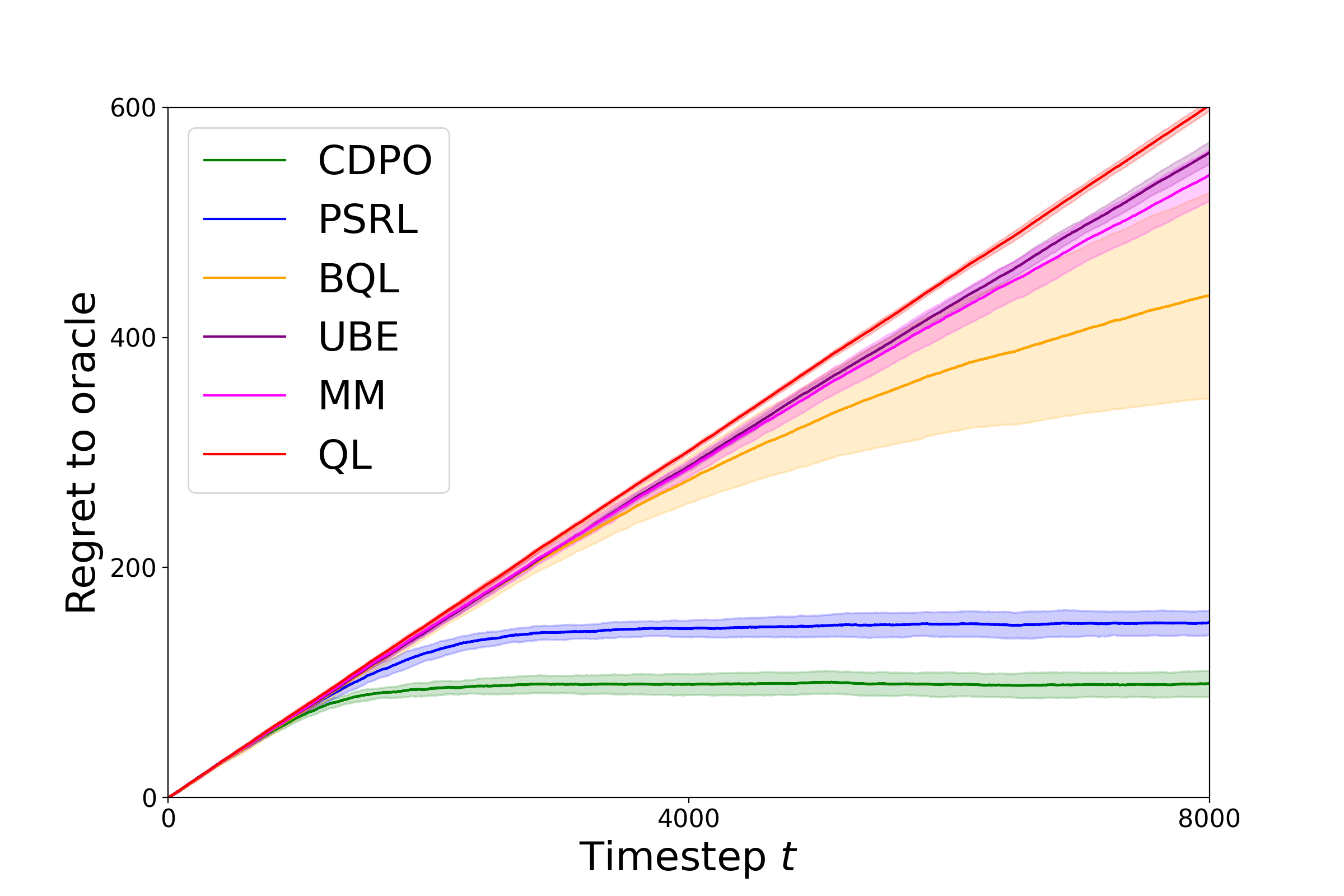}\\
        (b) $10$-Chain.
    \end{minipage}%
}%
 \subfigure{
    \begin{minipage}[t]{0.325\linewidth}
        \centering
        \includegraphics[width=1.9in]{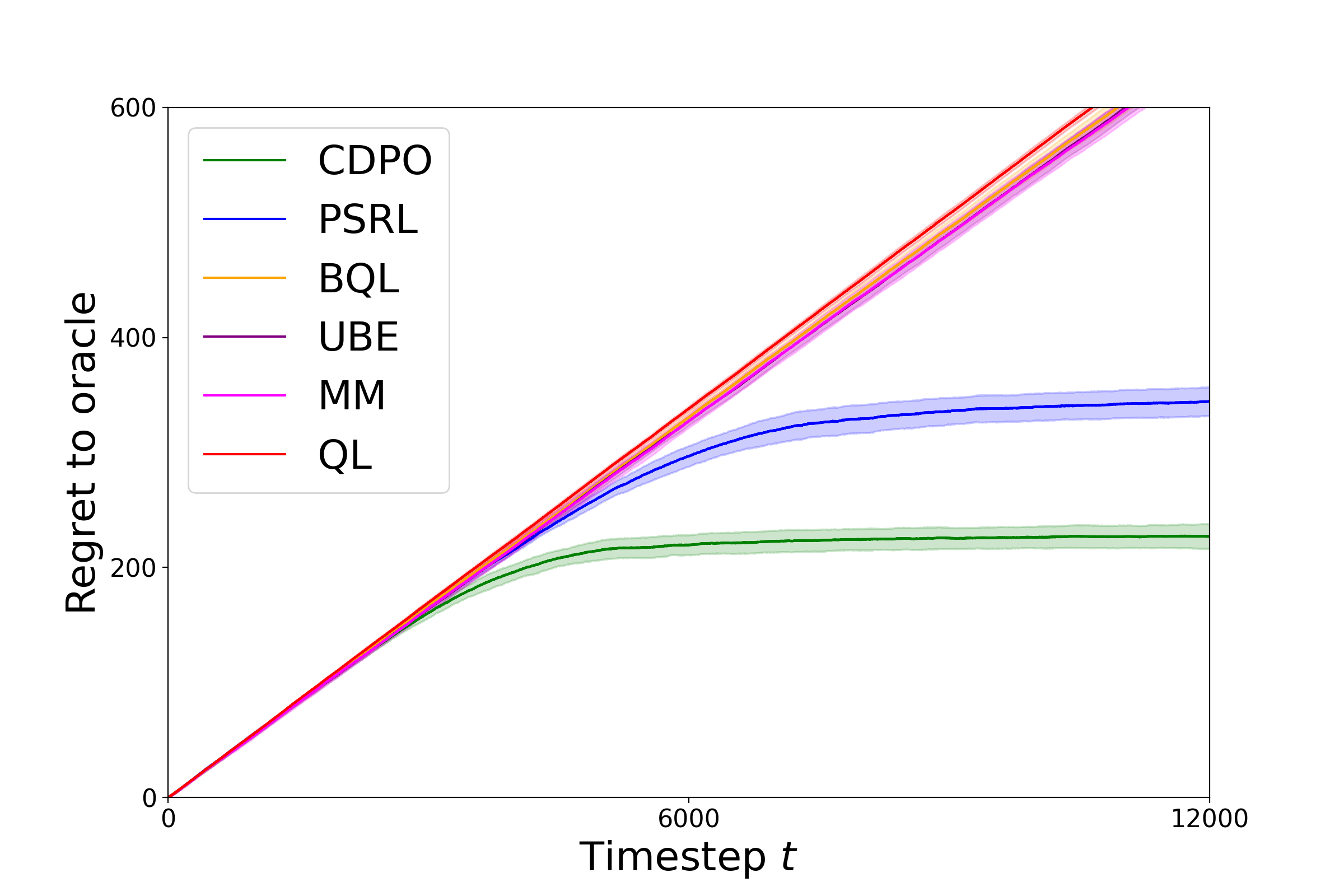}\\
        (c) $15$-Chain.
    \vspace{-3cm}
    \end{minipage}%
}
\caption{Comparison of cumulative regret.}
\label{fig:curve_regret}
\end{figure*}

\begin{figure*}[htbp]
\centering
\includegraphics[scale=0.35]{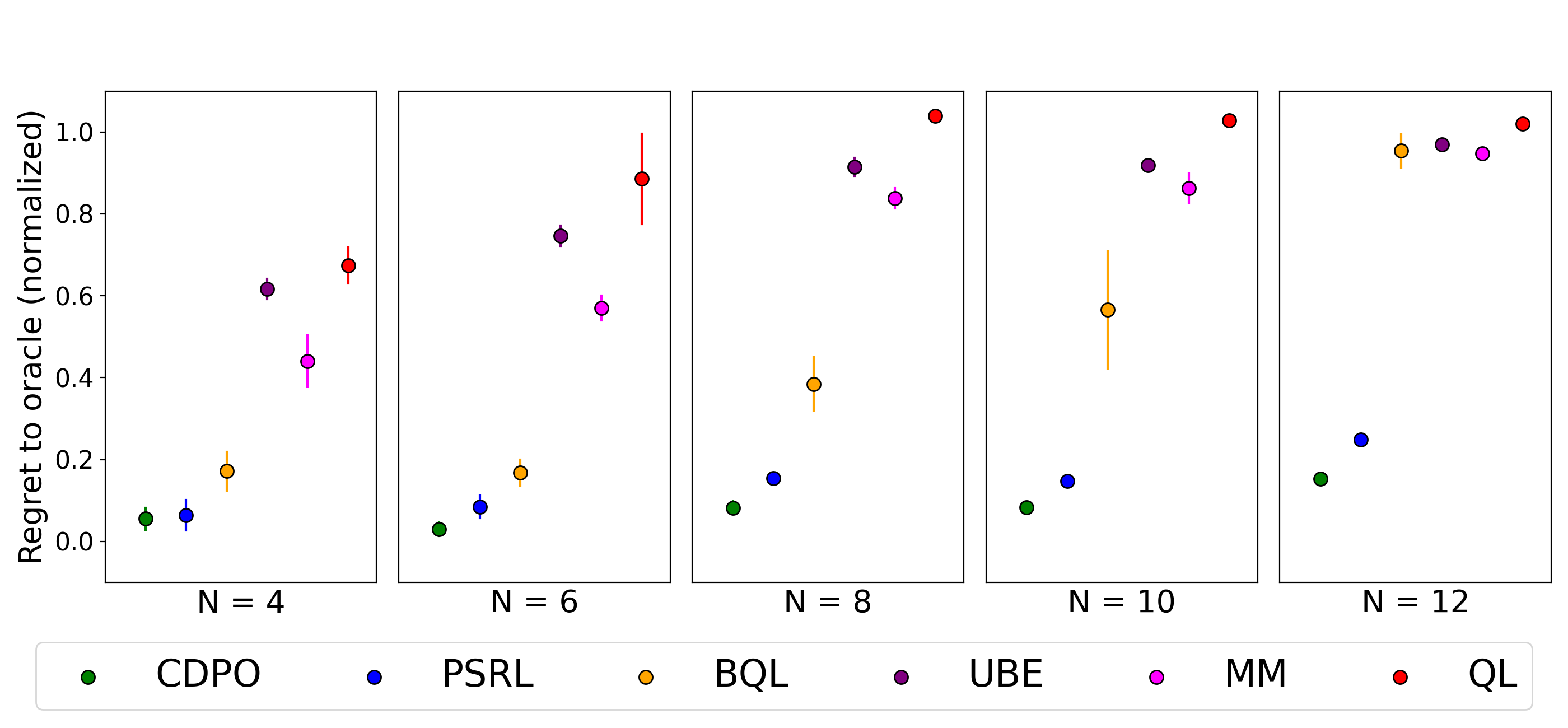}
\caption{Performance comparison in terms of regret to the oracle.}
\label{fig:com_regret}
\end{figure*}

\section{Algorithmic Comparisons between MBRL Algorithms}
\label{other_algo}
We provide algorithmic comparisons of four MBRL frameworks, including greedy model exploitation algorithms, OFU-RL, PSRL, and the proposed CDPO algorithm.

The differences mainly lie in the model selection and policy update procedures. The high-level pseudocode is given in Algorithm \ref{alg:greedy_compare}, \ref{alg:ofu_compare}, \ref{alg:psrl_compare} and \ref{alg:CDPO_compare}. Among them, the greedy model exploitation algorithm is a naive instantiation, where other instantiations can include the ones that augment Algorithm \ref{alg:greedy_compare} with \eg, a dual framework that involves a locally accurate model and a supervised imitating procedure \citep{sun2018dual,levine2014learning}. In Algorithm \ref{alg:greedy_compare}, $\Tilde{f}_t$ can either be a probabilistic model or a deterministic model (with additive noise), which can be estimated via Maximum Likelihood Estimation (MLE) or minimizing the Mean Squared Error (MSE), respectively.

\begin{figure}[h]
\begin{minipage}{0.47\textwidth}
\begin{algorithm}[H]
\centering
\caption{Naive Greedy Model Exploitation}
\begin{algorithmic}[1]
\label{alg:greedy_compare}
\FOR{iteration $t=1,...,T$}
\STATE Estimate model $\Tilde{f}_t$ via MLE or MSE
\STATE Compute $\pi_t = \argmax_{\pi} V^{\Tilde{f}_t}_{\pi}$
\STATE Execute $\pi_{t}$ in the real MDP
\STATE $ \mathcal{H}_{t+1} = \mathcal{H}_t \cup \left\{s_{h,t},a_{h,t},s_{h+1,t}\right\}_{h}$
\ENDFOR
\RETURN policy $\pi_T$
\end{algorithmic}
\end{algorithm}
\end{minipage}
\hfill
\begin{minipage}{0.47\textwidth}
\begin{algorithm}[H]
\centering
\caption{OFU-RL Algorithm}
\begin{algorithmic}[1]
\label{alg:ofu_compare}
\FOR{iteration $t=1,...,T$}
\STATE Construct confidence set $\mathcal{F}_t$
\STATE Compute $\pi_t = \argmax_{\pi,f\sim\mathcal{F}_t} V^{f_t}_{\pi}$
\STATE Execute $\pi_{t}$ in the real MDP
\STATE $ \mathcal{H}_{t+1} = \mathcal{H}_t \cup \left\{s_{h,t},a_{h,t},s_{h+1,t}\right\}_{h}$
\ENDFOR
\RETURN policy $\pi_T$
\end{algorithmic}
\end{algorithm}
\end{minipage}

\begin{minipage}{0.47\textwidth}
\begin{algorithm}[H]
\centering
\caption{PSRL Algorithm}
\begin{algorithmic}[1]
\label{alg:psrl_compare}
\FOR{iteration $t=1,...,T$}
\STATE Sample $f_t \sim \phi(\cdot \mid \mathcal{H}_t)$
\STATE Compute $\pi_t = \argmax_{\pi} V^{f_t}_{\pi}$
\STATE Execute $\pi_{t}$ in the real MDP
\STATE $ \mathcal{H}_{t+1} = \mathcal{H}_t \cup \left\{s_{h,t},a_{h,t},s_{h+1,t}\right\}_{h}$
\ENDFOR
\RETURN policy $\pi_T$
\end{algorithmic}
\end{algorithm}
\end{minipage}
\hfill
\begin{minipage}{0.47\textwidth}
\begin{algorithm}[H]
\centering
\caption{CDPO Algorithm}
\begin{algorithmic}[1]
\label{alg:CDPO_compare}
\FOR{iteration $t=1,...,T$}
\STATE Referential Update $q_t$ following  \eqref{greedy_update}
\STATE Conservative Update $\pi_t$ following  \eqref{conserv_update}
\STATE Execute $\pi_{t}$ in the real MDP
\STATE $ \mathcal{H}_{t+1} = \mathcal{H}_t \cup \left\{s_{h,t},a_{h,t},s_{h+1,t}\right\}_{h}$
\ENDFOR
\RETURN policy $\pi_T$
\end{algorithmic}
\end{algorithm}
\end{minipage}
\end{figure}

\section{Societal Impact}
\label{societal}
For real-world applications, interactions with the system imply energy or economic costs. With practical efficiency, CDPO reduces the training investment and is aligned with the principle of responsible AI. However, as an RL algorithm, CDPO is unavoidable to introduce safety concerns, \eg, self-driving cars make mistakes during RL training. Although CDPO does not explicitly address them, it may be used in conjunction with safety controllers to minimize negative impacts, while drawing on its powerful MBRL roots to enable efficient learning.
\end{document}